\newcommand{\highlight}[1]{\colorbox{lightgray}{$\displaystyle #1$}}
\title{Robust Task-Space Quadratic Programming for Kinematic-Controlled Robots}
\author{
	Mohamed Djeha, 
	Pierre Gergondet, and 
	Abderrahmane Kheddar,~\IEEEmembership{Fellow,~IEEE}
	\thanks{Manuscript received May 17, 2022; revised February 14, 2023; accepted May 10, 2023. Date of publication Xxxxxxxx X, 20XX; date of current version Xxxxxxxxx X, 20XX. This paper was recommended for publication by Associate Editor Alexander Dietrich and Editor Paolo Robuffo Giordano upon evaluation of the reviewers comments. (\emph{Corresponding author: Mohamed Djeha.})}
	\thanks{This work is supported in part by the Research Project I.AM. through the European Union H2020 program (GA 871899).}
	\thanks{M. Djeha is with the \'Ecole Militaire Polytechnique (EMP), Bordj El-Bahri, Algiers, Algeria. {\tt\small mohamed.djeha@lirmm.fr}}
	\thanks{A. Kheddar and P. Gergondet are with the CNRS-AIST Joint Robotics Laboratory, IRL, Tsukuba, Japan. {\tt\small pierre.gergondet@gmail.com}} 
	\thanks{A. Kheddar is also with the CNRS-University of Montpellier LIRMM, Montpellier, France. {\tt\small kheddar@lirmm.fr}}
	\thanks{This paper has supplementary video downloadable material available at http://ieeexplore.ieee.org.}
	\thanks{Color versions of one or more of the figures in this paper are available online at http://ieeexplore.ieee.org.}
	\thanks{Digital Object Identifier 00.0000/XXX.202X.0000000}
}
\def\actConf{\bm{\hat{q}}}
\def\actConfDot{\bm{\dot{\hat{q}}}}
\def\actConfDDot{{\bm{\ddot{\hat{q}}}}}
\def\desConf{\bm{\hat{q}}_\mathrm{d}}
\def\desConfDot{\bm{\dot{\hat{q}}}_\mathrm{d}}
\def\desConfDDot{\bm{\ddot{\hat{q}}}_\mathrm{d}}
\def\actJointDyn{\bm{\hat{x}}}
\def\actJointDynDot{\bm{\dot{\hat{x}}}}
\def\desJointDyn{\bm{\hat{x}}_\mathrm{d}}
\def\jointTrackErr{\boldsymbol{\phi}}
\def\jointDynTrackErr{\boldsymbol{\hat{\phi}}}
\def\jointDynTrackErrDot{\boldsymbol{\dot{\hat{\phi}}}}
\def\jointDisturbIn{\boldsymbol{\tau}_{\rm{l}}} 
\def\jointCrtlIn{\desConfDDot}
\def\taskCrtlIn{\boldsymbol{\mu}}
\def\actJac{\mathbf{J}}
\def\desJac{\mathbf{J}_\mathrm{d}}
\def\desJacDot{\mathbf{\dot{J}}_\mathrm{d}}
\def\fkin{\bm{s}}
\def\fkinRef{\bm{s}_\mathrm{ref}}
\def\fkinRefDot{\bm{\dot{s}}_\mathrm{ref}}
\def\fkinRefDDot{\bm{\ddot{s}}_\mathrm{ref}}
\def\actOut{\bm{e}}
\def\actOutDot{\bm{\dot{e}}}
\def\desOut{\bm{e_\mathrm{d}}}
\def\desOutDot{\bm{\dot{e}_\mathrm{d}}}
\def\desOutDDot{\bm{\ddot{e}_\mathrm{d}}}
\def\desTaskOut{{\boldsymbol{\eta}_\mathrm{d}}}
\def\desTaskOuti{{\boldsymbol{\eta}_\mathrm{d}^{j}}}
\def\desTaskOutDot{{\boldsymbol{\dot{\eta}}_\mathrm{d}}}
\def\desBfuncOut{{\boldsymbol{\eta}_{\mathrm{d}}^{\bfunc}}}
\def\desBfuncOutDot{{\boldsymbol{\dot{\eta}_{\mathrm{d}}^{\bfunc}}}}
\def\actBfuncOut{\boldsymbol{\eta}^{\bfunc}}
\def\actTaskOut{\boldsymbol{\eta}}
\def\taskTrackErr{\actTaskOut_{\jointTrackErr}}
\def\AdesTaskOut{\mathbf{A}_{\desTaskOut}}
\def\BdesTaskOut{\mathbf{B}_{\desTaskOut}}
\def\FdesTaskOut{\mathbf{F}_{\desTaskOut}}
\def\FdesTaskOutRobust{\mathbf{\check{F}}_{\desTaskOut}}
\def\setC{{\cal C}}
\def\setS{{\cal S}}
\def\setR{{\cal R}}
\def\setCd{{{\cal C}_{\mathrm{d}}}}
\def\classKL{{\cal KL}}
\def\bfunc{h}
\def\bfuncDot{\dot{\bfunc}}
\def\desBfunc{h_{\mathrm{d}}}
\def\desBfuncDot{\dot{\bfunc}_{\mathrm{d}}}
\def\desBfuncDDot{\ddot{\bfunc}_{\mathrm{d}}}
\def\actJacBfunc{\mathbf{J}^\bfunc}
\def\desJacBfunc{\mathbf{J}_\mathrm{d}^{\bfunc}}
\def\desJacBfuncDot{\mathbf{\dot{J}}_\mathrm{d}^{\bfunc}}
\def\bfuncCrtlIn{{\mu}^{\bfunc}}
\def\taskCrtlIni{\boldsymbol{\mu}^{j}}
\def\AdesBfuncOut{\mathbf{A}_{\desBfuncOut}}
\def\BdesBfuncOut{\mathbf{B}_{\desBfuncOut}}
\def\CdesBfuncOut{\mathbf{C}_{\desBfuncOut}}
\def\FdesBfuncOut{\mathbf{F}_{\desBfuncOut}}
\def\FdesBfuncOutRobust{\mathbf{\check{F}}_{\desBfuncOut}}
\def\bfuncTrackErr{\boldsymbol{\eta}^{\bfunc}_{\jointTrackErr}}
\def\constraintStiffness{{_{\mathrm{b}}\taskStiffness}}
\def\constraintDamping{{_{\mathrm{b}}\taskDamping}}
\def\constraintIntegralDamping{{_{\mathrm{b}}\taskIntegralDamping}}
\def\bfuncDelta{\delta^{\bfunc}}
\def\bfuncDeltaMax{\delta^{\bfunc}_{\max}}
\def\bfuncPsi{{\boldsymbol{\psi}^{\bfunc}}}
\def\taskPsi{{\boldsymbol{\psi}}}
\def\bfuncPsiSet{{{\Psi}^{\bfunc}}}
\def\PdesBfuncOut{\mathbf{P}_{\desBfuncOut}}
\def\QdesBfuncOut{\mathbf{Q}_{\desBfuncOut}}
\def\KBfuncEq{\mathbf{\check{K}}^{\bfunc}}
\def\genActConf{\bm{q}}
\def\genActConfDot{\boldsymbol{\alpha}_{\genActConf}}
\def\genActConfDDot{\boldsymbol{\dot{\alpha}}_{\genActConf}}
\def\genDesConf{\bm{q}_{\mathrm{d}}}
\def\genDesConfDot{\boldsymbol{\alpha}_{\genDesConf}}
\def\genDesConfDDot{\boldsymbol{\dot{\alpha}}_{\genDesConf}}
\def\genActJointDyn{\bm{x}}
\def\genDesJointDyn{\bm{x}_{\mathrm{d}}}
\def\genDesJointDynDot{{\boldsymbol{\alpha}}_{\genDesJointDyn}}
\def\UgenDesJointDyn{\genDesConfDDot}
\def\actFBDyn{\genActJointDyn^{\mathrm{FB}}}
\def\desFBDyn{\genDesJointDyn^{\mathrm{FB}}}
\def\FB{\boldsymbol{\xi}}
\def\FBDot{\bm{v}}
\def\FBDDot{\bm{\dot{v}}}
\def\desFB{\boldsymbol{\xi}_{\mathrm{d}}}
\def\desFBDot{\bm{v}_{\mathrm{d}}}
\def\desFBDDot{\bm{\dot{v}}_{\mathrm{d}}}
\def\genRefConf{\bm{q}_\mathrm{ref}}
\def\desTau{\boldsymbol{\tau}_{\mathrm{d}}}
\def\Tau{\bm{\tau}}
\def\force{\bm{f}}
\def\massMat{\mathbf{M}}
\def\nnLinTorqueVec{\bm{h}}
\def\selectMat{\mathbf{S}}
\def\quaternion{\mathbf{\underline{q}}}
\def\actCoM{\text{\bf CoM}(\genActJointDyn)}
\def\desCoM{\text{\bf CoM}(\genDesJointDyn)}
\def\weight{w}
\def\taskStiffness{\mathbf{K}_{\rm s}}
\def\taskDamping{\mathbf{K}_{\rm d}}
\def\taskIntegralDamping{\mathbf{K}_{\rm i}}
\def\taskIntegralDampingi{\mathbf{K}^{j}_{\rm i}}
\def\taskGains{\mathbf{K}}
\def\taskGainsPsi{\mathbf{L}}
\def\taskGainsPsii{\mathbf{L}^{j}}
\def\constraintStiffness{{K}^{h}_{\rm s}}
\def\constraintDamping{{K}^{h}_{\rm d}}
\def\constraintIntegralDamping{{K}^{h}_{\rm i}}
\def\constraintGains{{\mathbf{K}}^{h}}
\def\constraintGainsPsi{{\mathbf{L}}^{h}}
\def\desJacContact{\mathbf{J}_\mathrm{d}^{\mathrm{c}}}
\def\actJacContact{\mathbf{J}^{\mathrm{c}}}
\def\desJacContactDot{\mathbf{\dot{J}}_\mathrm{d}^{\mathrm{c}}}
\def\fkini{\fkin^{j}}
\def\weighti{\weight_{j}}
\def\taskPsii{\boldsymbol{\psi}^{j}}
\def\desJacTaski{\mathbf{J}^{j}_{\rm d}}
\def\desJacDotTaski{\mathbf{\dot{J}}^{j}_{\rm d}}
\def\eye{\mathbf{I}}
\def\zero{\mathbf{0}}
\newcommand{\norm}[1]{\begin{Vmatrix}
		#1
\end{Vmatrix}}
\newcommand{\mcrtc}{\texttt{mc\_rtc}}
\newcommand{\tp}{\ensuremath{^{\mathsf{T}}}}	
\newcommand{\inR}{\in \mathbb{R}}
\newtheorem{theorem}{Theorem}
\newtheorem{proposition}{Proposition}
\newtheorem{definition}{Definition}
\newtheorem{assumption}{Assumption}
\newtheorem{remark}{Remark}
\tikzset{
	block/.style = {draw, fill=white, rectangle, minimum height=1.5em, minimum width=2em},
	tmp/.style  = {coordinate}, 
	sum/.style= {draw, fill=white, circle, node distance=1cm},
	input/.style = {coordinate},
	output/.style= {coordinate},
	pinstyle/.style = {pin edge={to-,thin,black}},
	dotted_block/.style={draw=black!50!white, line width=1.5pt, dash pattern=on 3pt off 3pt on 3pt off 3pt, inner ysep=3mm,inner xsep=2mm, rectangle, rounded corners}
}
\newcounter{mycounter} 	
\begin{document}
	
	\IEEEpubid{0000--0000/00\$00.00~\copyright~2023 IEEE}
	
	\maketitle

	\begin{abstract}
		Task-space quadratic programming (QP) is an elegant approach for controlling robots subject to constraints. Yet, in the case of kinematic-controlled (i.e., high-gains position or velocity) robots, closed-loop QP control scheme can be prone to instability depending on how the gains related to the tasks or the constraints are chosen. 
		In this paper, we address such instability shortcomings. First, we highlight the non-robustness of the closed-loop system against non-modeled dynamics, such as those relative to joint-dynamics, flexibilities, external perturbations, etc. Then, we propose a robust QP control formulation based on high-level integral feedback terms in the task-space including the constraints. The proposed method is formally proved to ensure closed-loop robust stability and is intended to be applied to any kinematic-controlled robots under practical assumptions. We assess our approach through experiments on a fixed-base robot performing stable fast motions, and a floating-base humanoid robot robustly reacting to perturbations to keep its balance. 
	\end{abstract}
	
	\begin{IEEEkeywords}
		Robust task-space control, Set robust stability, Quadratic Programming control, Kinematic-controlled robots
	\end{IEEEkeywords}
	
	\section{Introduction}
	\IEEEPARstart{T}{ask-space} sensory control~\cite{samson1991book,ramadorai1994tcst} reached a high-level of maturity thanks to advances in numerical optimization methods. Non-linear task-space controllers can be formulated as local quadratic program (in short, QP control); which can handle several task-objectives and constraints using different sensors (embedded or external) for single or multiple different robots, see e.g.,~\cref{fig:HRP tasks and constraints}. QP controllers output desired joint torque $\desTau$ and/or desired robot-state acceleration $\genDesConfDDot$ that minimize at best (least-square sense) each task error, while ensuring that the robot state is within a set $\setC$ of predefined constraints (also called \emph{safety constraints} in control~\cite{ames2017tac}).  More particular to this work, we are interested in kinematic constraints (e.g., motion bounds in the joint or the task spaces, collision avoidance in the Cartesian space, field-of-view bounds in the image space, etc.)~\cite{cortez2021tcst}; the task error is typically steered by a task-space PD controller~\cite{djeha2020ral}. 
	 \IEEEpubidadjcol 
	 	\begin{figure}[t!]
		\centering
		\includegraphics[scale=0.3]{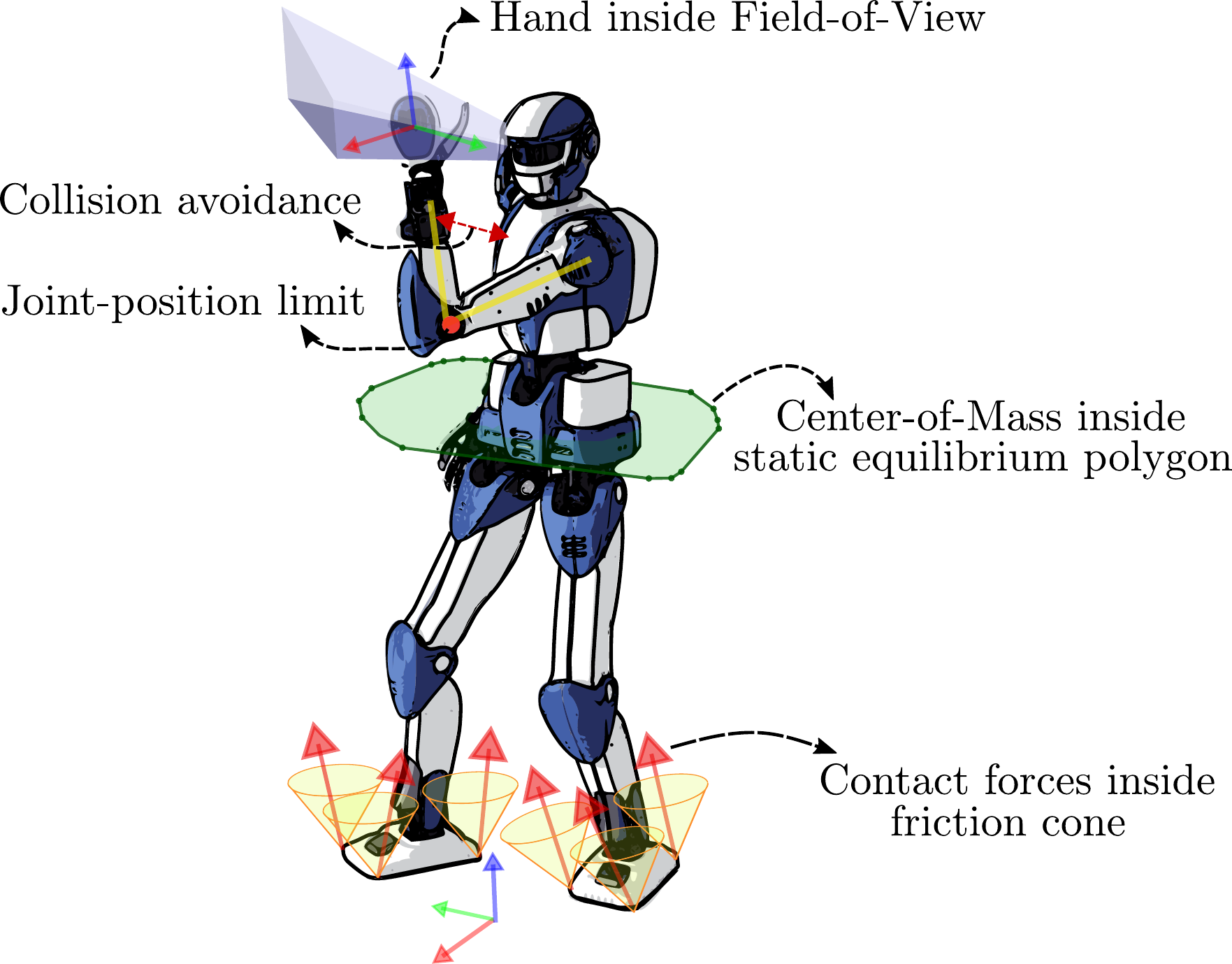}
		\caption{Multi-objective control: HRP-4 robot right hand reaching a Cartesian target while being subject to several constraints.}
		\label{fig:HRP tasks and constraints}
	\end{figure}
	
	QP control has been successfully applied to complex robots and use-cases~\cite{escande2014ijrr,salini2011icra,herzog2016autonomousRobot,kuindersma2016autonomousRobot,nava2020ral,hamed2020ral,englsberger2015tro,klemm2020ral,basso2020ifac}. Yet, several research reported sporadic unstable behaviors of relative severity (e.g., strong sustained oscillations), see e.g.,~\cite{feng2015journalOfFieldRobotics,johnson2015journalOfFieldRobotics,dedonato2017frontiers,koolen2016ijhr}. 
	These works used torque-controlled robots with
	\emph{software-implemented} joint controllers (with the desired joint position and/or velocity as control input; see \cref{fig:leaky integrator QP}) that add a joint-feedback torque to increase the joint stiffness at the expense of pure torque-control compliance~\cite{englsberger2014humanoids}. In particular,~\cite{feng2015journalOfFieldRobotics} noticed that oscillations and undesired behaviors are related to the double integration of the QP output $\genDesConfDDot$. 
	However, no further investigation was made to elucidate the cause. Instead, only workaround solutions have been proposed to mitigate the instability issue. These palliative methods can be sorted into two categories: (i) \emph{low-level joint approaches} that prevent $\genDesConfDDot$ double integration from diverging; typically by implementing a leaky integrator~\cite{hopkins2015icra}; and (ii) \emph{high-level approaches}  where the QP formulation is substantially modified at the expense of a complex control-architecture~\cite{feng2015journalOfFieldRobotics}, or by accounting for the joint feedback terms in the QP to adapt their gains~\cite{lee2022frontiersRobS} or for constraint feasibility concerns~\cite{cisneros2018iros}. Other approaches reported that lowering the task gains helps mitigate the instability~\cite{koolen2016ijhr,johnson2015journalOfFieldRobotics}. This induces that task gains are also an interfering factor. Similar observation is made in~\cite{djeha2020ral,singletary2022csl,molnar2022ral} concerning the gains of the constraint formulation. Unfortunately, since the joint controllers are software-implemented, their model and parameters are known. 	
	
	Conversely, stiff kinematic-controlled robots\footnote{In literature, they are referred to in different ways: position-controlled robots, velocity-controlled robots, low-level impedance-controlled robots~\cite{yang2018humanoids,iskandar2020iros} and stiffness-controlled robots~\cite{pang2022icra}.} are torque-controlled robots with high-gains \emph{hardware-implemented} joint controllers (\cref{fig:position controller}). In this work, we are interested in high-stiffness joint controllers with desired joint position $\desConf$ or velocity $\desConfDot$ as input. This control scheme is widely implemented in robotics and automation industry as it does not require knowledge of the robot's dynamics~\cite{garcia2004iros,rossi2014iros,zanchettin2017elsevier,polverini2017ral,suarez2018sciRob,lim2016journalOfFieldRobotics,singletary2022ral,shi2022machines}. 	
	
	The closed-loop task-space QP controller combined with a kinematic-controlled robot,~\cref{fig:QP scheme for kinematic-control robots}\subref{subfig:feedback QP}, is also prone to instability. This is because the joint-level controllers are not considered at the QP controller level.
	Such instability has been unnoticed in some control implementations that operate in feedforward (\cref{fig:QP scheme for kinematic-control robots}\subref{subfig:feedforward QP}). This leads to a decoupled control (similar to the approach adopted in~\cite{feng2015journalOfFieldRobotics}), delegating the control accuracy to the joint controllers~\cite{bouyarmane2018tac,zanchettin2017elsevier,polverini2017iros_a}. Even though the latter have generally high gains to keep the joint tracking error as small as possible, the user has no guarantee on the accuracy of the performed motion\footnote{\url{https://youtu.be/gTVi1QsLQU4}}. In such cases, frequent initializations of the controller are needed (i.e., start a new instance of the QP control with an update of the model at task switching, or integrator memory reset), to lower the discrepancy between real and control-model states due to non-modeled flexibilities or external disturbances.
	
	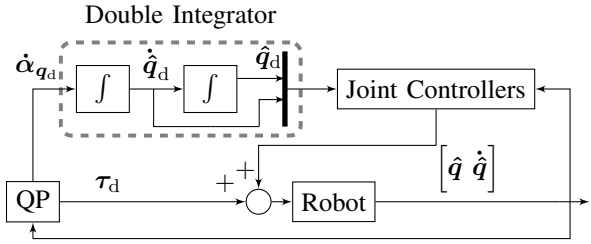
\begin{figure}
		\centering
		\begin{tikzpicture}[auto, node distance=2cm,>=latex']
			\node [input, name=rinput] (rinput) {};
			\node [block](QP) {QP};
			\node [tmp, right of=QP, node distance = 01 cm](InvDyn) {};
			\node [tmp, above of=InvDyn, node distance = 1.5 cm] (tmp1){};
			\node [block, left of = tmp1, node distance = 0.0 cm,xshift=-2pt] (integrator) {$\int$};
			\node [block, right of = integrator, node distance = 1.5 cm,xshift=-2pt] (integrator2) {$\int$};
			
			\node [tmp, right of = integrator2, node distance = 1.0 cm] (mux) {};
			\node [tmp, below of = integrator2, node distance = 0.5 cm] (tmpBelowIntegrator2) {};
			\node [dotted_block, fit = (integrator) (integrator2) (mux) ] (pos controller) {};	
			\node at (pos controller.north) [above, inner sep=1.5mm] {Double Integrator};
			\node [tmp, below of = mux, node distance = 0.5 cm] (tmpBelowMux) {};
			\node [block, right of = mux, node distance = 2 cm] (feedback) {Joint Controllers};	
			\node [tmp, above of=feedback, node distance = 0.5 cm] (xBis){};
			\node [tmp, above of=integrator, node distance = 0.6 cm] (xBisInt){};
			\node [sum, right of = InvDyn, node distance = 2 cm] (sum){};
			\node [tmp, right of=integrator, node distance = 2.2 cm] (xd){};
			\node [block, right of=sum, node distance = 1 cm](robot) {Robot};
			\node [tmp, below of=feedback, node distance = 0.75 cm](sumTmp) {};
			\node [tmp, right of=robot, node distance = 2.5 cm](x){};
			\node [tmp, right of=x, node distance = 0.5 cm](x2){};
			\node [tmp, right of=x2, node distance = 0.4 cm](x3){};
			\node [tmp, below of = QP, node distance = 0.49cm] (aux){};
			
			\draw [line width=2pt](3.35,1.0)--(3.35,2.0);
			\draw [-] (QP) -- node[above,pos=1]{$\desTau$}(InvDyn);
			\draw [->] (InvDyn) -- node[above,pos=0.85]{$+$}(sum);
			\draw [->] (QP.north) |- node[above, pos = 0.55]{$\genDesConfDDot$}(integrator.west);
			\draw [->] (sum) -- node{}(robot);
			\draw [->] (integrator.east) -- node{$
				\desConfDot
				$}(integrator2.west);
			\draw [-] ([xshift=9pt]integrator.east) |- node{}(tmpBelowIntegrator2);
			\draw [-] (tmpBelowIntegrator2) -- node{}([xshift=-10pt]tmpBelowMux);
			\draw [->] ([xshift=-10pt]tmpBelowMux) |- node{}([yshift=-4pt]mux);
			\draw [->] ([yshift=+4pt]integrator2.east) -- node[pos=0.65]{$
				\desConf
				$}([yshift=4pt]mux.west);
			\draw [->] (mux) -- node{}(feedback.west);
			\draw [-] (robot) -- node{$
				\left[\actConf \  \actConfDot\right]
				$}(x2);
			\draw [-] ([xshift = 4pt]x2) |- (aux);
			\draw [->] (x2) -- (x3);
			\draw [->] (aux) -- node{}(QP.south);
			\draw [->] ([xshift = 4pt]x2) |- (feedback.east);
			\draw [-] (feedback.south) -- (sumTmp);
			\draw [->] (sumTmp) -| node[pos=1, above, xshift =-5pt]{$+$}(sum.north);
			
		\end{tikzpicture}
		
		\caption{QP control for torque-controlled robot with additional joint feedback.}
		\label{fig:leaky integrator QP}
	\end{figure}
		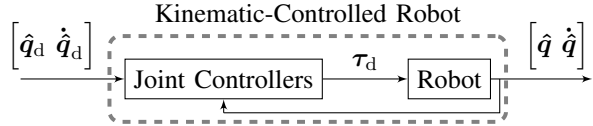
\begin{figure}[t!]
			\centering
			\begin{tikzpicture}[auto, node distance=2cm,>=latex']
				\node [input, name=rinput] (rinput) {};
				\node [tmp, right of=rinput, node distance = 0.0 cm](u){};
				\node [block, right of=u, node distance = 2.7 cm](joint controller) {Joint Controllers};
				\node [tmp, below of= joint controller, node distance = 0.45 cm] (tmp){};
				\node [block, right of=joint controller, node distance = 3 cm](robot) {Robot};
				\node [tmp, right of= robot, node distance = 1.9 cm] (tmp2){};
				\node [dotted_block, fit = (joint controller) (robot) (robot)] (pos controller) {};		
				\node at (pos controller.north) [above, inner sep=1.5mm] {Kinematic-Controlled Robot};
				\draw [->] (rinput) -- node[pos=0.3]{$
					\left[\desConf \ \desConfDot\right]
					$}(joint controller);
				\draw [->] (joint controller) -- node{$\desTau$}(robot);
				\draw [->] (robot) -- node[pos=0.65]{$
					\left[\actConf \  \actConfDot\right]
					$}(tmp2);
				\draw [-] ([xshift=-35pt]tmp2) |- (tmp);
				\draw [->] (tmp) -- (joint controller.south);
			\end{tikzpicture}		
			\caption{Illustrative scheme of a kinematic-controlled robot. The difference between kinematic- and torque-controlled robots lies in how the joint controllers are implemented. In the former, they are strictly built-in by the manufacturer, whereas they can be software implemented or already built-in and proposed by the manufacturer as an additional joint-level control option.}
			\label{fig:position controller}
		\end{figure}
		\begin{figure}[t!]
			\centering
			\subfloat[]{
				\begin{tikzpicture}[auto, node distance=2cm,>=latex']
					\node [input, name=rinput] (rinput) {};
					\node [tmp, right of=rinput, node distance = 0.5 cm](u){};
					\node [block, right of=u, node distance = 0 cm](QP) {QP};
					\node [tmp, right of=QP, node distance = 0.5 cm] (alphaD){};
					\node [block, right of=alphaD, node distance = 2 cm](integrator) {Double Integrator};
					\node [tmp, right of=integrator, node distance = 2.2 cm] (xd){};
					\node [block, right of=xd, node distance = 1 cm](robot) {Robot};
					\node [tmp, right of=robot, node distance = 1 cm](x){};
					\node [tmp, right of=x, node distance = 0.7 cm](x2){};
					\node [tmp, below of = QP, node distance = 0.55cm] (aux){};
					\draw [->] (QP) -- node{$\genDesConfDDot$}(integrator);
					\draw [->] (integrator) -- node{$
						\left[\desConf \ \desConfDot\right]
						$}(robot);
					\draw [->] (robot) -- node{$
						\left[\actConf \  \actConfDot\right]
						$}(x2);
					\draw [-] (x) |- (aux);
					\draw [->] (aux) -- node{}(QP.south);	
				\end{tikzpicture}
				\label{subfig:feedback QP}}
			\hfil
			\subfloat[]{
				\begin{tikzpicture}[auto, node distance=2cm,>=latex']
					\node [input, name=rinput] (rinput) {};
					\node [tmp, right of=rinput, node distance = 0.5 cm](u){};
					\node [block, right of=u, node distance = 0 cm](QP) {QP};
					\node [tmp, right of=QP, node distance = 0.5 cm] (alphaD){};
					\node [block, right of=alphaD, node distance = 2 cm](integrator) {Double Integrator};
					\node [tmp, right of=integrator, node distance = 2.2 cm] (xd){};
					\node [block, right of=xd, node distance = 1 cm](robot) {Robot};
					\node [tmp, right of=robot, node distance = 1 cm](x) {};
					\node [tmp, right of=x, node distance = 0.7 cm](x2){};
					\node [tmp, below of = QP, node distance = 0.55cm] (aux){};
					\draw [->] (QP) -- node{$\genDesConfDDot$}(integrator);
					\draw [->] (integrator) -- node{$
						\left[\desConf \  \desConfDot\right]
						$}(robot);
					\draw [-] (xd) |- (aux);
					\draw [->] (aux) -- node{}(QP.south);
					\draw [->] (robot) -- node{$
						\left[\actConf \  \actConfDot\right]
						$}(x2);
				\end{tikzpicture}
				\label{subfig:feedforward QP}}
			\hfil
			\subfloat[]{
				\begin{tikzpicture}[auto, node distance=2cm,>=latex']
					\node [block, draw = blue, thick](QP) {QP};
					\node [tmp, right of=QP, node distance = 0.5 cm] (alphaD){};
					\node [block, right of=alphaD, node distance = 2 cm, draw = blue, thick](integrator) {Double Integrator};
					\node [tmp, right of=integrator, node distance = 2.2 cm] (xd){};
					\node [block, right of=xd, node distance = 1 cm, draw = blue, thick](robot) {Robot};
					\node [tmp, right of=robot, node distance = 1 cm](x){};
					\node [tmp, right of=x, node distance = 0.7 cm](x2){};
					\node [tmp, below of = QP, node distance = 0.7cm] (aux){};
					\node [tmp, below of = QP, node distance = 0.55cm] (aux2){};
					\draw [->, color=blue, thick] (QP) -- node{$\genDesConfDDot$}(integrator);
					\draw [->, color=blue, thick] (integrator) -- node{$
						\left[\desConf \  \desConfDot\right]
						$}(robot);
					\draw [->, color=blue, thick] (robot) -- node{$
						\left[\actConf \  \actConfDot\right]
						$}(x2);
					\draw [-, color=blue, thick] (x) |- ([xshift=-5pt]aux);
					\draw [-, color=blue, thick] (xd) |- ([xshift=5pt]aux2);
					\draw [->, color=blue, thick] ([xshift=-5pt]aux) -- node{}([xshift=-5pt]QP.south);
					\draw [->, color=blue, thick] ([xshift=5pt]aux2) --([xshift=5pt]QP.south);
				\end{tikzpicture}
				\label{subfig:our QP}}		
			\caption{Different closed-loop QP control schemes for kinematic-controlled robots. The `Double integrator' and `Robot' blocks are detailed in~\cref{fig:leaky integrator QP} and~\cref{fig:position controller}, respectively.
				~\subref{subfig:feedback QP} Feedback QP.
				~\subref{subfig:feedforward QP} Feedforward QP.
				~\subref{subfig:our QP} Proposed robust QP. A detailed overview is shown in~\cref{fig:whole control scheme}.}
			\label{fig:QP scheme for kinematic-control robots}
		\end{figure}
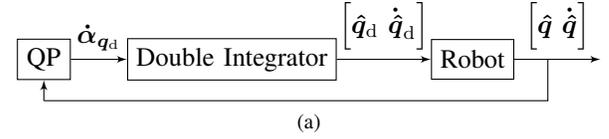
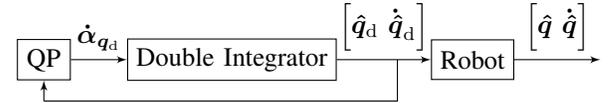
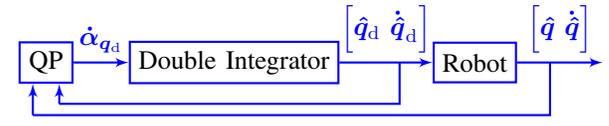
		
		In this paper, we address the stability of closed-loop task-space QP control (\cref{sec:QP formulation}) in the context of kinematic-controlled robots. 
		We show in~\cref{subsec:control challenge} how the closed-loop instability is interpreted in terms of non-robustness of the QP feedback scheme (\cref{fig:QP scheme for kinematic-control robots}\subref{subfig:feedback QP}) against non-modeled joint-dynamics\footnote{The parameters of the joint-dynamics (joint controllers + actuators) are generally not known as they depend on the joint controller gains (fixed by the manufacturer not intended to be modified by the operator in almost all robots~\cite{kim2016humanoids}) and the actuators electro-mechanical constants.}, flexibilities, external disturbances, etc. We propose a robust feedback QP control formulation (\cref{fig:QP scheme for kinematic-control robots}\subref{subfig:our QP}) based on high-level integral feedback terms that robustify the task-space PD controller (task robust stability) and the constraint formulation (set robust stability), see~\cref{fig:whole control scheme}. We extend its implementation to weighted-prioritized multi-objective QP control in~\cref{sec:Robust QP}.
		We assess our controller with experiments on two robots: (i)~a fixed-base robot Panda performing highly-dynamic motion control, and (ii)~floating-based humanoid robot HRP-4 performing robust balance control under non-modeled flexibilities and external disturbances (\cref{sec:Experiments}).
		
		The stability analysis, based on Lyapunov theory, focuses on intrinsic closed-loop QP control (i.e., not considering other sources or nature of instability such as singularities for which we dedicated a specific analysis in~\cite{pfeiffer2023tcst} or discretization). Our solution is elegant as it applies directly to the existing QP-templated tasks. In contrast to~\cite{singletary2022csl} where the joint controllers model is required, our approach is straightforwardly used with any kinematic-controlled robot as it does not require the exact knowledge of the joint-dynamics model which we only assume to be Input-to-State-Stable (ISS). We also further advanced~\cite{bouyarmane2018tac} in two ways: (i) we account for the lack of joint-dynamics in closed-loop;  (ii) we include the constraints in the stability analysis.  Although we define the set robust stability similarly to~\cite{kolathaya2019csl}, our approach is different in three main points: (i) the factors against which the robustness is enforced, (ii) the  formalism adopted to prove robustness, and (iii) the nature of the term added to achieve robustness. 
		
		To sum-up, our contributions are as follows:  
		\begin{itemize}
			\item Robust task formulation;
			\item Robust constraint formulation; 
			\item Task stability investigations in the case of multi-objective weighted-prioritized robust constrained QP;
			\item Integration into our existing framework and validation on a robotic manipulator and a humanoid robot.
		\end{itemize}
		The notations and definitions used in this paper are described in \cref{app:notations}.	
		\begin{figure*}
			\centering
			\begin{tikzpicture}[auto, node distance=2cm,>=latex']
				\node [block](QP) {QP~\eqref{eq:robust QP for combination}};
				\node [tmp, left of=QP, node distance = 4 cm] (tmpLeftQP){};
				\node [block, above of = tmpLeftQP, node distance = 0.35 cm] (Task law) {Robust Task Stabilization~\eqref{eq:heterogeneous feedback mu}};
				\node [block, below of = tmpLeftQP, node distance = 0.35 cm] (RECBF) {RECBF~\eqref{eq:RECBF formulation}};
				\node [tmp, right of=QP, node distance = 1 cm] (alphaD){};
				\node [block, right of=alphaD, node distance = 2.1 cm](integrator) {Double Integrator~\eqref{eq:double integrator}};
				\node [tmp, right of= integrator, node distance = 2.2 cm] (tmpXd){};
				\node [tmp, above of= tmpXd, node distance = 0.5 cm] (tmpFB){};
				\node [tmp, below of= tmpXd, node distance = 1 cm] (tmpxd){};
				\node [tmp, right of=integrator, node distance = 2.2 cm] (xd){};
				\node [block, right of=xd, node distance = 2 cm](robot) {Robot~\eqref{eq:joint-dynamics}};
				\node [above of= robot, node distance=0.8cm](taul-in){$\jointDisturbIn$};
				\node [input, above of = robot, node distance = 1cm] (tmpDisturb) {};
				\node [tmp, right of=robot, node distance = 1 cm](x){};
				\node [tmp, right of=x, node distance = 0.7 cm](x2){};		
				\node [block, below of = QP, node distance = 1.7 cm, text width=3.3cm] (forward kin and vel task) {Forward Kinematics \&  Forward Velocity~\eqref{eq:act output vector}};
				\node [block, below of = forward kin and vel task, node distance = 0.9 cm, text width=3.3cm, thick, draw = blue] (forward vel task) {Forward Velocity~\eqref{eq:des output vector}};
				\node [dotted_block, fit = (forward kin and vel task) (forward vel task)] (task) {};		
				\node at (task.north) [above, inner sep=1.0mm] {Task State};
				
				\node [block, below of = forward vel task, node distance = 2.1 cm, text width=3.3cm] (forward kin and vel constr) {Forward Kinematics \&  Forward Velocity~\eqref{eq:act Bfunc state}};
				\node [block, below of = forward kin and vel constr, node distance = 0.9 cm, text width=3.3cm, thick, draw = blue] (forward vel constr) {Forward Velocity~\eqref{eq:des Bfunc state}};
				\node [dotted_block, fit = (forward kin and vel constr) (forward vel constr)] (constr) {};		
				\node at (constr.north) [above, inner sep=1.0mm] {Constraint State};
				
				\node [block, left of = forward kin and vel task, node distance = 3.5 cm] (K task) {$\taskGains$};
				\node [block, left of = forward vel task, node distance = 3.5 cm, thick, draw = blue] (Lv task) {$\taskIntegralDamping$};
				\node [block, left of = forward kin and vel constr, node distance = 3.5 cm] (K constr) {$\constraintGains$};
				\node [block, left of = forward vel constr, node distance = 3.5 cm, thick, draw = blue] (Lv constr) {$\constraintIntegralDamping$};
				
				\node [tmp, left of = Task law, node distance = 3cm] (tmpLeftTask) {};
				\node [tmp, left of = RECBF, node distance = 3cm] (tmpLeftConstr) {};
				
				\draw [line width=2pt](5.3,-.5)--(5.3,.5);
				\draw [->] ([yshift=0.2cm] tmpXd) -| node[above,yshift = 0.1cm, pos=0.85]  {$\desFBDyn=\begin{bmatrix}
						\desFB \\ \desFBDot
					\end{bmatrix}$}([xshift = 0.3 cm]tmpFB);
				\draw [-] ([yshift=-0.2cm] tmpXd) -- ([yshift=-0.2cm, xshift=0.5cm] tmpXd);
				\draw [->] ([yshift=-0.2cm, xshift=0.5cm] tmpXd) |- node[near end] {$\desJointDyn$} (robot.west);
				\node [tmp, below of = QP, node distance = 0.5cm] (aux){};
				\draw [->] (QP) -- node{$\UgenDesJointDyn$}(integrator);
				\draw [-|] (integrator) -- node{$\genDesJointDyn$}(tmpXd);
				\draw [->] (robot) -- node[pos=0.7]{$\genActJointDyn$}(x2);
				\draw [->] (forward kin and vel task) -- node[pos=0.7, above]{$\actTaskOut$}(K task);
				\draw [->, thick, color=blue] (forward vel task) -- node[pos=0.7, above]{$\desOutDot$}(Lv task);
				\draw [->] (forward kin and vel constr) -- node[pos=0.7, above]{$\actBfuncOut$}(K constr);
				\draw [->, thick, color=blue] (forward vel constr) -- node[pos=0.7, above]{$\desBfuncDot$}(Lv constr);
				
				\draw [->] ([xshift=-0.5cm]x2) |- (forward kin and vel task.east);
				\draw [->] ([xshift=-0.5cm]x2) |- (forward kin and vel constr.east);
				\draw [->, thick, color=blue] ([xshift=-0.25cm]tmpXd) |- (forward vel task.east);
				\draw [->, thick, color=blue] ([xshift=-0.25cm]tmpXd) |- (forward vel constr.east);
				
				\draw [-] (K task) -| node[pos=0.15, above]{$\taskGains\actTaskOut$}([yshift=3pt]tmpLeftTask);
				\draw [-, thick, color=blue] (Lv task) -| node[pos=0.15, above]{$\taskIntegralDamping\desOutDot$}([yshift=-3pt,xshift = 7 pt]tmpLeftTask);
				\draw [-] (K constr) -| node[pos=0.15, above]{$\constraintGains\actBfuncOut$}([yshift=3pt,xshift = 20 pt]tmpLeftConstr);
				\draw [-, thick, color=blue] (Lv constr) -| node[pos=0.15, above]{$\constraintIntegralDamping\desBfuncDot$}([yshift=-3pt,xshift = 27 pt]tmpLeftConstr);
				
				\draw [->] ([yshift=3pt]tmpLeftTask) -- ([yshift=3pt]Task law.west);
				\draw [->, thick, color=blue] ([yshift=-3pt,xshift = 7 pt]tmpLeftTask) -- ([yshift=-3pt]Task law.west);
				\draw [->] ([yshift=3pt,xshift = 20 pt]tmpLeftConstr) -- ([yshift=3pt]RECBF.west);
				\draw [->, thick, color=blue] ([yshift=-3pt,xshift = 27 pt]tmpLeftConstr) -- ([yshift=-3pt]RECBF.west);
				\draw [-] (Task law.east) -- ([xshift = 10pt]Task law.east);
				\draw [-] (RECBF.east) -- ([xshift = 44pt]RECBF.east);
				\draw [->] ([xshift = 10pt]Task law.east) |- node[pos=0.7, above]{$\taskCrtlIn$}([yshift = 3pt]QP.west);
				\draw [->] ([xshift = 44pt]RECBF.east) |- node[pos=0.7, below]{$\bfuncCrtlIn$}([yshift = -3pt]QP.west);
				\draw [->] (taul-in) -- (robot.north);
			\end{tikzpicture}
			\caption{Overview of the proposed robust QP control scheme. The blue thick paths show the high-level integral feedback. Note that if $\taskIntegralDamping=\mathbf{0}$ and $\constraintIntegralDamping=0$, correspond to the feedback control scheme in~\cref{subfig:feedback QP}.}
			\label{fig:whole control scheme}
		\end{figure*}

		\section{Task-Space QP Control Formulation}\label{sec:QP formulation}
		\subsection{Joint-Dynamics}\label{subsec:in-out joint dynamics}
		Consider a robot with a floating or a fixed base having $n\in\mathbb{N}$ actuated Degrees-of-Freedom (DoF). Its state is defined by
			\begin{align}\label{eq:gen act conf formal form}
				\begin{split}
					\genActConf\tp &= \begin{bmatrix}
						\FB\tp & \actConf\tp
					\end{bmatrix}\in\mathbb{R}^{{7+n}} \\ 
					\genActConfDot\tp &= \begin{bmatrix}
						\FBDot\tp & \actConfDot\tp
					\end{bmatrix}\in\mathbb{R}^{6+n} \\ 
					\genActConfDDot\tp &= \begin{bmatrix}
						\FBDDot\tp & \actConfDDot\tp
					\end{bmatrix}\in\mathbb{R}^{6+n}
				\end{split}.
			\end{align}
		${\FB}\tp = \begin{bmatrix}\bm{p}\tp &\quaternion\tp\end{bmatrix}\in\mathbb{R}^{7}$ is the floating base position $\bm{p}\inR^3$ and orientation parameterized with unit quaternion $\quaternion\in S^3$ (3-sphere), $\actConf\in\mathbb{R}^n$ is the joint position. $\FBDot\in\mathbb{R}^6$ (resp. $\FBDDot\in\mathbb{R}^6$) is the linear and angular floating-base velocities (resp. floating-base accelerations), and $\actConfDot\in\mathbb{R}^n$ (resp. $\actConfDDot\in\mathbb{R}^n$) is the joint velocity (resp. joint acceleration).  $\genDesConf$, $\genDesConfDot$ and $\genDesConfDDot$ denote the desired  $\genActConf$, $\genActConfDot$ and $\genActConfDDot$ in~\eqref{eq:gen act conf formal form}, respectively. 

		The robot is governed by the equation of motion 
		\begin{equation}\label{eq:equation of motion}
			\massMat(\genActConf)\UgenDesJointDyn + \nnLinTorqueVec(\genActConf,\genActConfDot) - {\actJacContact}\tp  \force = \selectMat_{\Tau} \Tau,
		\end{equation}
		where $\massMat(\genActConf)\!\in\!\mathbb{R}^{(6+n)\times(6+n)}$ is the inertia matrix, $\nnLinTorqueVec(\genActConf,\genActConfDot)\!\in\!\mathbb{R}^{6+n}$ gathers Coriolis-centrifugal and gravitational torques,  $\actJacContact\!\in\!\mathbb{R}^{3\times(6+n)}$ is the contact Jacobian and $\force\inR^3$ is the contact force. $\selectMat_{\Tau}\tp=\begin{bmatrix}
			\mathbf{0}_{n\times 6} & \mathbf{I}_{n} 
		\end{bmatrix}\!\in\!\mathbb{R}^{n\times (6+n)}$ is the actuation selection matrix, and $\Tau\inR^n$ is the joint torque such that 
		\begin{equation}\label{eq:torque constraint}
			\Tau_{\min}\leq\Tau\leq\Tau_{\max},
		\end{equation} where $\Tau_{\min},\Tau_{\max}\!\in\!\mathbb{R}^n$ are the joint torque bounds.
		Constraints \eqref{eq:equation of motion} and \eqref{eq:torque constraint} can be combined resulting in the  torque-bounded equations of motion 
		\begin{equation}\label{eq:dynamics constraint}
			\selectMat_{\Tau} \Tau_{\min}\leq \massMat(\genActConf)\UgenDesJointDyn + \nnLinTorqueVec(\genActConf,\genActConfDot) - {\actJacContact}\tp  \force \leq \selectMat_{\Tau} \Tau_{\max}.
		\end{equation}
		The contact force is constrained to its linearized friction cone 
		\begin{equation}\label{eq:contact forces}
			\force = \sum_{j=1}^{n_c} \beta_j\bm{\rho}_j, \ \beta_j \geq0 , \ j=1,\ldots,n_c,
		\end{equation} where $\bm{\rho}_j\in\mathbb{R}^3$ is the $j^{\text{th}}$ vector of the linearized friction cone, and $n_c>2$ being the total number of the linearized friction-cones' vectors, see~\cite{bouyarmane2011iros}. 
		
		Let $\genActJointDyn,\genDesJointDyn\in\mathbb{R}^{13+2n}$ defined as 
		\begin{equation}\label{eq:gen joint dyn def}
			\genActJointDyn\tp = \begin{bmatrix}
				\genActConf\tp & \genActConfDot\tp
			\end{bmatrix}, \	\genDesJointDyn\tp = \begin{bmatrix}
				\genDesConf\tp & \genDesConfDot\tp
			\end{bmatrix},
		\end{equation}
		be the actual and desired robot states, respectively. In particular, $\genDesJointDyn$ follows a double integrator dynamics (in the continuous time-domain)
		\begin{align}\label{eq:double integrator}
			&\genDesJointDynDot = 
			\begin{bmatrix}
				\genDesConfDot \\ \genDesConfDDot
			\end{bmatrix} =
			\begin{bmatrix}
				\mathbf{0} & \mathbf{I}_{6+n} \\
				\mathbf{0} & \mathbf{0}
			\end{bmatrix}
			\genDesJointDyn + 
			\begin{bmatrix}
				\mathbf{0} \\
				\mathbf{I}_{6+n}
			\end{bmatrix} \UgenDesJointDyn,
		\end{align}
		with  $\UgenDesJointDyn\!\in\! {\cal U}$ where ${\cal U}\!\subseteq\!\mathbb{R}^{6+n}$ is the set of $\UgenDesJointDyn$ admissible values. The actual floating-base state ${\actFBDyn}\tp \!=\!\begin{bmatrix}
			{\FB}\tp & {\FBDot}\tp
		\end{bmatrix}\!\in\! \mathbb{R}^{13}$ is assumed to be bounded and estimated by an observer. 
		In particular, let us define the robot-state tracking error $\jointTrackErr$ as
		\begin{align}\label{eq:joint tracking error}
			\begin{split}
				\jointTrackErr &= \genActJointDyn - \genDesJointDyn, \\
				&\overset{\text{\small def}}{=} \begin{bmatrix}
					\FB \ominus \desFB \\
					\actConf  - \desConf \\ 
					\FBDot - \desFBDot \\
					\actConfDot - \desConfDot
				\end{bmatrix}\inR^{13+2n} ,
			\end{split}	
		\end{align}
		where $\FB \ominus \desFB \overset{\text{\small def}}{=} \begin{bmatrix}
			\bm{p} - \bm{p}_{\rm d} \\ 
			\quaternion \otimes \quaternion_{\rm d}^{\text{-}1}
		\end{bmatrix}\inR^7$ encompasses the position and orientation errors between $\FB$ and $\desFB$ where $\otimes$ denotes the quaternion product~\cite[Section~2.6]{siciliano2010robotics}.
		
		Given a kinematic-controlled robot (\cref{fig:position controller}), the actual robot state is governed by the dynamics of its actuated joints. By extracting the actuated joints parts from $\genActJointDyn$ and $\genDesJointDyn$ in~\eqref{eq:gen joint dyn def}, let $\desJointDyn,\actJointDyn\in\mathbb{R}^{2n}$ defined as
		\begin{equation}\label{eq:actuated DoF states}
			\actJointDyn\tp = \begin{bmatrix}
				\actConf\tp & \actConfDot\tp
			\end{bmatrix}, \	\desJointDyn\tp = \begin{bmatrix}
				\desConf\tp & \desConfDot\tp
			\end{bmatrix},
		\end{equation} be the actual and desired states of the robot actuated DoF.
		Similarly, we define the joint-dynamics tracking error $\jointDynTrackErr$ as 
		\begin{equation}\label{eq:jointDyn tracking error}
				\jointDynTrackErr = \actJointDyn - \desJointDyn\inR^{2n},
		\end{equation}
		and which dynamics\footnote{For the sake of generality, both $\desConf$ or $\desConfDot$ (joint commands) are encompassed by $\desJointDyn$ in the joint-dynamics $\bm{f}_{\jointDynTrackErr}$ in~\eqref{eq:joint-dynamics}.} is 
		\begin{equation}\label{eq:joint-dynamics}
			\jointDynTrackErrDot = \bm{f}_{\jointDynTrackErr}(\jointDynTrackErr,\jointDisturbIn),
		\end{equation}   
		where  $\jointDisturbIn\in\mathbb{R}^n$ is the bounded joint-space disturbance torque input.  
		In this study, we consider that $\bm{f}_{\jointDynTrackErr}$ is not known exactly but its main property is given by the following assumption.
		\begin{assumption}\label{assum1}
			The joint-dynamics $\bm{f}_{\jointDynTrackErr}$ in~\eqref{eq:joint-dynamics} is ISS w.r.t  $\jointDisturbIn$. Namely, there exist a class ${\cal KL}$  function $\beta$ and a class ${\cal K}$  function $\gamma$, such that for any initial state $\jointDynTrackErr(0)$ and any bounded disturbance input $\jointDisturbIn(t)$, the solution $\jointDynTrackErr(t)$ exists $\forall t\geq 0$ and satisfies~\cite{sontag2008springer,dashkovskiy2011springer}
			\begin{equation}\label{eq:ISS joint-dynamics}
				\norm{\jointDynTrackErr(t)} \leq \beta\left(\norm{\jointDynTrackErr(0)},t\right) + \gamma\left(\norm{\jointDisturbIn}_\infty\right)
			\end{equation}
			
		\end{assumption}
		The IS-Stability ensures that, given  bounded disturbance inputs $\jointDisturbIn$,  the joint-dynamics tracking error $\jointDynTrackErr$ in~\eqref{eq:jointDyn tracking error} evolves in a bounded set containing the origin.	
		\cref{assum1} is largely valid as it is among the main requirement for the well-functioning of a kinematic-controlled robot.
		$\jointTrackErr$ in \eqref{eq:joint tracking error} reflects the effect of several kinds of disturbances and uncertainties. Namely, non-modeled dynamics (e.g., transient joint-dynamics response w.r.t $\desJointDyn$, flexibilities, etc.); hardware imperfections (e.g., joint-dynamics steady-state errors, etc.); external disturbance $\jointDisturbIn\neq0$ (e.g., loads, pushes, unexpected impacts, etc.); measurement and  estimation noises (joint-velocity and floating-base estimations, etc.); and possibly others\footnote{A benchmark problem has been proposed in~\cite{moberg2009tcst} to simulate such disturbances.}.
		\begin{remark}\label{rem1}
			For a fixed-base robot, $\genActConf=\actConf$ and $\genActConfDot= \actConfDot$ leading to $\genActJointDyn = \actJointDyn$ (respectively for the corresponding desired states), and $\jointTrackErr=\jointDynTrackErr$.
		\end{remark}
		
		\subsection{Input-Output Task Dynamics}\label{subsec:in-out task dynamics}
		Let  $\fkin\!:\mathbb{R}^{7+n} \!\rightarrow\!\mathbb{R}^m$ be the forward kinematics for a given task defined by $m$ coordinates, and $\fkinRef(t),\fkinRefDot(t),\fkinRefDDot(t)\!\in\!\mathbb{R}^m$ be the task references; we can define the following states
		\begin{align}
			\label{eq:des output vector}
			\desTaskOut(\genDesJointDyn) &=
			\begin{bmatrix}
				\desOut \\
				\desOutDot
			\end{bmatrix}=
			\begin{bmatrix}
				\fkin(\genDesConf) - \fkinRef(t) \\
				\desJac\genDesConfDot - \fkinRefDot(t) 
			\end{bmatrix} \in H \subset \mathbb{R}^{2m}, 
		\end{align}
		\begin{align}
			\label{eq:act output vector}
			\actTaskOut(\genActJointDyn) &= 
			\begin{bmatrix}
				\actOut \\
				\actOutDot
			\end{bmatrix} =
			\begin{bmatrix}
				\fkin(\genActConf) - \fkinRef(t) \\
				\actJac\genActConfDot - \fkinRefDot(t)
			\end{bmatrix} \in H \subset\mathbb{R}^{2m},
		\end{align}
		where $\desJac,\actJac\!\in\!\mathbb{R}^{m\times(6+n)}$ are the task Jacobians computed w.r.t $\genDesConf$ and $\genActConf$, respectively; $\desTaskOut(\genDesJointDyn)$ and $\actTaskOut(\genActJointDyn)$ denote the desired and actual task dynamics states, respectively; $H$ is the set of their admissible values.

		Using Taylor expansion and~\eqref{eq:joint tracking error}, a relation between $\actTaskOut$ and $\desTaskOut$ is obtained as
		\begin{align}\label{eq:DL act task}
			\begin{split}
				\actTaskOut(\genActJointDyn) &= \actTaskOut(\genDesJointDyn+\jointTrackErr),\\ 	
				&= \left.\begin{matrix}\actTaskOut(\genActJointDyn)\end{matrix}\right|_{\genActJointDyn=\genDesJointDyn} + \underset{\taskTrackErr}{\underbrace{\left.\begin{matrix} \frac{\partial\actTaskOut(\genActJointDyn)}{\partial\genActJointDyn}\end{matrix}\right|_{\genActJointDyn=\check{\genActJointDyn}}\jointTrackErr}}, \ \check{\genActJointDyn} \!= \genDesJointDyn + \theta\jointTrackErr,\\
				&=\desTaskOut(\genDesJointDyn) +\taskTrackErr,\ \frac{\partial\actTaskOut(\genActJointDyn)}{\partial\genActJointDyn} = 
				\begin{bmatrix}
					\actJac & \bm{0} \\ \frac{\partial\left(\actJac\genActConfDot\right)}{\genActConf} & \actJac
				\end{bmatrix}, 				
			\end{split}
		\end{align}
		 $0\leq\theta\leq1$; $\taskTrackErr\inR^m$ being the Lagrange remainder of the Taylor expansion and  denotes the mapping of $\jointTrackErr$ in task-space. Hereafter, the dependency of $\desTaskOut$ on $\genDesJointDyn$ and $\actTaskOut$ on $\genActJointDyn$ is dropped.
		\begin{remark}\label{rem:phi orientation consideration}
			In the multiplication $\left.\begin{matrix} \frac{\partial\actTaskOut(\genActJointDyn)}{\partial\genActJointDyn}\end{matrix}\right|_{\genActJointDyn=\check{\genActJointDyn}}\jointTrackErr$, only the vector part of $\quaternion \otimes\quaternion_{\rm d}^{\text{-}1}$ is considered in $\jointTrackErr$~\eqref{eq:joint tracking error}.  
		\end{remark}
		
		Given that $\desOut$ in~\eqref{eq:des output vector} has a relative degree of 2
		\begin{equation}\label{eq:task relative degree}
			\desOutDDot=  \desJacDot\genDesConfDot + \desJac \UgenDesJointDyn -\fkinRefDDot(t),
		\end{equation} 
		then, the input-output task dynamics is obtained such that
		\begin{align}\label{eq:des task dyn}
			\desTaskOutDot &= \AdesTaskOut\desTaskOut + \BdesTaskOut\taskCrtlIn, \\ 
				\AdesTaskOut &= \begin{bmatrix}
					\mathbf{0} & \mathbf{I}_m \\\mathbf{0}& \mathbf{0}
				\end{bmatrix}, \BdesTaskOut = \begin{bmatrix}
					\mathbf{0} \\ \mathbf{I}_m
				\end{bmatrix}, \\\label{eq:mapping mu to u}
				\taskCrtlIn&= \desJacDot\genDesConfDot + \desJac \UgenDesJointDyn -\fkinRefDDot(t).
		\end{align}
		$\taskCrtlIn\in\mathbb{R}^m$ is the task-space control input affine in $\UgenDesJointDyn$~\eqref{eq:mapping mu to u}. 
		The control objective consists in formulating a task-space controller $\taskCrtlIn$ that steers $\actTaskOut$ to the origin. 

		\subsection{Constraint Formulation with Barrier Functions}
		The general form of a constraint is expressed as
		\begin{equation}\label{eq:def on safey-constraint}
			\mathrm{dist}(\genActJointDyn) \geq \mathrm{dist}_{\min}, 
		\end{equation} where $\mathrm{dist}(\genActJointDyn)\!\in\!\mathbb{R}$ is a distance obtained by forward kinematics and defined in the space of interest, see~\cref{fig:HRP tasks and constraints}, and $\mathrm{dist}_{\min}\!\in\!\mathbb{R}$ is the threshold.
		Let us consider the set $\setC=\left\{\genActJointDyn\in \mathbb{R}^{13+2n}:~\eqref{eq:def on safey-constraint}\right\}$. The fulfillment of the constraint~\eqref{eq:def on safey-constraint} forward in time can be checked by verifying the \emph{forward invariance} of $\setC$~\cite{blanchini1999automatica} (see \cref{app:notations}). Barrier functions are a suitable tool for this purpose. In fact, considering the barrier function $h(\genActJointDyn) = \mathrm{dist}(\genActJointDyn) - \mathrm{dist}_{\min}\geq0$, $\setC$ is forward invariant if $h(\genActJointDyn)$ satisfies Lyapunov-like conditions (see \cite[Definition~3]{ames2017tac}) based on the system's dynamics. 
		Rather than verifying the fulfillment of \eqref{eq:def on safey-constraint}, one may be interested in finding the control input $\UgenDesJointDyn$ that enforces \eqref{eq:def on safey-constraint} forward in time. Exponential Control Barrier Function (ECBF) allows to formulate a constraint on the control input that enforces the asymptotic stability of $\setC$  and thereby its forward invariance. This constraint can be then accounted for in QP\footnote{Conversely to $\bfunc(\genActJointDyn)\geq0$ which does not depend on the decision variables.}
		For an extensive survey about barrier functions\footnote{In the literature, there are reciprocal and zeroing barrier functions. They are equivalent to characterize forward invariance~\cite{ames2017tac}. Albeit, zeroing barrier functions are convenient for robustness study~\cite{xu2015ifac}.}, see~\cite{ames2019ecc}. 
		In \cref{subsec:Robust constraint formulation}, we introduce the set robust stability, then we propose a robust formulation of ECBF that enforces this notion. First, we present in this section the basics of ECBF formulation to follow the same notations afterward. 
		\subsubsection{Exponential Control Barrier Function}\label{subsubsec:ECBF}
		Let us define the sets $\setC,\setCd\subset\mathbb{R}^{13+2n}$ as  
		\begin{align}
			\label{eq:act C}\setC &=\left\{\genActJointDyn\in \mathbb{R}^{13+2n}:\bfunc(\genActJointDyn)\geq0\right\},\\
			\label{eq:des C}\setCd &=\left\{\genDesJointDyn\in \mathbb{R}^{13+2n}:\bfunc(\genDesJointDyn)\geq0\right\}.
		\end{align}
		Let us define the following states 
		\begin{align}
			\label{eq:des Bfunc state}
			\desBfuncOut(\genDesJointDyn) &=
			\begin{bmatrix}
				\desBfunc \\
				\desBfuncDot
			\end{bmatrix}=
			\begin{bmatrix}
				\desBfunc \\
				\desJacBfunc\genDesConfDot
			\end{bmatrix}\in {{_{\mathrm{b}}H}}\subset\mathbb{R}^{2}, \\
			\label{eq:act Bfunc state}
			\actBfuncOut(\genActJointDyn) &= 
			\begin{bmatrix}
				\bfunc \\
				\bfuncDot
			\end{bmatrix}=
			\begin{bmatrix}
				\bfunc \\
				\actJacBfunc\genActConfDot
			\end{bmatrix} \in {{_{\mathrm{b}}H}}\subset\mathbb{R}^{2},
		\end{align}
		where $\desJacBfunc,\actJacBfunc\!\in\!\mathbb{R}^{1\times(6+n)}$ are the barrier function Jacobians computed w.r.t $\genDesConf$ and $\genActConf$, respectively;  $\desBfuncOut(\genDesJointDyn)$ and $\actBfuncOut(\genActJointDyn)$ denote the desired and actual constraint dynamics states, respectively; ${{_{\mathrm{b}}H}}$ is the set of their admissible values.
		
		As in~\eqref{eq:DL act task}, we have the following 
		\begin{align}\label{eq:bfunc track err}
			\begin{split}
				\actBfuncOut(\genActJointDyn) &= \desBfuncOut(\genDesJointDyn) + \bfuncTrackErr, \ \bfuncTrackErr = \left.\begin{matrix} \frac{\partial\actBfuncOut(\genActJointDyn)}{\partial\genActJointDyn}\end{matrix}\right|_{\genActJointDyn=\check{\genActJointDyn}}\jointTrackErr \\
				\frac{\partial\actBfuncOut(\genActJointDyn)}{\partial\genActJointDyn}  &= 
				\begin{bmatrix}
					\actJacBfunc & \mathbf{0} \\ \frac{\partial\left(\actJacBfunc\genActConfDot\right)}{\genActConf} & \actJacBfunc
				\end{bmatrix} 
			\end{split},
		\end{align} where $\bfuncTrackErr$ is the mapping of $\jointTrackErr$ in the constraint-space. As in~\eqref{eq:DL act task}, \cref{rem:phi orientation consideration} is considered for $\left.\begin{matrix} \frac{\partial\actBfuncOut(\genActJointDyn)}{\partial\genActJointDyn}\end{matrix}\right|_{\genActJointDyn=\check{\genActJointDyn}}\jointTrackErr$ in~\eqref{eq:bfunc track err}. 
		
		Similarly to~\eqref{eq:task relative degree}, $\desBfunc$ has a relative-degree of 2
		\begin{equation}\label{eq:barrier function relative degree}
			\desBfuncDDot = \desJacBfuncDot\genDesConfDot + \desJacBfunc \UgenDesJointDyn.
		\end{equation}
		Then as in~\eqref{eq:des task dyn}, from~\eqref{eq:barrier function relative degree} we get
		\begin{align}\label{eq:Bfunc transverse dyn}
			\desBfuncOutDot &= \AdesBfuncOut\desBfuncOut + \BdesBfuncOut\bfuncCrtlIn,\\
			\label{eq:desBfunc def}\desBfunc&=\CdesBfuncOut\desBfuncOut,\\
			\AdesBfuncOut &= \begin{bmatrix}
				0 & 1 \\ 0 & 0
			\end{bmatrix}, \BdesBfuncOut = \begin{bmatrix}
				0 \\ 1
			\end{bmatrix},\CdesBfuncOut = \begin{bmatrix}
				1 & 0 \end{bmatrix},  \\\label{eq:mapping Bfunc mu to u}
			\bfuncCrtlIn &= \desJacBfuncDot\genDesConfDot + \desJacBfunc \UgenDesJointDyn.
		\end{align}
		Let $\bfuncCrtlIn \!=\! -\constraintGains\desBfuncOut\in\mathbb{R}$ with $\constraintGains \!=\! \begin{bmatrix}
			\constraintStiffness & \constraintDamping
		\end{bmatrix}\!\in\!\mathbb{R}^{1\times2}$. From~\eqref{eq:Bfunc transverse dyn} and~\eqref{eq:desBfunc def}, $\desBfunc(t) \!=\!  \CdesBfuncOut \exp\left({\FdesBfuncOut t}\right)\desBfuncOut(t_0)$ with $\FdesBfuncOut \!=\! \AdesBfuncOut\!-\!\BdesBfuncOut \constraintGains$. Hence, if 
		\begin{equation}\label{eq:ECBF constraint}
			\bfuncCrtlIn \geq -\constraintGains \desBfuncOut \overset{\eqref{eq:mapping Bfunc mu to u}}{\Longleftrightarrow}  \desJacBfuncDot\genDesConfDot + \desJacBfunc \UgenDesJointDyn \geq -\constraintGains \desBfuncOut,
		\end{equation} then following the Comparison Lemma~\cite[Lemma~3.4]{khalil2002NonLinearSystems}, $\desBfunc(t) \!\geq\!  \CdesBfuncOut \exp\left({\FdesBfuncOut t}\right)\desBfuncOut(t_0)$. If there exists a gain matrix $\constraintGains$ such that $\desBfunc(t) \!\geq\!  \CdesBfuncOut \exp\left({\FdesBfuncOut t}\right)\desBfuncOut\left(t_0\right)\!\geq\!0$ whenever $\desBfunc\left(t_0\right)\!\geq\!0$, then $\desBfunc$ is an ECBF, and $\setCd$ is made forward invariant (also said `safe') (see~\cite[Definition~7]{ames2019ecc}). The gain matrix $\constraintGains$ needs to satisfy two specifications: (i) $\FdesBfuncOut $ eigenvalues must be real-negative, 
		and (ii) $\desBfunc\left(t\right)\!\geq\!0$, $\forall t \!\geq \! t_0$, $\forall \genDesJointDyn(t_0)\in\setCd$~\cite{djeha2020ral,quiroz-omana2019ral}. 
		Note that  ECBF formulation~\eqref{eq:ECBF constraint} corresponds to the feedforward closed-loop QP control scheme in \cref{fig:QP scheme for kinematic-control robots}\subref{subfig:feedforward QP} where the non-modeled dynamics are not accounted for, and thereby~\eqref{eq:ECBF constraint} is called forward ECBF formulation. Unfortunately, it does not  imply forward invariance of the set $\setCd$ when the feedback closed-loop QP control scheme \cref{fig:QP scheme for kinematic-control robots}\subref{subfig:feedback QP} is adopted  as it will be shown in \cref{subsubsec:control challenge for constraint}. Hence, the goal  is to formulate $\bfuncCrtlIn$ such that $\setCd$ is made robustly stable.
		
		
		\subsection{Combining Tasks and Constraints via QP}\label{subsec:QP formulation}
		Since~\eqref{eq:mapping mu to u} and~\eqref{eq:mapping Bfunc mu to u} are affine in $\UgenDesJointDyn$, the task and set $\setC$ stabilization can be formulated and combined by the following weight-prioritized QP (highlighted terms are changed later)
		\begin{subequations}\label{eq:QP for combination}
			\begin{align}\label{eq:QP task}
				\begin{split}
					\left[\UgenDesJointDyn^*,\force^*\right]\!=~&\!\arg\min  \frac{\weight_0}{2}\!\norm{\selectMat\UgenDesJointDyn\!+ \! \boldsymbol{\kappa}(\actJointDyn)}^2 +\\	&\frac{\weight}{2}\!\norm{\desJac \UgenDesJointDyn\! + \!\desJacDot \genDesConfDot \! -\!\fkinRefDDot(t) \! \highlight{-\taskCrtlIn}}^2 
				\end{split}	\\ \label{eq:QP constraint}
				\text{s.t.} & \;\;\;\eqref{eq:dynamics constraint},\eqref{eq:contact forces}\\\label{eq:safety constraint}
				&-\desJacBfunc \UgenDesJointDyn\leq \desJacBfuncDot\genDesConfDot   \highlight{-\bfuncCrtlIn} \\ \label{eq:contact constraint}
				&\;\;\;\; \desJacContact\UgenDesJointDyn =- \desJacContactDot\genDesConfDot   -k\desJacContact\genDesConfDot
			\end{align}
		\end{subequations}	
		where $w$ and $w_0$ are positive weighting scalars such that $w\geq0$ and $w_0>0$~\cite[Lemma~2]{bouyarmane2018tac}. The first term in~\eqref{eq:QP task} is a secondary task that solves the remaining redundancy where $\selectMat = \left[\mathbf{0}_{n\times6} \quad \mathbf{I}_n\right]$\footnote{For a fixed-base robot, $\selectMat= \mathbf{I}_n$.} is a selection matrix and  $\boldsymbol{\kappa}(\actJointDyn)$ is a given joint-space feedback. 
		Constraint~\eqref{eq:contact constraint} stands for the no-slipping contacts (e.g., at the feet)  to have, along with~\eqref{eq:contact forces}, feasible and dynamically-consistent floating-base solutions $\desFBDDot$ where $\desJacContact$ is the contact Jacobian, and $k>0$.
			
		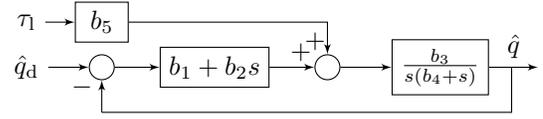
\begin{figure}
			\centering
			\begin{tikzpicture}[auto, node distance=2cm,>=latex']
				\node [input, name=rinput] (rinput) {};
				\node [tmp, right of=rinput, node distance = 0.0 cm](u){};
				\node [right of=rinput, node distance = 0.0 cm](qd-in){$\hat{q}_{\rm d}$};
				\node [above of=rinput, node distance = 0.6 cm](taul-in){$\tau_{\rm l}$};
				\node [sum, right of = rinput](sum) {};
				\node [block, above of=sum, node distance=0.6cm](load Gain) {$b_5$};
				\node [block, right of=sum, node distance=1.5cm] (PD controller) {$b_1 + b_2s $};
				\node [sum, right of = PD controller, node distance=1.5cm](sum2) {};
				\node [block, right of=sum2, node distance=1.5cm] (DC actuator) {$\frac{b_3}{s(b_4+ s)}$};
				\node [tmp, right of=DC actuator, node distance = 1.3 cm](q){};
				\node [tmp, below of=sum, node distance = 0.6 cm](belowSum){};
				
				\draw [->] (sum) -- node{}(PD controller);
				\draw [->] (qd-in) -- node{}(sum);
				\draw [->] (taul-in) -- node{}(load Gain);
				\draw [->] (PD controller) -- node[pos = 0.7, yshift=-1pt]{$+$}(sum2);
				\draw [->] (sum2) -- node{}(DC actuator);
				\draw [->] (DC actuator) -- node{$\hat{q}$}(q);
				\draw [->] (load Gain) -| node[pos = 0.8, xshift=-12pt]{$+$}(sum2);
				\draw [-] ([xshift = -10pt]q) |- node[pos = 0.8, xshift=-12pt]{}(belowSum);
				\draw [->] (belowSum) -- node[pos = 0.8]{$-$}(sum.south);
			\end{tikzpicture}
			
			\caption{1-DoF joint-dynamics block scheme from which the system \eqref{eq:1DoF example} is derived. It is DC motor servoed in position by a PD joint controller. The parameters $a_{i={1,...,5}}$ in \cref{tab:parameters} are as follows: $a_1 = -b_1b_3$, $a_2 = -b_2b_3-b_4$, $a_3 = - a_1$, $a_4 = b_2b_3$, $a_5 = b_3b_5$. The latter show the coupling between the PD gains ($b_1$ and $b_2$) of the joint controller and the electro-mechanical constants ($b_3$, $b_4$ and $b_5$) of the DC motor. The systems 1 and 2 in \cref{tab:parameters} have the same electro-mechanical constants but different PD gains.} 
			\label{fig:1DoF system block scheme}
		\end{figure}
		
		\section{Instability of Feedback QP Control Scheme}
		\label{subsec:control challenge} 
		
		Now, we show that formulating QP~\eqref{eq:QP for combination} according to the 
		closed-loop control scheme in~\cref{fig:QP scheme for kinematic-control robots}\subref{subfig:feedback QP} may lead to instability.
		
		\subsection{Task Feedback Formulation}\label{subsubsec:control challenge for tasks}
		Let us formulate $\taskCrtlIn$ as an output feedback control
		\begin{align}\label{eq:mu output feedback}
			\begin{split}
				\taskCrtlIn &= -\taskGains\actTaskOut, \ \taskGains = \begin{bmatrix}
					\taskStiffness & \taskDamping
				\end{bmatrix}\in\mathbb{R}^{m\times2m},\\ 
				\eqref{eq:DL act task} \Rightarrow\taskCrtlIn&= -\taskGains\desTaskOut - \taskGains\taskTrackErr.
			\end{split}
		\end{align}
		By replacing~\eqref{eq:mu output feedback} in~\eqref{eq:des task dyn}, we get
		\begin{align}\label{eq:des task dyn around eq}
			\begin{split}
				\desTaskOutDot &={\FdesTaskOut} \desTaskOut -\BdesTaskOut \taskGains\taskTrackErr, \ \FdesTaskOut = \AdesTaskOut - \BdesTaskOut \taskGains,
			\end{split}
		\end{align}
		where $\taskGains$ is chosen such that $\FdesTaskOut$ is Hurwitz, and $\taskGains\taskTrackErr\in\mathbb{R}^m$ is a perturbation term showing the coupling between the task gains and the effect of all the non-modeled dynamics $\taskTrackErr$. By the virtue of Lyapunov's indirect method~\cite[Theorem~4.7]{khalil2002NonLinearSystems}, $\desTaskOut$ is only locally stable for  \emph{sufficiently} bounded $\taskGains\taskTrackErr $. Otherwise, the closed-loop stability is not guaranteed.

		To exemplify this claim, let us consider a 1-DoF system in \cref{fig:1DoF system block scheme} ($n=1$, $\dot{\alpha}_{q_{\rm d}}=\ddot{\hat{q}}_{\rm d}$) governed by the following dynamics 
		\begin{equation}\label{eq:1DoF example}
			\actJointDynDot \!=\! 
			\begin{bmatrix}
				0 & 1 \\
				a_1 & a_2
			\end{bmatrix}\actJointDyn \!+\! 
			\begin{bmatrix}
				0 & 0 \\
				a_3 & a_4
			\end{bmatrix}\desJointDyn + \begin{bmatrix}
				0 \\ a_5
			\end{bmatrix}\tau_{\rm l}, \ \actJointDyn \!=\! \begin{bmatrix}
				\hat{q} \\ \dot{\hat{q}}
			\end{bmatrix}\!\in\!\mathbb{R}^2,
		\end{equation} where $a_{i=\{1,\ldots,5\}}\in\mathbb{R}$ denote the system parameters. The control objective is to drive the motor position $\hat{q}$ to reach a reference position $\hat{q}_{\mathrm{ref}}=1$~rad with no external disturbance ($\tau_{\rm l}=0$). \cref{eq:des output vector,eq:act output vector} become ($s(q) = \hat{q}$)
		\begin{align*}
			\boldsymbol{\eta}(\actJointDyn) = \begin{bmatrix}
				\hat{q} - \hat{q}_{\mathrm{ref}} \\ \dot{\hat{q}}
			\end{bmatrix} , \  \boldsymbol{\eta}_{\rm d}(\desJointDyn) = \begin{bmatrix}
				\hat{q}_{\rm d} - \hat{q}_{\mathrm{ref}} \\ \dot{\hat{q}}_{\rm d}
			\end{bmatrix}, \ 
			\mu = \ddot{\hat{q}}_{\rm d} \in\mathbb{R}.
		\end{align*}
		$a_i$ are taken according to System~1 in \cref{tab:parameters}. Output feedback control~\eqref{eq:mu output feedback} is performed (implemented on system~\eqref{eq:des task dyn}) with two sets of gains: $\mathbf{K}_1=\begin{bmatrix} 10 & 2\sqrt{10}\end{bmatrix}$ and $\mathbf{K}_2=\begin{bmatrix}30 & 2\sqrt{30}\end{bmatrix}$. \Cref{fig:feedback QP 1DoF - OutFdbk Non robust 1DoF}  shows that simply increasing the task gains leads to instability of the closed-loop system. In \cref{subsec:Robust task formulation}, we propose a task feedback to ensure \emph{global robust stability}.
		\begin{table}
			\centering
			\captionsetup{justification=centering, labelsep=newline, textfont=sc}

			\caption{Parameters Used for System~\eqref{eq:1DoF example} Numerical Simulations.}
			\label{tab:parameters}
			\begin{tabular}{|c||c|c|c|}\hline
				& System 1   & System 2 & Units\\\hline
				$a_1$&$-376.5977$ &$-2380.6356$ & s$^{\text{-}2}$ \\\hline
				$a_2$&$-158.5073$ &$-173.5712$ & s$^{\text{-}1}$ \\\hline
				$a_3$&$376.5977$  &$2380.6356$ &  s$^{\text{-}2}$ \\\hline
				$a_4$&$2.8245$    &$17.8884$ &s$^{\text{-}1}$ \\\hline
				$a_5$&$4.7034$    &$4.7034$ & rad / N.m.s$^2$ \\\hline
			\end{tabular}
		\end{table}
		\begin{figure}
			\centering
			\includegraphics[width=0.4939\columnwidth]{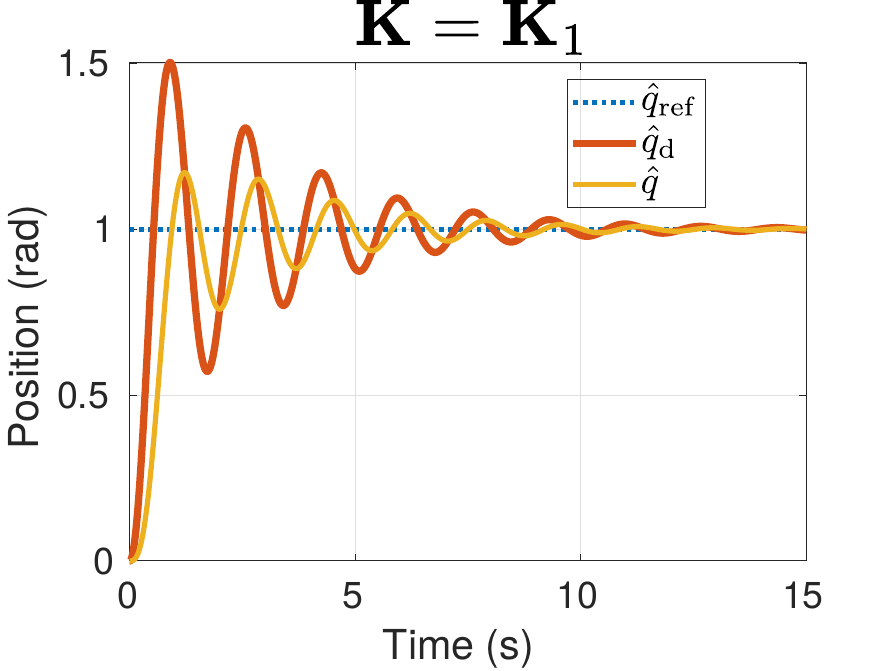}
			\includegraphics[width=0.4939\columnwidth]{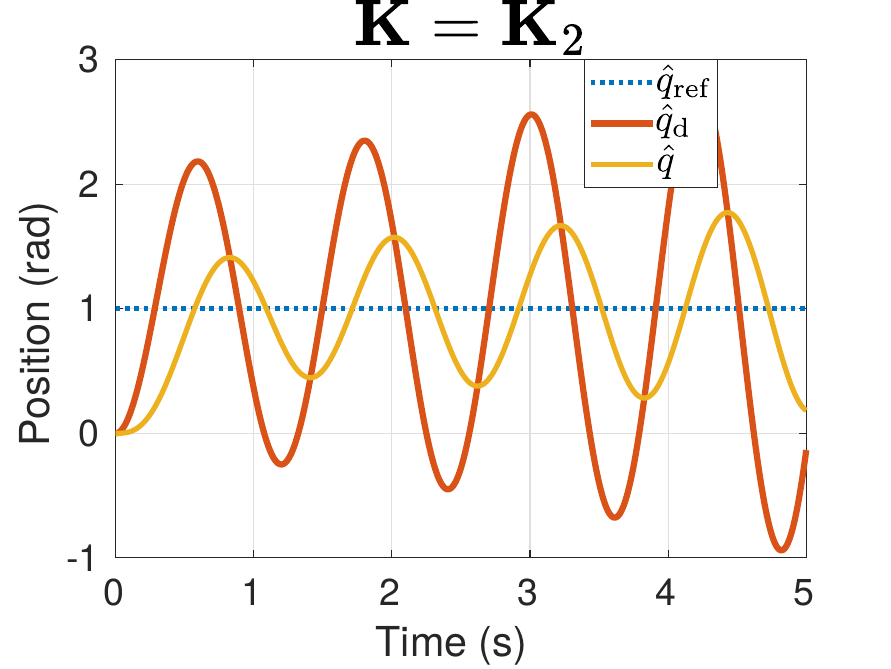}
			\caption{Non-robustness of output feedback control~\eqref{eq:mu output feedback} in system~\eqref{eq:1DoF example} with $\mathbf{K}=\mathbf{K}_1$ (left) and $\mathbf{K}=\mathbf{K}_2$ (right). Increasing the task gains with the same system parameters leads the closed-loop system to instability.}
			\label{fig:feedback QP 1DoF - OutFdbk Non robust 1DoF}
		\end{figure}
		
		\subsection{Feedback ECBF Formulation}\label{subsubsec:control challenge for constraint}

		Let us consider the feedback QP control scheme in~\cref{fig:QP scheme for kinematic-control robots}\subref{subfig:feedback QP}, ECBF constraint~\eqref{eq:ECBF constraint} becomes
		\begin{align}\label{eq:ECBF in feedback}
			\begin{split}
				\bfuncCrtlIn &\geq-\constraintGains\actBfuncOut, \\ 
				\overset{\eqref{eq:bfunc track err}}{\Longleftrightarrow} \bfuncCrtlIn &\geq-\constraintGains\desBfuncOut - \constraintGains\bfuncTrackErr.
			\end{split}
		\end{align}
		As in \cref{subsubsec:control challenge for tasks}, feedback ECBF formulation~\eqref{eq:ECBF in feedback} may not ensure the stability of sets $\setCd$ and $\setC$ if the term $\constraintGains\bfuncTrackErr $ is not sufficiently bounded; which shows the coupling between the ECBF gains and the disturbance $\bfuncTrackErr$. In Fig.~\ref{fig:feedback QP 1DoF - ECBF Non robust 1DoF} we compare feedforward and feedback ECBF formulations where $\bfunc = 3 - \hat{q}$, $\desBfunc = 3 - \hat{q}_{\rm d}$ and a steady-state error $\hat{\phi}\neq0$ is simulated by applying a constant disturbance $\tau_{\rm l}=5$~N.m. $a_{i}$ parameters are taken according to System~2 in \cref{tab:parameters} and $\constraintGains$ is computed as in~\cite{djeha2020ral}. The task is formulated as in \cref{subsubsec:control challenge for tasks} except  $\hat{q}_{\mathrm{ref}}\!=\!5$~rad. Feedforward ECBF~\eqref{eq:ECBF constraint} does not account for the disturbance which leads to a constraint violation. On the other hand, feedback ECBF~\eqref{eq:ECBF in feedback} is not robust to non-modeled joint-dynamics resulting in sustained oscillations around the set boundary $\hat{q}_{\max}=3$~rad. In \cref{subsec:Robust constraint formulation}, we propose a new formulation of $\bfuncCrtlIn$ to guarantee \emph{robust stability} of  $\setCd$.

		\begin{figure}[!t]
			\centering
			\includegraphics[width=0.4939\columnwidth]{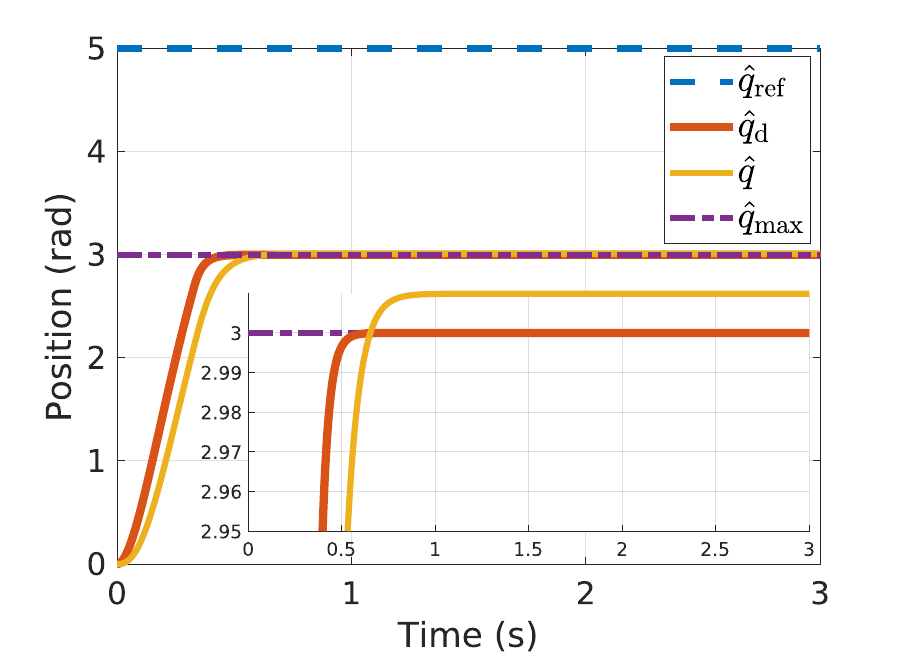}
			\includegraphics[width=0.4939\columnwidth]{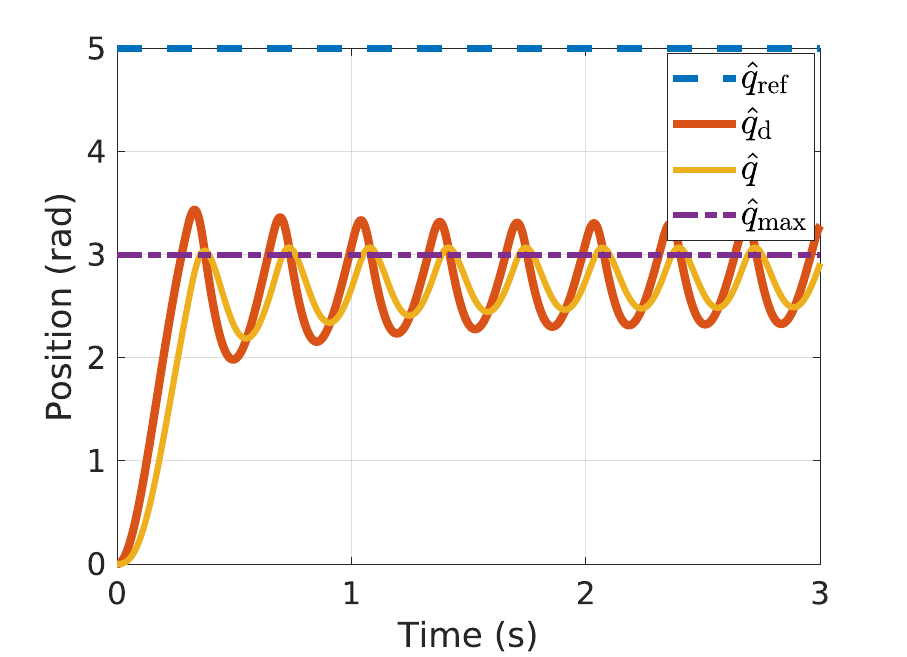}
			\caption{ECBF formulation performed on system~\eqref{eq:1DoF example} in feedforward~\eqref{eq:ECBF constraint}  (left) and feedback~\eqref{eq:ECBF in feedback}  (right) QP control schemes.} 
		\label{fig:feedback QP 1DoF - ECBF Non robust 1DoF}
	\end{figure}
	
	\section{Robust Feedback QP Control Formulation}\label{sec:Robust QP}
	The proposed robust design of the QP controller consists of the following steps. The main result is given by \cref{thm:heterogeneous feedback,thm:RECBF} that show how $\taskCrtlIn$ and $\bfuncCrtlIn$ are formulated including integral feedback terms  to ensure $\desTaskOut $ and  $\desBfuncOut $ convergence to their respective residual sets, and thereby their boundedness. Then, \cref{prop 1} makes the bridge between the desired task-space state $\desTaskOut $ and the corresponding desired robot state $\genDesJointDyn$ by showing that if the former is bounded so is the latter. Finally, based on \cref{assum1} and \cref{prop 1}, \cref{prop 2} establishes the boundedness relationship between $\desTaskOut$ and $\actTaskOut$. 
	\begin{proposition}\label{prop 1}
		If $\desTaskOut $ (resp. $\desBfuncOut $) is bounded, so is $\genDesJointDyn $. 
	\end{proposition}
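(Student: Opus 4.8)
The plan is to show directly that boundedness of the task state $\desTaskOut$ (or of the constraint state $\desBfuncOut$) forces boundedness of both the configuration part $\genDesConf$ and the velocity part $\genDesConfDot$ of $\genDesJointDyn$, treating them separately via the block structure of~\eqref{eq:gen act conf formal form}, the definitions~\eqref{eq:des output vector}--\eqref{eq:act output vector} (resp.~\eqref{eq:des Bfunc state}), and the QP~\eqref{eq:QP for combination}. For the configuration part, decompose $\genDesConf\tp = \begin{bmatrix}\desFB\tp & \desConf\tp\end{bmatrix}$ with $\desFB\tp = \begin{bmatrix}\bm{p}_{\rm d}\tp & \quaternion_{\rm d}\tp\end{bmatrix}$: the desired joint angles $\desConf$ lie in the compact box of joint-position limits of the robot, and $\quaternion_{\rm d}\in S^3$ is bounded by construction. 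For the base position $\bm{p}_{\rm d}$, boundedness of $\desTaskOut$ gives boundedness of $\desOut = \fkin(\genDesConf) - \fkinRef(t)$, hence of $\fkin(\genDesConf)$ since the reference $\fkinRef(t)$ is a bounded trajectory; as $\fkin$ depends on $\bm{p}_{\rm d}$ only through a rigid translation of a body frame whose remaining coordinates are continuous functions of the already-bounded pair $(\quaternion_{\rm d},\desConf)$, boundedness of $\fkin(\genDesConf)$ forces $\bm{p}_{\rm d}$ to be bounded. If instead $\desBfuncOut$ is assumed bounded, the same argument applies with the distance map $\desBfunc$ of~\eqref{eq:des Bfunc state} in place of $\fkin$. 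Thus $\genDesConf$ is bounded.

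For the velocity part, boundedness of $\desTaskOut$ (resp. $\desBfuncOut$) also yields boundedness of $\desOutDot = \desJac\genDesConfDot - \fkinRefDot(t)$ (resp. of $\desBfuncDot = \desJacBfunc\genDesConfDot$), so $\desJac\genDesConfDot$ (resp. $\desJacBfunc\genDesConfDot$) is bounded, which controls the component of $\genDesConfDot$ in the row space of the task (resp. constraint) Jacobian. The complementary null-space component is not fixed by a single task, so here I would invoke the remaining ingredients of the scheme: the bounded references, the joint-velocity bounds, the secondary regularization term $\frac{\weight_0}{2}\norm{\selectMat\UgenDesJointDyn + \boldsymbol{\kappa}(\actJointDyn)}^2$ of~\eqref{eq:QP task} acting through the double integrator~\eqref{eq:double integrator} with a stabilizing joint feedback $\boldsymbol{\kappa}$, and the standing boundedness of the actual floating-base velocity $\FBDot$. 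Together these keep the residual velocity directions bounded, whence $\genDesConfDot$ and therefore $\genDesJointDyn = \begin{bmatrix}\genDesConf\tp & \genDesConfDot\tp\end{bmatrix}\tp$ are bounded.

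The main obstacle is precisely this last step: a single task output (or a single active constraint) does not pin down all components of $\genDesConfDot$, so the proof cannot rest on $\desJac$ alone and must make explicit which of the ancillary facts --- bounded references, joint-velocity limits, the secondary QP objective, bounded base velocity --- are actually used, and under what rank/properness hypothesis on $\desJac$ (resp. $\desJacBfunc$) the range-space bound already suffices. Everything else reduces to continuity and compactness on the forward-kinematics map, so I expect the write-up to spend most of its effort isolating the right hypothesis for this null-space argument rather than on computation.
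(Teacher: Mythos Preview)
Your treatment of the velocity component $\genDesConfDot$ is exactly what the paper does: it writes $\genDesConfDot = \hat{\boldsymbol{\alpha}}_{{\genActConf}_{\rm d}} + (\mathbf{I}-\desJac^{+}\desJac)\boldsymbol{\nu}$, bounds the range-space part from $\norm{\desJac\genDesConfDot-\fkinRefDot}\le M$ together with the standing non-singularity of $\desJac$, and bounds the null-space part $\boldsymbol{\nu}$ by appealing to the secondary (posture) task in~\eqref{eq:robust QP for combination} and to the contact constraints~\eqref{eq:contact forces},~\eqref{subeq:contact constraint} that keep the floating-base solution feasible. So the ``main obstacle'' you flag is resolved in the paper precisely along the lines you sketch; you may simply cite those QP ingredients rather than leave the choice of ancillary facts open.

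Your treatment of the configuration component $\genDesConf$, however, diverges from the paper and carries a gap. You argue block-wise: $\desConf$ bounded by joint-position limits, $\quaternion_{\rm d}\in S^3$ bounded trivially, and $\bm{p}_{\rm d}$ bounded because $\fkin$ sees it through a rigid translation. The paper does none of this; instead it picks $\genRefConf(t)$ with $\fkin(\genRefConf(t))=\fkinRef(t)$, writes the Taylor expansion with exact Lagrange remainder $\fkin(\genDesConf)=\fkinRef+\desJac(\check{\bm q}_{\rm d})\,\Delta\genDesConf$ for $\Delta\genDesConf=\genDesConf-\genRefConf$, and uses the Jacobian non-singularity assumption of Appendix~\ref{app:notations} to conclude $\norm{\Delta\genDesConf}\le M/b$, hence $\norm{\genDesConf}\le M/b+\norm{\genRefConf}$. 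Your route imports hypotheses the proposition does not state: that joint-position limits are enforced as hard constraints (they are not part of~\eqref{eq:QP for combination} as written), and that $\fkin$ is a Cartesian-position map acting on $\bm{p}_{\rm d}$ by rigid translation --- an orientation-only task, a joint-space task, or the scalar barrier $\desBfunc$ need not have this structure, so your ``same argument applies'' for $\desBfuncOut$ does not go through. The paper trades these structural assumptions for the single standing hypothesis that the task Jacobian is non-singular, which lets it run the identical Lagrange-remainder argument for both $\desTaskOut$ and $\desBfuncOut$.
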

	\begin{proof}
		See \cref{proof:prop1}.
	\end{proof}
	
	\begin{proposition}\label{prop 2}
		If $\desTaskOut $ (resp. $\desBfuncOut $) is (uniformly) ultimately bounded 
		then $\actTaskOut $ (resp. $\actBfuncOut$) is (uniformly) ultimately bounded.
	\end{proposition}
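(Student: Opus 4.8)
The plan is to exploit the additive decompositions $\actTaskOut = \desTaskOut + \taskTrackErr$ of~\eqref{eq:DL act task} and $\actBfuncOut = \desBfuncOut + \bfuncTrackErr$ of~\eqref{eq:bfunc track err}. Since a sum of two (uniformly) ultimately bounded signals is again (uniformly) ultimately bounded, it suffices to show that the Taylor remainders $\taskTrackErr$ and $\bfuncTrackErr$ are (uniformly) ultimately bounded whenever $\desTaskOut$ (resp.\ $\desBfuncOut$) is. I would establish this in two steps, proving the claim for $\actTaskOut$; the argument for $\actBfuncOut$ is verbatim with~\eqref{eq:bfunc track err} in place of~\eqref{eq:DL act task}.

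\emph{Bounding the robot-state tracking error.} Assume $\desTaskOut$ is (uniformly) ultimately bounded. By \cref{prop 1}, $\genDesJointDyn$ is bounded, hence so are its actuated part $\desJointDyn$ and its floating-base part. \cref{assum1} then gives, via the ISS estimate~\eqref{eq:ISS joint-dynamics}, $\norm{\jointDynTrackErr(t)} \le \beta(\norm{\jointDynTrackErr(0)},t) + \gamma(\norm{\jointDisturbIn}_\infty)$; since $\beta(\cdot,t)\to 0$ and $\jointDisturbIn$ is bounded, $\jointDynTrackErr$ is uniformly ultimately bounded with ultimate bound $\gamma(\norm{\jointDisturbIn}_\infty)$. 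Combining this with the assumed boundedness of the actual floating-base state $\actFBDyn$ (hence of the floating-base tracking error) and the definition of $\jointTrackErr$ in~\eqref{eq:joint tracking error} shows that $\jointTrackErr$ is (uniformly) ultimately bounded; for a fixed-base robot this is immediate from \cref{rem1}, since then $\jointTrackErr = \jointDynTrackErr$.

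\emph{Bounding the remainder and concluding.} Because $\genDesJointDyn$ is bounded and $\jointTrackErr$ is ultimately bounded, the intermediate point $\check{\genActJointDyn} = \genDesJointDyn + \theta\jointTrackErr$, $0\le\theta\le 1$, remains in a compact set. Assuming the forward kinematics $\fkin$ is $C^2$ (so the velocity-level Jacobian blocks entering the matrix in~\eqref{eq:DL act task} are continuous), $\left.\partial\actTaskOut/\partial\genActJointDyn\right|_{\genActJointDyn=\check{\genActJointDyn}}$ is bounded on that compact set by some constant $c>0$, whence $\norm{\taskTrackErr} \le c\,\norm{\jointTrackErr}$ and $\taskTrackErr$ is (uniformly) ultimately bounded. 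Therefore $\actTaskOut = \desTaskOut + \taskTrackErr$ is (uniformly) ultimately bounded, and the same chain applied to~\eqref{eq:bfunc track err} yields the statement for $\actBfuncOut$.

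The main obstacle is the second step: certifying that the non-modeled-dynamics remainder $\taskTrackErr$ (and $\bfuncTrackErr$) is ultimately bounded requires chaining \cref{assum1} (ISS of the unknown joint-dynamics) with \cref{prop 1} and with a compactness/continuity argument on the kinematic Jacobians, and one must make the underlying smoothness and boundedness hypotheses explicit for this to be rigorous. A minor additional point is to check that the \emph{uniform} qualifier propagates: the ISS bound in~\eqref{eq:ISS joint-dynamics} is already uniform in the initial time, and the constant $c$ above does not depend on it, so uniform ultimate boundedness carries through.
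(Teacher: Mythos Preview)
Your proposal is correct and follows essentially the same route as the paper: invoke \cref{prop 1} to bound $\genDesJointDyn$, use the ISS property of \cref{assum1} together with the assumed boundedness of the floating-base state to bound $\jointTrackErr$, and then conclude via the decomposition $\actTaskOut=\desTaskOut+\taskTrackErr$ from~\eqref{eq:DL act task}. Your compactness/continuity argument for bounding the Jacobian factor in $\taskTrackErr$ is slightly more explicit than the paper's, which simply invokes~\eqref{eq:DL act task} once $\genActJointDyn$ is shown bounded, but the logic is the same.
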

	\begin{proof}
		See \cref{proof:prop2}.
	\end{proof}
	
	Let us introduce the following states
	\begin{align*}\label{eq:psi}
		\taskPsi &= \begin{bmatrix}
			{\actOut}\tp & 	{\actOutDot}\tp& {\desOutDot}\tp
		\end{bmatrix}\tp, \ \taskPsi\in\Psi\subset\mathbb{R}^{3m}, \\
		\bfuncPsi &= \begin{bmatrix}
			{\bfunc}\tp & 	{\bfuncDot}\tp& {\desBfuncDot}\tp
		\end{bmatrix}\tp, \ \bfuncPsi\in\bfuncPsiSet\subset\mathbb{R}^{3},
	\end{align*} 
	where $\Psi$ and $\bfuncPsiSet$ are the admissible values of $\taskPsi$ and $\bfuncPsi$, respectively; they are used for $\taskCrtlIn$ and $\bfuncCrtlIn$ formulations, resp.
	
	\subsection{Global Robust Stable Task Formulation}\label{subsec:Robust task formulation} 
	When the solutions of~\eqref{eq:des task dyn} converge to a residual set $\Omega\!\subset \! H$ with $0\!\in\!\Omega$ for all initial conditions and admissible perturbations,	$\desTaskOut $ is said to be Robustly Globally Uniformly Asymptotically Stable w.r.t $\Omega$ (RGUAS-$\Omega$). The robust stabilization problem consists in finding  $\taskCrtlIn$ such that $\desTaskOut $  is RGUAS-$\Omega$. 
	If  $\Omega$ can be made arbitrarily small (but still not equal to the origin), $\desTaskOut $ is said to be robustly \emph{practically} stable (see Definition in \cref{app:notations}).
	An illustrative scheme of RGUA-Stability is shown in~\cref{fig:RGUAS}.
	\begin{figure}[!htb]
		\centering
		\includegraphics[width=0.7\columnwidth]{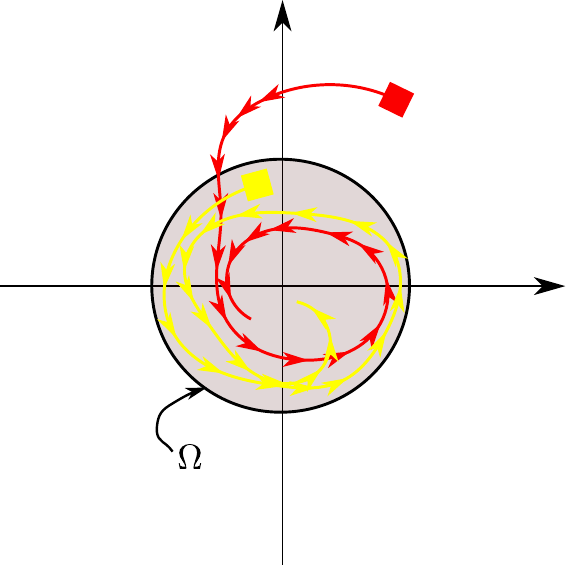}
		\caption{RGUAS-$\Omega$ illustrative scheme. Two state trajectories are shown: the red one starts (squares) outside the residual set $\Omega$ then it converges to $\Omega$ over time, whereas the yellow one starts inside $\Omega$ and remains within it.} 
	\label{fig:RGUAS}
\end{figure}
In the following, we state the main result of this subsection.
\begin{theorem}\label{thm:heterogeneous feedback}
	Let us assume that $\taskTrackErr $ is bounded. If 
	\begin{equation}\label{eq:heterogeneous feedback mu}
		\taskCrtlIn = -\taskGainsPsi\taskPsi, \ \taskGainsPsi \!=\!\begin{bmatrix}
			\taskStiffness& 	\taskDamping& \taskIntegralDamping 
		\end{bmatrix}\in\mathbb{R}^{m\times3m},
	\end{equation}
	where $\taskStiffness,\taskIntegralDamping  \!\in\!\mathbb{R}^{m\times m}$ are diagonal positive-definite matrices, and $\taskDamping$ chosen such that $\FdesTaskOutRobust \!=\! \AdesTaskOut\!-\!\BdesTaskOut \mathbf{\check{K}}$ is Hurwitz, with $\mathbf{\check{K}}=\left[\taskStiffness \quad \taskDamping+\taskIntegralDamping\right]$, 
	then there exists $\taskIntegralDamping$ such that $\desTaskOut $ is robustly practically stable. 
\end{theorem}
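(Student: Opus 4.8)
The plan is to substitute the proposed feedback \eqref{eq:heterogeneous feedback mu} into the desired task dynamics \eqref{eq:des task dyn}, turn the closed loop into a linear system in $\desTaskOut$ driven by a bounded exogenous signal, run a standard quadratic-Lyapunov ultimate-boundedness argument to obtain an $\Omega$ with $0\in\Omega$, and then spend the real effort showing that $\Omega$ can be made as small as desired through the choice of $\taskIntegralDamping$.

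\emph{Step 1 (reduction to a perturbed linear system).} Write $\taskTrackErr=[\boldsymbol{\phi}_a\tp\ \boldsymbol{\phi}_b\tp]\tp$ for its position/velocity blocks, so that \eqref{eq:DL act task} gives $\actOut=\desOut+\boldsymbol{\phi}_a$ and $\actOutDot=\desOutDot+\boldsymbol{\phi}_b$. Then $\taskCrtlIn=-\taskGainsPsi\taskPsi=-\mathbf{\check{K}}\desTaskOut-\taskStiffness\boldsymbol{\phi}_a-\taskDamping\boldsymbol{\phi}_b$ with $\mathbf{\check{K}}=[\taskStiffness\ \ \taskDamping+\taskIntegralDamping]$, and substituting into \eqref{eq:des task dyn} yields $\desTaskOutDot=\FdesTaskOutRobust\desTaskOut+\BdesTaskOut\bm{d}(t)$, where $\bm{d}(t):=-\taskStiffness\boldsymbol{\phi}_a(t)-\taskDamping\boldsymbol{\phi}_b(t)$. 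By hypothesis $\FdesTaskOutRobust$ is Hurwitz, and since $\taskTrackErr$ is bounded, $\|\bm{d}\|_\infty\le(\|\taskStiffness\|+\|\taskDamping\|)\,\bar{\phi}=:\bar{d}$, so the robustifying term $\taskIntegralDamping\desOutDot$ acts only inside $\FdesTaskOutRobust$ and does \emph{not} enter the perturbation $\bm{d}$; this decoupling is what the rest of the argument exploits.

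\emph{Step 2 (Lyapunov ultimate bound).} Since $\FdesTaskOutRobust$ is Hurwitz, for any $\mathbf{Q}=\mathbf{Q}\tp\succ0$ there is a unique $\mathbf{P}=\mathbf{P}\tp\succ0$ solving $\FdesTaskOutRobust\tp\mathbf{P}+\mathbf{P}\FdesTaskOutRobust=-\mathbf{Q}$. Taking $V(\desTaskOut)=\desTaskOut\tp\mathbf{P}\desTaskOut$, along solutions $\dot V=-\desTaskOut\tp\mathbf{Q}\desTaskOut+2\desTaskOut\tp\mathbf{P}\BdesTaskOut\bm{d}\le-\lambda_{\min}(\mathbf{Q})\|\desTaskOut\|^2+2\|\mathbf{P}\BdesTaskOut\|\,\bar{d}\,\|\desTaskOut\|$, hence $\dot V<0$ whenever $\|\desTaskOut\|>r:=2\|\mathbf{P}\BdesTaskOut\|\bar{d}/\lambda_{\min}(\mathbf{Q})$. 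By \cite[Theorem~4.18]{khalil2002NonLinearSystems}, every solution enters in finite time and thereafter remains in the sublevel set $\Omega:=\{\desTaskOut\in H:V(\desTaskOut)\le\lambda_{\max}(\mathbf{P})r^2\}$, which contains the origin; as this holds for all initial conditions and all admissible $\taskTrackErr$, $\desTaskOut$ is RGUAS-$\Omega$.

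\emph{Step 3 (practical stability — the delicate step).} It remains to show $\Omega$ can be shrunk below any prescribed size by increasing $\lambda_{\min}(\taskIntegralDamping)$. The idea is that $\taskDamping$ no longer has to secure the Hurwitz property of $\FdesTaskOutRobust$ — that is now delivered by $\taskDamping+\taskIntegralDamping$ — so $\taskDamping$ may be taken small (and diagonal), which makes $\bar{d}$ essentially $\|\taskStiffness\|\,\bar{\phi}$ and, with $\taskStiffness,\taskIntegralDamping$ diagonal, decouples $\FdesTaskOutRobust$ into $m$ scalar second-order blocks $[\,0\ 1;\,-K_{s,i}\ -(K_{d,i}+K_{i,i})\,]$. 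A large $K_{i,i}$ renders the velocity channel arbitrarily fast, so the velocity component of $\Omega$ is driven to zero while the position component shrinks toward the residual level fixed by the unavoidable position mismatch $\boldsymbol{\phi}_a$; choosing, per block, a $\taskIntegralDamping$-adapted $\mathbf{Q}$ one bounds $\|\mathbf{P}\BdesTaskOut\|/\lambda_{\min}(\mathbf{Q})$ explicitly in terms of $\lambda_{\min}(\taskStiffness)$ and $\lambda_{\min}(\taskIntegralDamping)$ and concludes $\mathrm{diam}(\Omega)\to0$ as $\lambda_{\min}(\taskIntegralDamping)\to\infty$ with $\taskDamping$ held small. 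The main obstacle is precisely this quantitative estimate: a naive $\mathbf{Q}$ lets $\|\mathbf{P}\|$ grow with $\taskIntegralDamping$ and cancels the benefit of the faster dynamics, so one needs either a carefully $\taskIntegralDamping$-dependent weighting or a two-time-scale (singular-perturbation) reading separating the position and velocity subsystems to obtain a bound on $\Omega$ that is genuinely monotone decreasing in $\taskIntegralDamping$; absorbing a general non-diagonal $\taskDamping$ as a vanishing perturbation once it is taken small is then only bookkeeping.
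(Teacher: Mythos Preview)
Your Steps~1 and~2 coincide with the paper's proof: the same closed-loop form $\desTaskOutDot=\FdesTaskOutRobust\desTaskOut-\BdesTaskOut\taskGains\taskTrackErr$ (your $\bm d$ is precisely $-\taskGains\taskTrackErr$ with $\taskGains=[\taskStiffness\ \taskDamping]$) and the same quadratic-Lyapunov ultimate-boundedness argument via \cite[Theorem~4.18]{khalil2002NonLinearSystems}.

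The difference lies in Step~3. The paper does not take $\taskDamping$ small or invoke a two-time-scale reading; it simply commits to the $\taskIntegralDamping$-adapted weighting you list as one option, choosing $\mathbf Q_\desTaskOut=\mathrm{diag}(\taskIntegralDamping,\taskIntegralDamping)$ in the Lyapunov equation, and then writes the ultimate bound as $\varrho=\sqrt{\overline\lambda(\mathbf P_\desTaskOut)/\underline\lambda(\mathbf P_\desTaskOut)}\cdot 2\overline\lambda(\mathbf P_\desTaskOut)\|\taskGains\|\|\taskTrackErr\|_\infty/(\vartheta\,\underline\lambda(\taskIntegralDamping))$, asserting that this can be made arbitrarily small through $\taskIntegralDamping$. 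The paper does \emph{not} carry out the quantitative estimate you flag as the main obstacle---how $\overline\lambda(\mathbf P_\desTaskOut)$ scales as $\taskIntegralDamping$ grows---and in fact concedes in the main text that \eqref{eq:cdt on norm - Lv} ``does only prove the existence'' of suitable gains ``without providing a constructive method.'' So your Step~3 is more cautious than the paper's own treatment, and your diagnosis that a naive choice lets $\|\mathbf P\|$ grow and cancel the $1/\underline\lambda(\taskIntegralDamping)$ factor is exactly the point the paper leaves unexamined. Your additional maneuver of shrinking $\taskDamping$ and decoupling into scalar second-order blocks is not used in the proof proper, though the paper exploits the same design freedom informally in the discussion following the theorem (see~\eqref{eq:heterogeneous feedback with negative Kv}).
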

\begin{proof}
	See \cref{proof:thm hetero feedback}.
\end{proof}
From \cref{prop 2} and \cref{thm:heterogeneous feedback}, $\actTaskOut $ is uniformly ultimately bounded with ultimate bound~$\tilde{\varrho}$. 
Comparatively to \eqref{eq:mu output feedback}, \eqref{eq:heterogeneous feedback mu} penalizes the integral term $\desOutDot(t) \!=\! \int_{0}^{t}\taskCrtlIn(s)ds$ growth ensuring the desired task state $\desTaskOut$ is bounded despite the presence of disturbances and non-modeled dynamics $\taskTrackErr$. This is the key feature for robust stability. 
The ‘\emph{global}’ property  comes from the fact that stability is ensured for every bounded $\taskGains\taskTrackErr$ whereas in~\eqref{eq:des task dyn around eq} it is required to be ‘\emph{sufficiently}’ bounded. Also, the ‘\emph{practical}’ aspect   denotes the ability of the integral gain (that can be tuned independently from the stiffness and damping) to reduce the effect of the perturbation as shown in~\eqref{eq:cdt on norm - Lv} making the residual set $\Omega_\desTaskOut$  arbitrarily small. 
Nevertheless, \eqref{eq:cdt on norm - Lv} does only prove the existence of a set of integral gains $\taskIntegralDamping$ enforcing robust stability without providing a constructive method to compute them. In practice, we do not know the task stiffness that will turn the closed-loop system instable as it depends on the executed task and the robot non-modeled joint-dynamics. Hence, the necessary amount of integral gain for robust stability is not known \emph{a priori}.

To show the effect of the integral gains on the closed-loop system dynamics, let us take the system in \cref{subsubsec:control challenge for tasks} with the instable configuration (see \cref{fig:feedback QP 1DoF - OutFdbk Non robust 1DoF}) and add the integral feedback term.
Figure~\ref{fig:1-DOF}\subref{subfig:1dof-integral0.01} shows that low integral gain values do not achieve robust stability since the condition~\eqref{eq:cdt on norm - Lv} is not satisfied. If the integral gain is increased, robust stability is recovered, as shown in \cref{fig:1-DOF}\subref{subfig:1dof-integral0.1}, but oscillations still exist at the reference target $\hat{q}_{\rm ref}$. These oscillations can be damped by increasing the integral gain to smoothly reach $\hat{q}_{\rm ref}$ while counterbalancing the external perturbation (\cref{fig:1-DOF}\subref{subfig:1dof-integral1}). Figure~\ref{fig:1-DOF}\subref{subfig:1dof-integral2} shows that further increasing the integral gain overdamps the system response. The convergence smoothness is preserved at the expense of a higher settling time.

Another interesting aspect is that  $\FdesTaskOutRobust$ being Hurwitz  implies that  $\taskDamping$ can be chosen negative-definite as long as   $\taskDamping + \taskIntegralDamping$ is positive-definite. Indeed, let us assume  $\taskDamping \!<\!0$ and $\taskIntegralDamping \! =\! \varepsilon|\taskDamping|$ (element-wise) with $\varepsilon \!=\! 1 \!+\!\varepsilon_0, \varepsilon_0>0$ yielding to 
\begin{align}\label{eq:heterogeneous feedback with negative Kv}
	\begin{split}
		\taskCrtlIn &= -\taskStiffness\actOut  -\taskDamping\actOutDot - \taskIntegralDamping\desOutDot, \\
		&= -\taskStiffness\actOut  -\varepsilon_0|\taskDamping|\desOutDot - |\taskDamping|(\desOutDot- \actOutDot),
	\end{split}		
\end{align}
$\varepsilon_0$ is tuned to meet robustness condition~\eqref{eq:cdt on norm - Lv}. \cref{eq:heterogeneous feedback with negative Kv} shows that $\desOutDot$ is enforced to converge to zero while drifting toward $\actOutDot$ due to $(\desOutDot- \actOutDot)$ feedback term.  This is similar to the leaky-integrator proposed in~\cite{hopkins2015icra} when \eqref{eq:heterogeneous feedback with negative Kv} is performed in joint-space (i.e., posture task). Our claim is that performing a \emph{task-space} integral feedback is more intuitive and enables a better understanding of the underlying conditions on task gains tuning while not affecting the QP solution optimality\footnote{In~\cite{hopkins2015icra}, feedback term related to $(\desOutDot- \actOutDot)$ is added to $\genDesConfDDot$ post QP computation, and thereby it may not be feasible. Moreover, only experimental observations have been reported about the effect of the joint-space leaky integrator gain  without any explicit condition on its values.}. It also enables robust stability of each task in multi-objective control case, see~\cref{subsec:Robust QP Lipschitz continuity} (\cref{prop:relaxed heterogeneous feedback}).

The term $(\desOutDot- \actOutDot)$ in \eqref{eq:heterogeneous feedback with negative Kv} induces compliance w.r.t robot response in terms of velocity. Figure~\ref{fig:1-DOF-compliance} shows this behavior. When the external disturbance is applied, the desired task velocity $\dot{\hat{q}}_{\rm d}$ immediately converges to zero in \cref{fig:1-DOF-compliance}\subref{subfig:1dof-integral1-WoCompliance}. Conversely, $\dot{\hat{q}}_{\rm d}$ drifts first toward $\dot{\hat{q}}$ (since the amplitude of $(\dot{\hat{q}}_{\rm d}-\dot{\hat{q}})$ is predominant) then converges back to zero in \cref{fig:1-DOF-compliance}\subref{subfig:1dof-integral1-WCompliance}. In terms of position, the desired task position $\hat{q}_{\rm d}$ slightly drifts toward $\hat{q}$ then counterbalances the external disturbance. This feature is exploited in \cref{subsec:expreiment_robust safety}.

Note that by setting $\taskIntegralDamping \!=\!\mathbf{0}$, the output feedback~\eqref{eq:mu output feedback} is recovered. Namely, the proposed approach does not constitute a substantial modification of the QP controller. 

In the next subsection, we take inspiration from~\eqref{eq:heterogeneous feedback mu} to formulate Robust ECBF (RECBF) to enforce set robust stability. 
\begin{figure}[!htb]
	\centering
	\subfloat[$\varepsilon=0.01$]{
		\includegraphics[width=0.5\columnwidth]{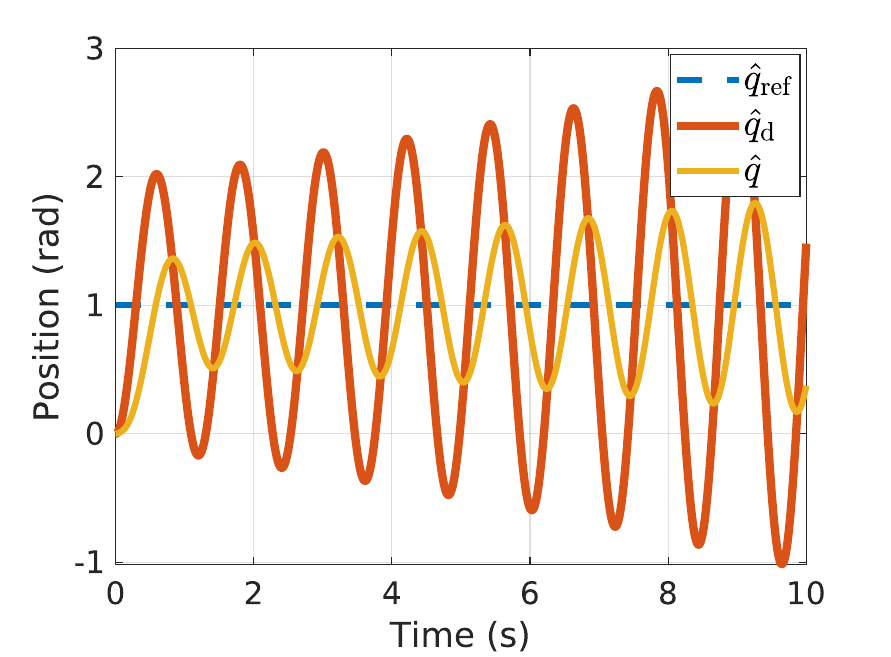}
		\label{subfig:1dof-integral0.01}}
	\subfloat[$\varepsilon=0.1$]{
		\includegraphics[width=0.5\columnwidth]{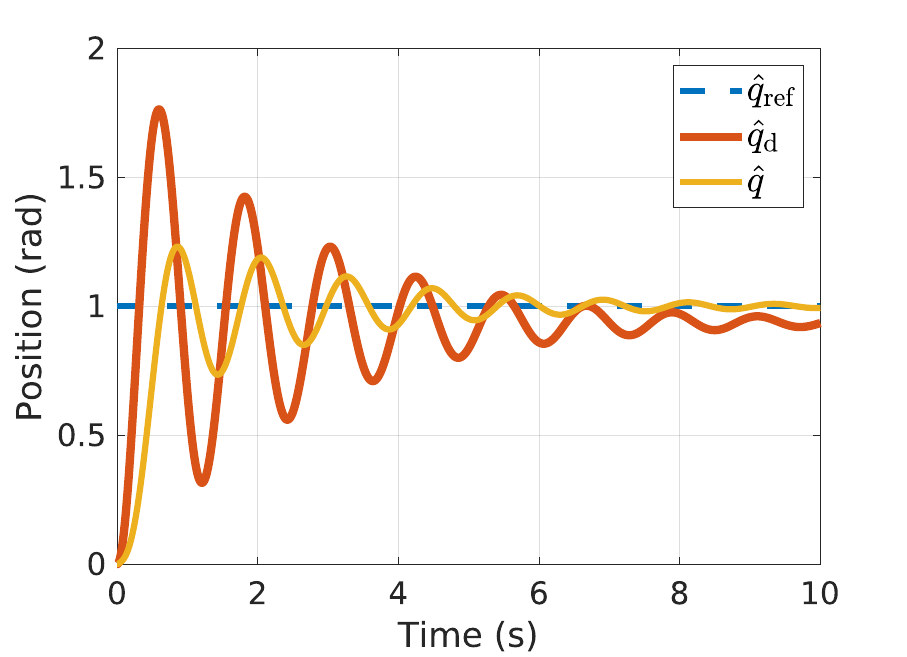}
		\label{subfig:1dof-integral0.1}}
	\hfil
	\subfloat[$\varepsilon=1$]{
		\includegraphics[width=0.5\columnwidth]{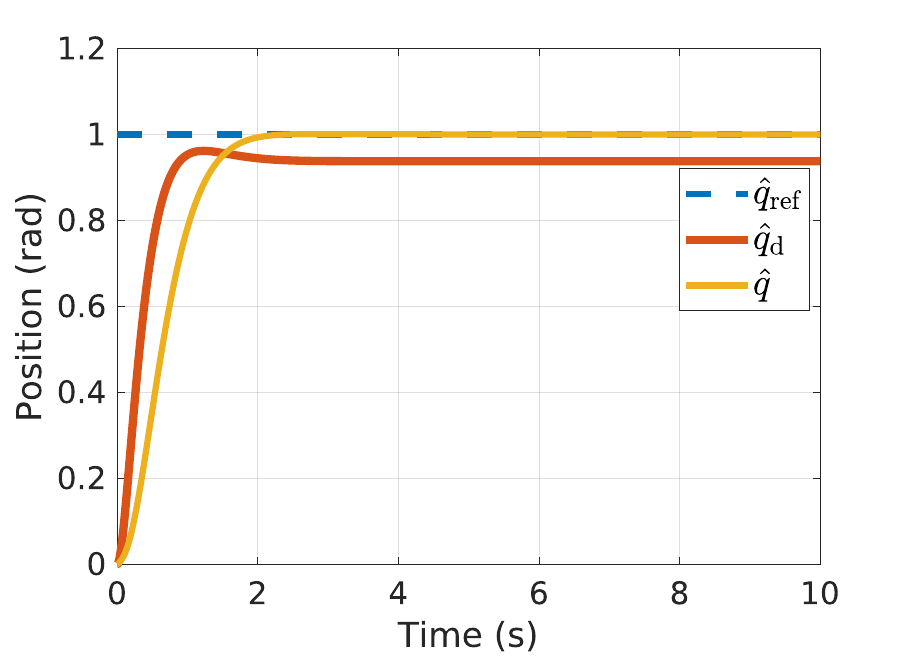}
		\label{subfig:1dof-integral1}}
	\subfloat[$\varepsilon=2$]{
		\includegraphics[width=0.5\columnwidth]{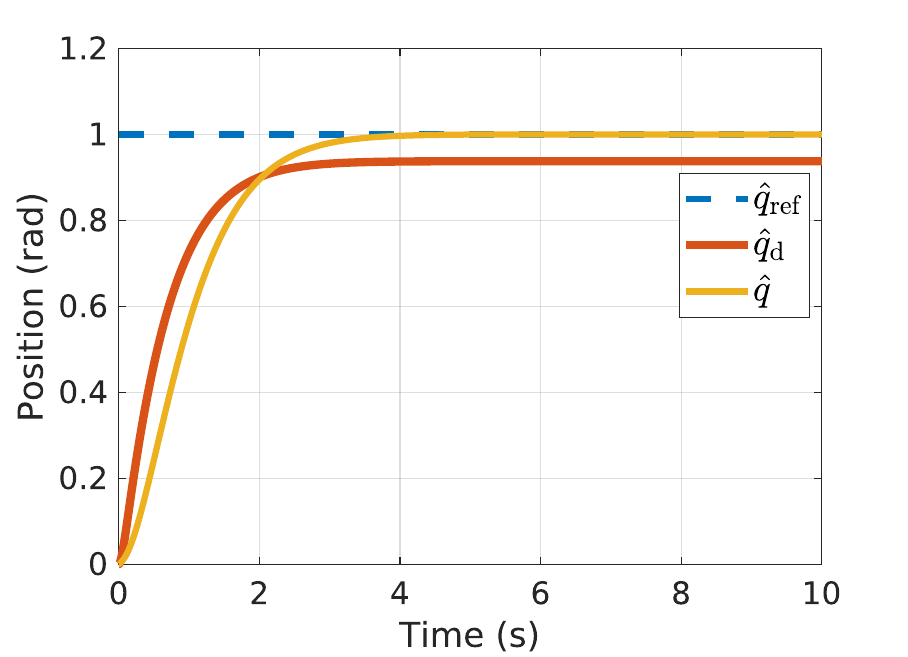}
		\label{subfig:1dof-integral2}}
	\caption{System~\eqref{eq:1DoF example} response, with $a_i$ of System 1 in \cref{tab:parameters} and $\tau_{\rm l} = 5$~N.m, under heterogeneous feedback~\eqref{eq:heterogeneous feedback mu}. The integral gain $K_{\rm i}=\varepsilon K_{\rm d}$. This choice follows the fact that both $K_{\rm i}$ and $K_{\rm d}$ act on velocity terms.} \label{fig:1-DOF}
\end{figure}
\begin{figure}[!htb]
\centering
\subfloat[]{
	\includegraphics[width=0.5\columnwidth]{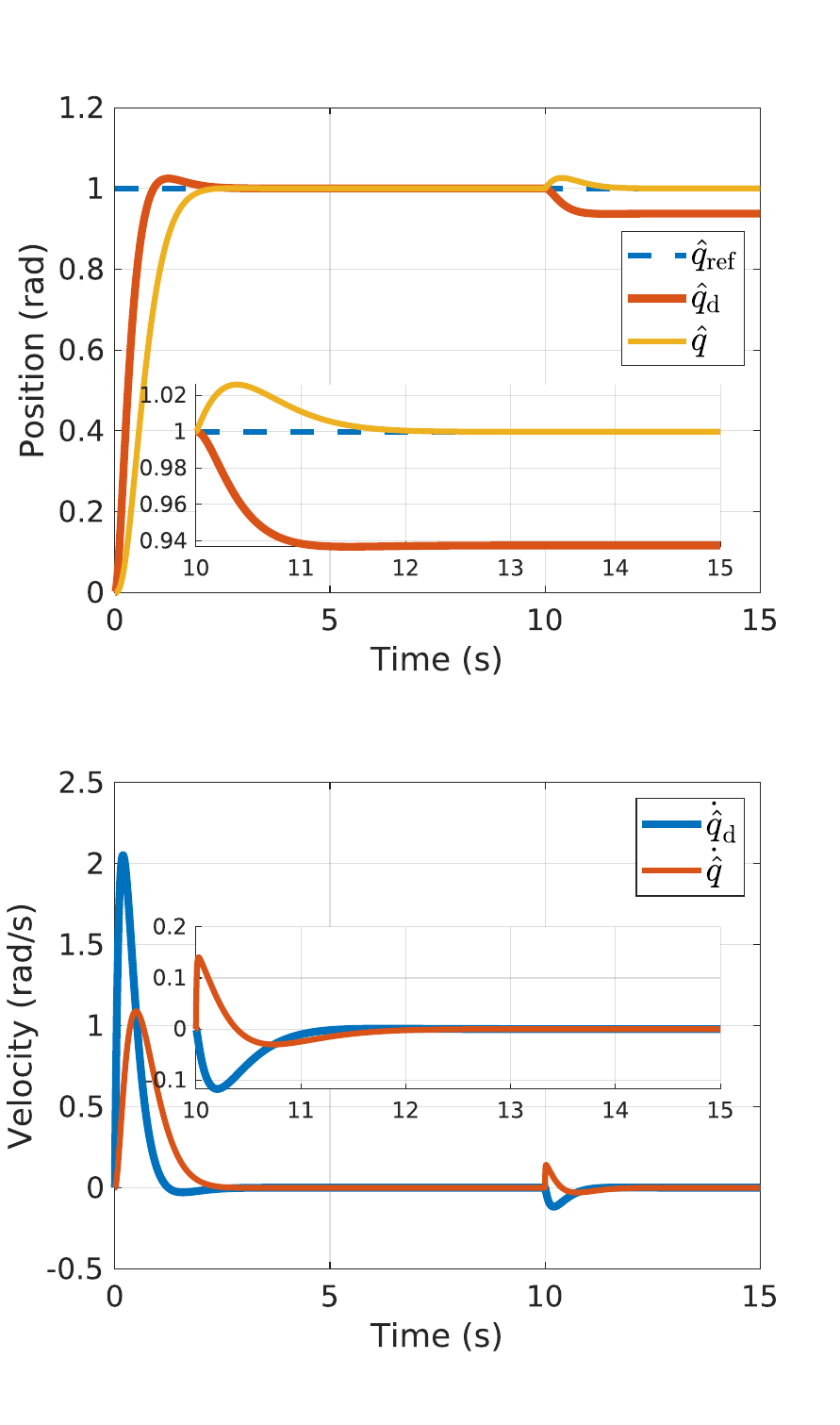}
	\label{subfig:1dof-integral1-WoCompliance}}
\subfloat[]{
	\includegraphics[width=0.5\columnwidth]{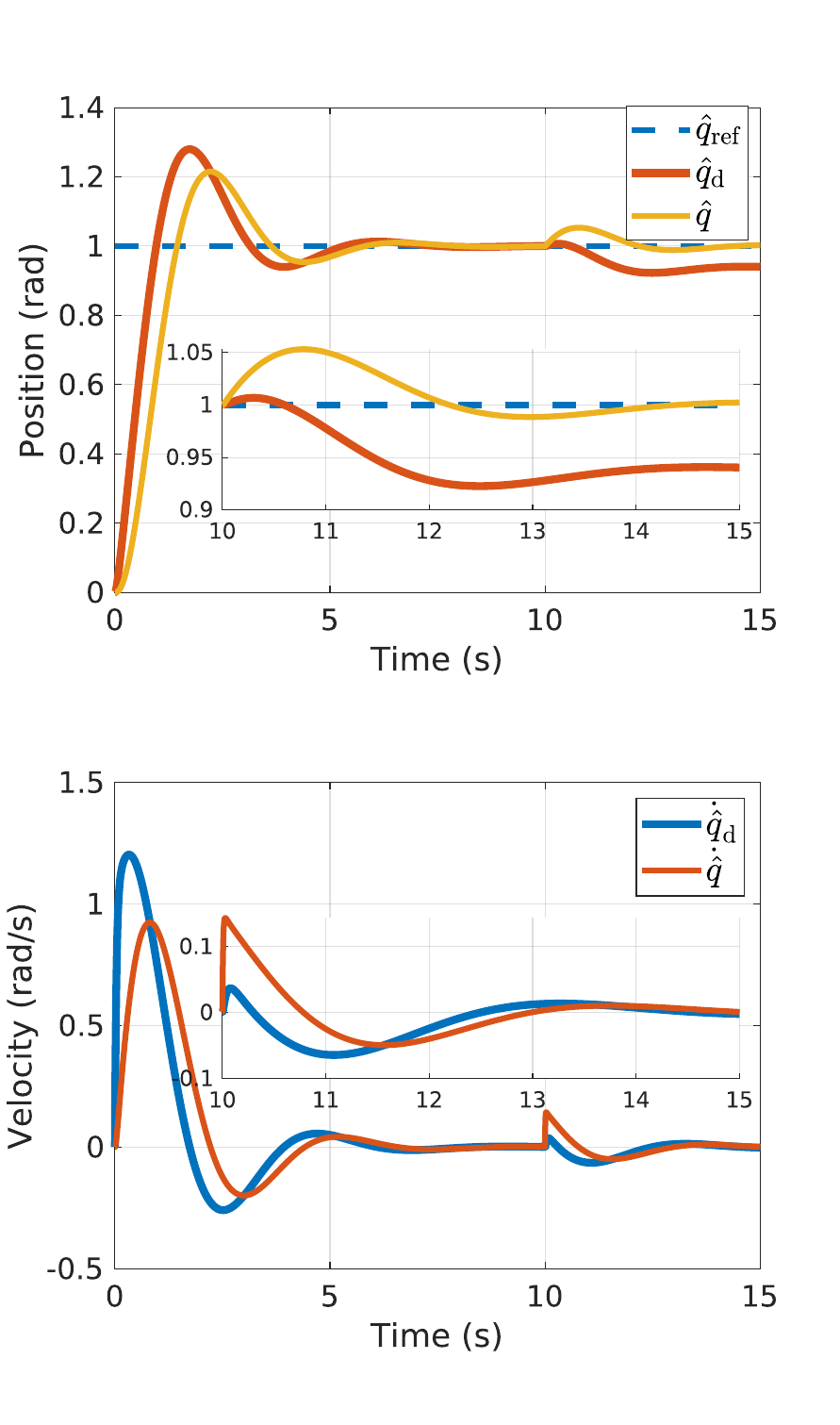}
	\label{subfig:1dof-integral1-WCompliance}}
\caption{System~\eqref{eq:1DoF example} response, with $a_i$ of System 1 in \cref{tab:parameters}, $\tau_{\rm l} = 5$~N.m is applied at $t=10$~s, under heterogeneous feedback~\eqref{eq:heterogeneous feedback mu}. \subref{subfig:1dof-integral1-WoCompliance} $\mu = \ddot{\hat{q}}_{\rm d} = -K_{\rm s}\hat{q} -K_{\rm d}\dot{\hat{q}} - K_{\rm i}\dot{\hat{q}}_{\rm d}$, $K_{\rm s} = 30$, $K_{\rm d}=2\sqrt{K_{\rm s}}$, $K_{\rm i} = K_{\rm d}$. \subref{subfig:1dof-integral1-WCompliance} $\mu = \ddot{\hat{q}}_{\rm d} = -K_{\rm s}\hat{q} - K_{\rm i}\dot{\hat{q}}_{\rm d} - |K_{\rm d}|\left(\dot{\hat{q}}_{\rm d}-\dot{\hat{q}}\right)$, $K_{\rm s} = 30$, $K_{\rm d}=-1.8\sqrt{K_{\rm s}}$, $K_{\rm i} = 3.2\sqrt{K_{\rm s}}$.}
\label{fig:1-DOF-compliance}
\end{figure}

\subsection{Set Robust  Stability Formulation}	\label{subsec:Robust constraint formulation}
\begin{figure}[!htb]
\centering
\includegraphics[width=0.5\columnwidth]{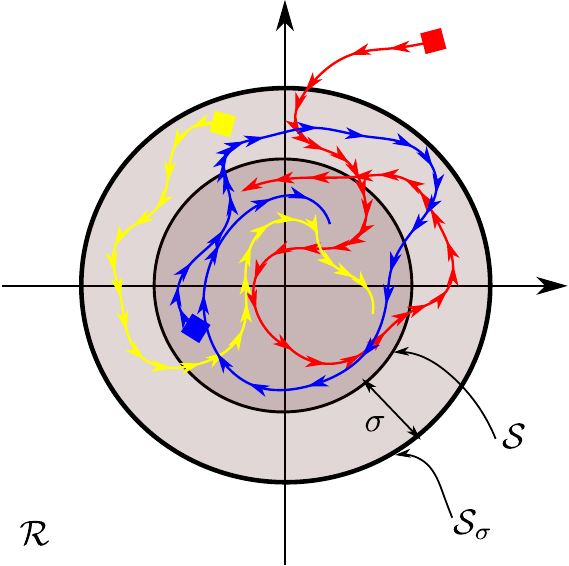}
\caption{The sets $\setS$, $\setS_\sigma$ and $\setR$. $\setR$ is open, $\setS_\sigma$ is asymptotically stable and forward invariant, and $\setS$ is robustly stable. If $\sigma=0$, $\setS$ and $\setS_\sigma$ coincide. The colored trajectories denote three possible cases depending on the initial condition: in $\setR$ (red), in $\setS_\sigma$ (yellow), and in $\setS$ (blue): the red one converges to $\setS_\sigma$ and remains inside because of the set asymptotic stability; the yellow one cannot go out of $\setS_\sigma$ because it is forward invariant; finally the blue one can slightly go out of $\setS$ but remains inside $\setS_\sigma$ (robust stability).}
\label{fig:Sets}
\end{figure}
\begin{figure}[!htb]
\centering
\subfloat[$\varepsilon=0.02$]{
\includegraphics[width=0.5\columnwidth]{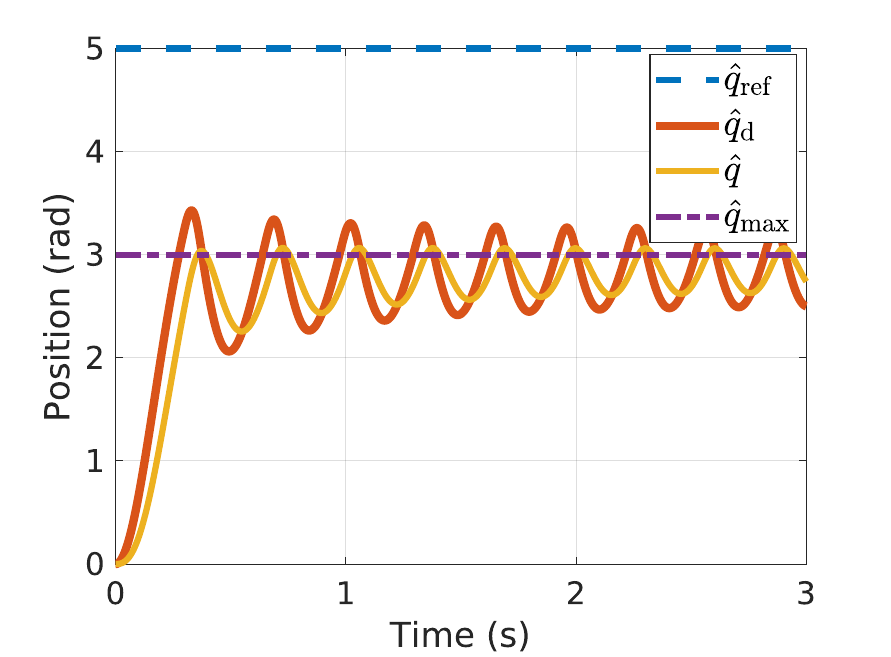}
\label{subfig:1dof-RECBF-integral0.02}}
\subfloat[$\varepsilon=0.2$]{
\includegraphics[width=0.5\columnwidth]{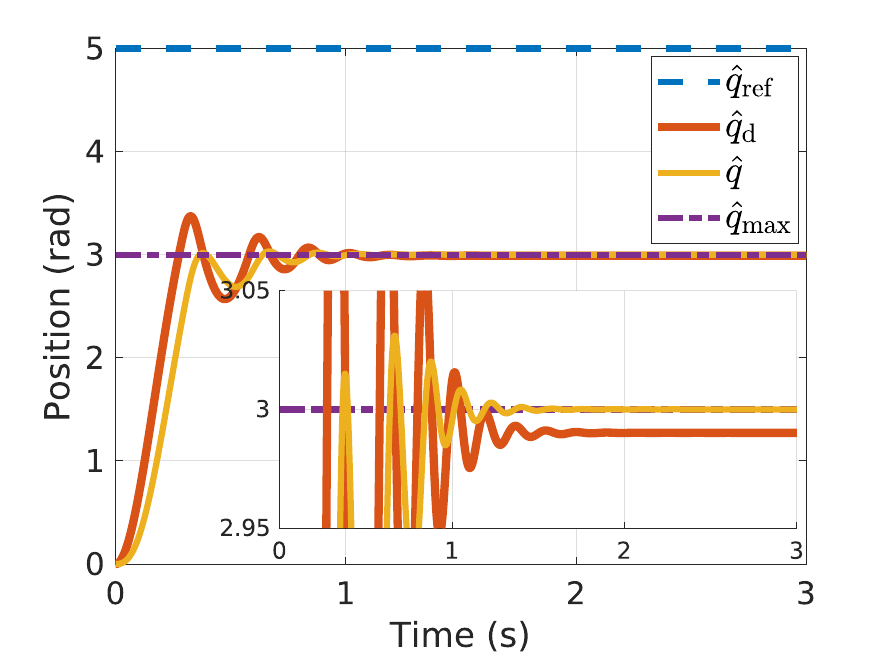}
\label{subfig:1dof-RECBF-integral0.2}}
\hfil
\subfloat[$\varepsilon=2$]{
\includegraphics[width=0.5\columnwidth]{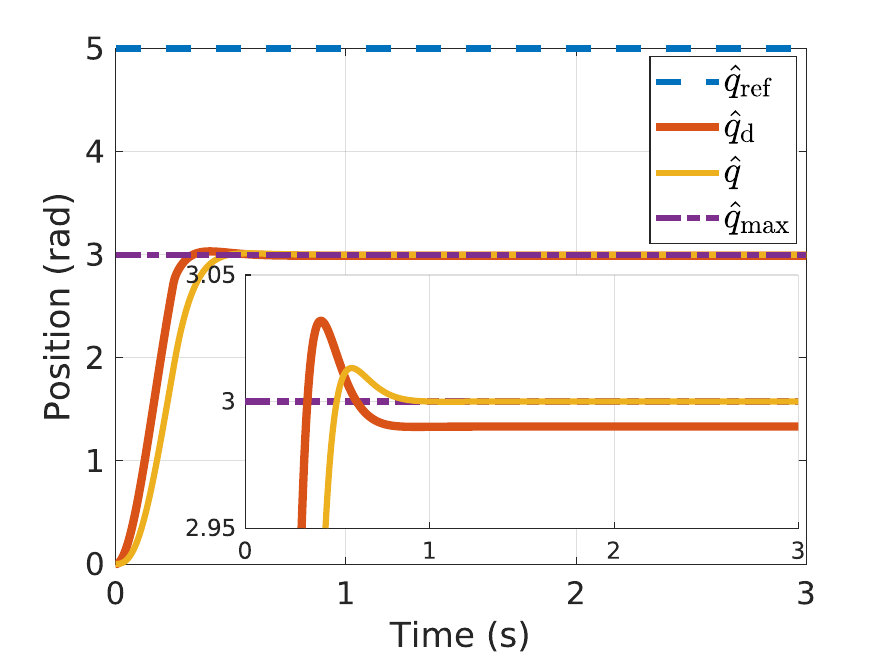}
\label{subfig:1dof-RECBF-integral2}}
\subfloat[$\varepsilon=5$]{
\includegraphics[width=0.5\columnwidth]{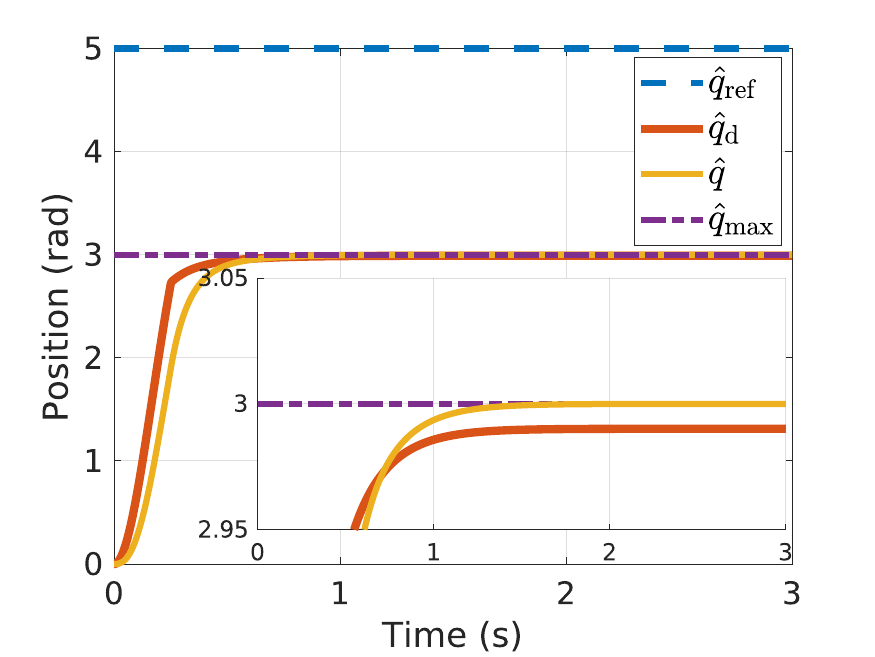}
\label{subfig:1dof-RECBF-integral5}}
\caption{System~\eqref{eq:1DoF example} response with $a_i$ parameters of System~2 in \cref{tab:parameters} with $\tau_{\rm l}=5$ N.m, under RECBF~\eqref{eq:RECBF formulation} with $\constraintIntegralDamping=\varepsilon\constraintDamping$.  }
\label{fig:RECBF}
\end{figure}

Let us define the sets $\setC_\sigma,\setCd_{\sigma}\subset\mathbb{R}^{13+2n}$ with $\sigma\geq0$ 
\begin{align}
\label{eq:act C_sigma}\setC_\sigma &=\left\{\genActJointDyn\in \mathbb{R}^{13+2n}:\bfunc+ \sigma\geq0\right\}, \\
\label{eq:des C_sigma}\setCd_{\sigma} &=\left\{\genDesJointDyn\in \mathbb{R}^{13+2n}:\desBfunc+ \sigma\geq0\right\}.
\end{align}

Before introducing the main result of this subsection, we define set robust stability and RECBF.  The former is illustrated in \cref{fig:Sets}.  
\begin{definition}\label{def:RS-sigma}
A closed set $\setS\subset\mathbb{R}^{13+2n}$  is said to be  \emph{robustly stable} for a forward complete system~\eqref{eq:double integrator} if $\exists \; \sigma\geq0$, a closed and forward invariant set $\setS_\sigma\!\subset\!\mathbb{R}^{13+2n}$, and an open $\setR\!\subseteq\!\mathbb{R}^{13+2n}$ with $\setS\!\subseteq\!\setS_\sigma\!\subset\!\setR$ such that $\setS_\sigma$ is asymptotically stable.  
\end{definition}
Let us now define RECBF to enforce set robust  stability.
\begin{definition}\label{def:RECBF}
Given a set $\setCd\subset\mathbb{R}^{13+2n}$ 
defined as the superlevel set of a 2-times continuously differentiable function $\desBfunc:\mathbb{R}^{13+2n}\rightarrow\mathbb{R}$, then $\desBfunc$ is a RECBF if there exists $\bfuncCrtlIn\in\mathbb{R}$ such that for~\eqref{eq:double integrator}, 
\begin{equation}
\label{eq:RECBF constraint}
\underset{\UgenDesJointDyn\in {\cal U}}{\sup} \left[\desJacBfuncDot\genDesConfDot + \desJacBfunc \UgenDesJointDyn + \bfuncCrtlIn\right]\geq0,
\end{equation} 
results in $\setCd$ is robustly stable.
\end{definition}
\begin{figure}
\centering
\includegraphics[width=0.7\columnwidth]{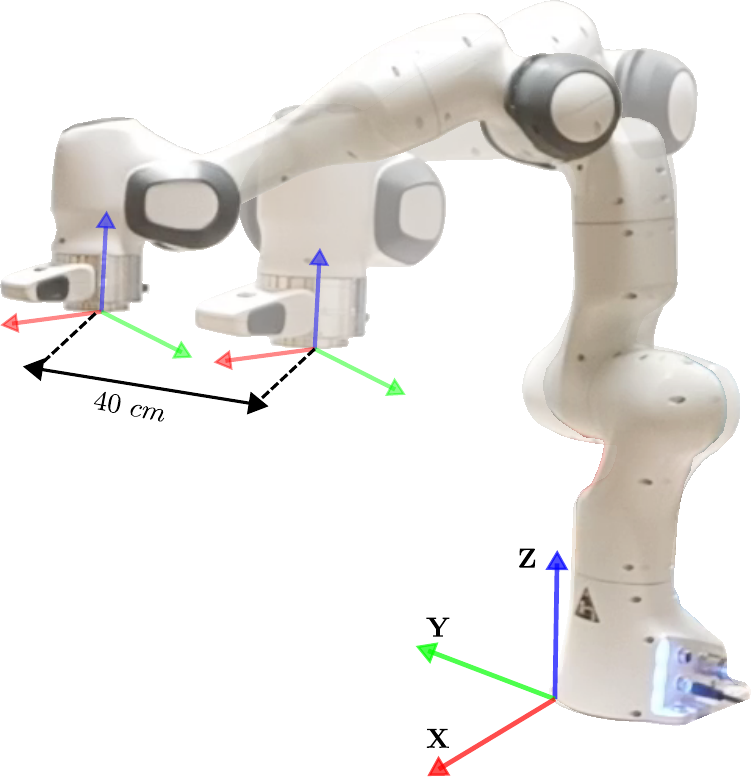}
\caption{Two superposed snapshots showing Panda end-effector converging to the two defined set-point pose targets.}
\label{fig:panda_snap}
\end{figure}
Inspiring from \cref{thm:heterogeneous feedback}, \cref{thm:RECBF} proposes a formulation for $\bfuncCrtlIn$ that guarantees the robust stability of $\setCd$. 
\begin{theorem}\label{thm:RECBF} Let us assume $\bfuncTrackErr $ bounded. If
\begin{equation}\label{eq:RECBF formulation}
\bfuncCrtlIn\geq-\constraintGainsPsi\bfuncPsi, \ \constraintGainsPsi=\begin{bmatrix}
\constraintStiffness & \constraintDamping & \constraintIntegralDamping
\end{bmatrix},
\end{equation}
where $\constraintIntegralDamping>0$ and  $\check{\mathbf{K}}^{\bfunc} = \begin{bmatrix}
\constraintStiffness & \constraintDamping + \constraintIntegralDamping
\end{bmatrix}\in\mathbb{R}^{1\times2}$ is chosen according to  ECBF definition in \cref{subsubsec:ECBF}, then there exists $\constraintIntegralDamping$ such that $\desBfunc$ in \eqref{eq:desBfunc def} is a RECBF, and the set $\setCd$ is robustly stable. 
\end{theorem}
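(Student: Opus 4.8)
The plan is to transpose the proof of \cref{thm:heterogeneous feedback} from the task coordinates to the barrier coordinates $\desBfuncOut = \begin{bmatrix}\desBfunc & \desBfuncDot\end{bmatrix}\tp$, and then to package the resulting estimate in the language of \cref{def:RS-sigma,def:RECBF}. First I would write the closed-loop constraint dynamics in the regime where the QP keeps the RECBF inequality tight: combining the safety constraint~\eqref{eq:safety constraint} with $\bfuncCrtlIn = -\constraintGainsPsi\bfuncPsi$ from~\eqref{eq:RECBF formulation} and the relative-degree-two identity~\eqref{eq:barrier function relative degree} yields $\desBfuncDDot \geq -\constraintStiffness\bfunc - \constraintDamping\bfuncDot - \constraintIntegralDamping\desBfuncDot$. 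Using~\eqref{eq:bfunc track err} to substitute the actual constraint state, i.e. replacing $\bfunc$ and $\bfuncDot$ by $\desBfunc$ and $\desBfuncDot$ plus the components of the (assumed bounded) Taylor remainder $\bfuncTrackErr$, turns this into $\desBfuncDDot \geq -\KBfuncEq\desBfuncOut - \constraintGains\bfuncTrackErr$ with $\constraintGains = \begin{bmatrix}\constraintStiffness & \constraintDamping\end{bmatrix}$; together with $\desBfuncOutDot = \AdesBfuncOut\desBfuncOut + \BdesBfuncOut\desBfuncDDot$ this is a perturbed linear closed loop whose nominal matrix is $\FdesBfuncOutRobust = \AdesBfuncOut - \BdesBfuncOut\KBfuncEq$, Hurwitz by the standing hypothesis that $\KBfuncEq$ satisfies the ECBF gain specification of \cref{subsubsec:ECBF}.

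Next I would apply the Comparison Lemma~\cite[Lemma~3.4]{khalil2002NonLinearSystems}, exactly as in the derivation preceding~\eqref{eq:ECBF constraint}, against the linear reference system $\dot{\boldsymbol{\zeta}} = \FdesBfuncOutRobust\boldsymbol{\zeta} - \BdesBfuncOut\constraintGains\bfuncTrackErr$ with $\boldsymbol{\zeta}(t_0) = \desBfuncOut(t_0)$, obtaining $\desBfunc(t) \geq \CdesBfuncOut\boldsymbol{\zeta}(t)$ whenever $\desBfunc(t_0) \geq 0$ — the inequality direction of~\eqref{eq:safety constraint} being precisely what the lemma consumes, so no separate worst-case step is required. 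Since $\FdesBfuncOutRobust$ is Hurwitz there are $\kappa,\lambda>0$ with $\norm{\exp(\FdesBfuncOutRobust t)} \leq \kappa e^{-\lambda t}$, whence $\norm{\boldsymbol{\zeta}(t)} \leq \kappa e^{-\lambda(t-t_0)}\norm{\desBfuncOut(t_0)} + \tfrac{\kappa}{\lambda}\norm{\BdesBfuncOut}\,\norm{\constraintGains}\,\bfuncDeltaMax$, where $\bfuncDeltaMax := \sup_t \norm{\bfuncTrackErr(t)}$; hence $\desBfunc(t)$ is bounded below by an exponentially decaying term minus $\sigma := \tfrac{\kappa}{\lambda}\norm{\CdesBfuncOut}\,\norm{\BdesBfuncOut}\,\norm{\constraintGains}\,\bfuncDeltaMax$. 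I would identify this residual level with the inflated set $\setCd_\sigma$ of~\eqref{eq:des C_sigma}; \cref{prop 1} then guarantees $\genDesJointDyn$ stays bounded along these trajectories, so~\eqref{eq:double integrator} is forward complete as required by \cref{def:RS-sigma}.

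It then remains to verify the three items of \cref{def:RS-sigma} with $\setS=\setCd$ and $\setS_\sigma = \setCd_\sigma$: the inclusion $\setCd \subseteq \setCd_\sigma$ is immediate from $\sigma\geq0$; asymptotic stability of $\setCd_\sigma$ inside a suitable open $\setR \supset \setCd_\sigma$ is exactly the exponential convergence of $\desBfunc$ toward the level $-\sigma$ just derived; and forward invariance of $\setCd_\sigma$ follows by re-running the nominal ECBF invariance argument on the shifted barrier $\desBfunc + \sigma$, which inherits admissibility because $\KBfuncEq$ does. This shows $\desBfunc$ meets \cref{def:RECBF} and $\setCd$ is robustly stable; combining with \cref{prop 2} additionally bounds the actual constraint state $\actBfuncOut$. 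The existence of an admissible $\constraintIntegralDamping>0$ is inherited from \cref{thm:heterogeneous feedback}: holding $\KBfuncEq = \begin{bmatrix}\constraintStiffness & \constraintDamping + \constraintIntegralDamping\end{bmatrix}$ fixed at an ECBF-admissible value while increasing $\constraintIntegralDamping$ lets $\constraintDamping$ shrink (toward zero, or negative), which lowers $\norm{\constraintGains}$ and hence $\sigma$, giving the practical aspect — the inflation $\setCd_\sigma \setminus \setCd$ made arbitrarily thin.

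I expect the forward invariance of $\setCd_\sigma$ (the yellow-trajectory case of \cref{fig:Sets}) to be the delicate point: the crude norm estimate above carries a transient proportional to $\norm{\desBfuncOut(t_0)}$, i.e. it also sees $\desBfuncDot(t_0)$, whereas $\setCd_\sigma$ constrains only $\desBfunc$, so that estimate alone does not forbid a momentary dip of $\desBfunc$ below $-\sigma$. Ruling this out requires exploiting the structural ECBF property of $\KBfuncEq$ — real-negative eigenvalues of $\FdesBfuncOutRobust$ together with the admissibility condition that the nominal $\desBfunc$-response does not undershoot $0$ from $\setCd$ — so that under the worst-case constant remainder $\norm{\bfuncTrackErr}\equiv\bfuncDeltaMax$ the shifted barrier $\desBfunc + \sigma$ likewise does not undershoot $0$. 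This monotonicity-type argument, rather than the Hurwitz estimate, is where the proof must be made carefully.
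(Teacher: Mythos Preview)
Your route is different from the paper's and largely sound, but it is worth seeing where the two diverge. The paper does \emph{not} run the Comparison-Lemma cascade on the perturbed barrier inequality. Instead it converts~\eqref{eq:RECBF formulation} into an equality by introducing a bounded nonnegative slack $\bfuncDelta(t)$, so that $\bfuncCrtlIn = -\constraintGainsPsi\bfuncPsi + \bfuncDelta(t)$; this turns~\eqref{eq:Bfunc transverse dyn} into the closed loop $\desBfuncOutDot = \FdesBfuncOutRobust\desBfuncOut + \BdesBfuncOut(-\constraintGains\bfuncTrackErr + \bfuncDelta(t))$ and then runs the \emph{same} Lyapunov argument as in \cref{thm:heterogeneous feedback}, with the piecewise function $V = \tfrac12\desBfuncOut\tp\PdesBfuncOut\desBfuncOut$ outside $\setCd$ (and $V=0$ inside) and the specific ARE $\FdesBfuncOutRobust\tp\PdesBfuncOut + \PdesBfuncOut\FdesBfuncOutRobust = -\constraintIntegralDamping\mathbf{I}_2$. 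This yields uniform ultimate boundedness of the full state $\desBfuncOut$ with the explicit bound $\sigma$ depending on $\constraintIntegralDamping$, and forward invariance of $\setCd_\sigma$ then comes for free from the Lyapunov sublevel set --- precisely the ``delicate point'' you flag is absorbed by the standard mechanism of~\cite[Theorem~4.18]{khalil2002NonLinearSystems}.

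Two remarks on your version. First, your Comparison-Lemma step $\desBfunc(t)\geq\CdesBfuncOut\boldsymbol{\zeta}(t)$ is not a single application of~\cite[Lemma~3.4]{khalil2002NonLinearSystems} but the ECBF cascade (two scalar comparisons, one per real-negative pole of $\FdesBfuncOutRobust$); you implicitly invoke this, but it should be spelled out since a vector or second-order comparison principle is not what Khalil's Lemma~3.4 states. Second, your practicality argument --- shrinking $\sigma$ by holding $\KBfuncEq$ fixed and letting $\constraintIntegralDamping$ grow so that $\constraintDamping$ and hence $\norm{\constraintGains}$ shrink --- bottoms out at $\norm{\constraintGains}=|\constraintStiffness|>0$ when $\constraintDamping=0$, so with $\kappa,\lambda$ frozen your $\sigma$ has a positive floor; the theorem only asks for \emph{some} $\sigma$, so this does not break the proof, but the ``arbitrarily thin'' claim is not justified by that mechanism alone.
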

\begin{proof}
See \cref{proof:RECBF}.
\end{proof}
By virtue of \cref{prop 2} and following from \cref{thm:RECBF}, $\actBfuncOut$ is uniformly ultimately bounded with ultimate bound $\tilde{\sigma}$ and thereby it converges to asymptotically stable set $\setC_\sigma$ defined as~\eqref{eq:act C_sigma}. Hence, the set $\setC$ is rendered robustly stable as well.
\begin{remark}
Conceptually, \cref{thm:RECBF} proposes the same solution as \cref{thm:heterogeneous feedback} and that is why the robustness conditions \eqref{eq:cdt on norm - Lv} and~\eqref{eq:cdt on Lv RECBF} are similar. This is because \cref{thm:RECBF} does only perceive the `origin' slightly different from \cref{thm:heterogeneous feedback}: the former considers it as a `set' whereas the latter perceives it as a `point'. This fact can also be seen from \cref{fig:RGUAS,fig:Sets}: shrinking the set $\setS$ to the origin in the latter directly brings us to the former. Hence, the task gains properties discussed for~\eqref{eq:heterogeneous feedback mu} apply as well to the RECBF constraint gains in~\eqref{eq:RECBF formulation}.
\end{remark}


As in \cref{fig:1-DOF}, \cref{fig:RECBF} shows the conservativeness of tuning the RECBF integral gain $\constraintIntegralDamping$, where the same scenario in \cref{subsubsec:control challenge for constraint} is performed. Low $\constraintIntegralDamping$ values do not ensure robust stability. Increasing $\constraintIntegralDamping$ leads to meet the robustness condition~\eqref{eq:cdt on Lv RECBF}, and thereby damp the oscillations at the set boundary. Further increasing $\constraintIntegralDamping$  helps to remove the oscillations but at the expense of slower convergence to the boundary. The latter requires high deceleration amount especially that RECBF constraint~\eqref{eq:RECBF formulation} is only inserted in QP at the neighborhood of $\setC$ boundary\footnote{The number of rows of the constraint matrix is increased in consequence. This is very usual/common in our controller which is based on a Finite State Machine (FSM) which role is to elect the tasks and constraints depending on the current goals.}.


\subsection{Weight-Prioritized Multi-Objective Robust QP Formulation}\label{subsec:Robust QP Lipschitz continuity}
By replacing~\eqref{eq:heterogeneous feedback mu} and~\eqref{eq:RECBF formulation} in~\eqref{eq:QP task} and~\eqref{eq:safety constraint}, respectively, the weight-prioritized multi-objective robust QP writes as
\begin{subequations}\label{eq:robust QP for combination}
\begin{align}
\begin{split}\label{eq:robust QP cost function}
\left[\UgenDesJointDyn^*,\force^*\right]&\!=\;\!\arg\min \frac{\weight_0}{2}\!\norm{\selectMat\UgenDesJointDyn\!+ \! \boldsymbol{\kappa}(\actJointDyn)}^2 +\\	&\frac{1}{2}\!\sum_{j=1}\!\weight_{j}\norm{\desJacTaski \UgenDesJointDyn\! + \!\desJacDotTaski \genDesConfDot \! -\! \fkinRefDDot^j(t) \!+\!\highlight{\taskGainsPsii\taskPsii}}^2 
\end{split}	\\
\text{s.t.}&\;\;\; \eqref{eq:dynamics constraint},\eqref{eq:contact forces}\\ 
&-\desJacBfunc\UgenDesJointDyn\!\leq\! \desJacBfuncDot\genDesConfDot \!+\! \highlight{\constraintGainsPsi\bfuncPsi}\\ \label{subeq:contact constraint}
&\;\;\;\; \desJacContact\UgenDesJointDyn  = - \desJacContactDot\genDesConfDot   -k\desJacContact\genDesConfDot
\end{align}
\end{subequations} with $j$ indexing the task $\fkini$. From the structural standpoint, the proposed robust QP~\eqref{eq:robust QP for combination} is similar to the classical weighted-prioritized templated QP~\eqref{eq:QP for combination}. Our contribution is in modifying the task and constraint formulations based on the sum of stiffness and damping terms (computed based on the robot measurements) to include feedback integral terms (computed based on the desired robot state).
Our robust formulation does not call for a substantial modification in the QP formulation as in~\cite{feng2015journalOfFieldRobotics}, nor for much additional computation as the integral terms are obtained by forward velocity, see~\cref{fig:whole control scheme}.

Relatively to~\eqref{eq:QP task},~\eqref{eq:robust QP cost function}  shows that the robust task formulation~\eqref{eq:heterogeneous feedback mu} is straightforwardly extended to the multi-task case where the integral term corresponding to each task $\fkini$ is added.
Due to the subsequent conflicts that may arise between the different tasks,  $\fkini$ is likely to be achieved partially according to its associated weight $\weighti\!\geq\!0$ and the potential active constraints\footnote{An inequality constraint is active if it is enforced as equality.} in QP~\eqref{eq:robust QP for combination}. This implicit relaxation writes
\begin{equation}\label{eq:heterogeneous feedback relaxed}
\taskCrtlIni = -\taskGainsPsii\taskPsii+ \bm{\delta}^{j}(t),
\end{equation} 
with $\bm{\delta}^{j}(t)\!\in\!\mathbb{R}^{m}$ assumed bounded $\norm{\bm{\delta}^{j}(t)}\!\leq\!{\delta}^{j}_{\max}, \ \forall t \!\geq\!0$. 
\cref{prop:relaxed heterogeneous feedback} generalizes \cref{thm:heterogeneous feedback} to  the multi-task case.
\begin{proposition}\label{prop:relaxed heterogeneous feedback}
Consider~\eqref{eq:heterogeneous feedback relaxed}  such that \cref{thm:heterogeneous feedback} conditions hold. Then, there exists $\taskIntegralDampingi\in\mathbb{R}^{m\times m}$ positive-definite such that $\desTaskOuti$ is practically robustly stable. 
\end{proposition}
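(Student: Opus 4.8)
The plan is to reduce the multi-task problem to the single-task setting already handled by \cref{thm:heterogeneous feedback}. The tasks $\fkini$ interact only through the shared decision variable $\UgenDesJointDyn$ of QP~\eqref{eq:robust QP for combination}, and the relaxed closed-loop input~\eqref{eq:heterogeneous feedback relaxed} has, by construction, already lumped this interaction (together with the effect of the active constraints) into the bounded residual $\bm{\delta}^{j}(t)$, $\norm{\bm{\delta}^{j}(t)}\le{\delta}^{j}_{\max}$. So I would study each desired task state $\desTaskOuti$ in isolation, treating $\bm{\delta}^{j}$ as an \emph{additional} exogenous disturbance superposed on the non-modeled-dynamics term that \cref{thm:heterogeneous feedback} already copes with.

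First I would write the closed-loop dynamics of $\desTaskOuti$. Substituting~\eqref{eq:heterogeneous feedback relaxed} into the task-$j$ instance of~\eqref{eq:des task dyn}, expanding $\taskGainsPsii\taskPsii=\taskStiffnessi\,\actOut^{j}+\taskDampingi\,\actOutDot^{j}+\taskIntegralDampingi\,\desOutDot^{j}$, and using the Taylor-remainder relation~\eqref{eq:DL act task} to replace the actual quantities by the desired ones plus the mapped tracking error, the stiffness and damping contributions recombine exactly as in the proof of \cref{thm:heterogeneous feedback}:
\[
\desTaskOutDot^{j}=\FdesTaskOutRobust^{j}\desTaskOuti-\BdesTaskOut\,\bm{\nu}^{j},\qquad \FdesTaskOutRobust^{j}=\AdesTaskOut-\BdesTaskOut\,\mathbf{\check{K}}^{j},\quad \mathbf{\check{K}}^{j}=\left[\taskStiffnessi\quad \taskDampingi+\taskIntegralDampingi\right],
\]
with $\bm{\nu}^{j}=\left[\taskStiffnessi\quad\taskDampingi\right]\boldsymbol{\eta}^{j}_{\jointTrackErr}-\bm{\delta}^{j}(t)$ gathering the two perturbation sources. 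Since the hypotheses of \cref{thm:heterogeneous feedback} are in force, $\taskTrackErr$ (hence its task-$j$ component $\boldsymbol{\eta}^{j}_{\jointTrackErr}$) is bounded and $\FdesTaskOutRobust^{j}$ is Hurwitz; together with $\norm{\bm{\delta}^{j}(t)}\le{\delta}^{j}_{\max}$ this shows $\bm{\nu}^{j}$ is bounded, with $\sup_{t\ge0}\norm{\bm{\nu}^{j}}$ finite.

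Next I would apply the argument of \cref{thm:heterogeneous feedback} verbatim to this system: pick the quadratic Lyapunov function associated with the Lyapunov equation of the Hurwitz matrix $\FdesTaskOutRobust^{j}$, bound its derivative along the closed loop, and arrive at a robustness condition of the same form as~\eqref{eq:cdt on norm - Lv} with $\sup_{t\ge0}\norm{\bm{\nu}^{j}}$ in place of the pure non-modeled-dynamics bound. Because that condition only requires boundedness of the disturbance and can be fulfilled by enlarging $\taskIntegralDampingi$ — which, by hypothesis, is tuned independently of $\taskStiffnessi$ and does not affect the Hurwitz property of $\FdesTaskOutRobust^{j}$ — there exists a positive-definite $\taskIntegralDampingi$ for which $\desTaskOuti$ converges to a residual set $\Omega_{\desTaskOuti}$ that can be made arbitrarily small, i.e.\ $\desTaskOuti$ is practically robustly stable. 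Invoking \cref{prop 2} then yields that $\actTaskOut^{j}$ is uniformly ultimately bounded.

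The main obstacle I anticipate is not the Lyapunov computation — which is structurally identical to that of \cref{thm:heterogeneous feedback} — but ensuring that the bound ${\delta}^{j}_{\max}$ on the relaxation term does not depend on the integral gain being designed. The selection ``increase $\taskIntegralDampingi$ until the robustness inequality holds'' is legitimate only if $\bm{\delta}^{j}$ does not itself grow with $\taskIntegralDampingi$; this is exactly what the standing assumption $\norm{\bm{\delta}^{j}(t)}\le{\delta}^{j}_{\max}$ (supported by the well-posedness and Lipschitz-continuity considerations of this subsection) guarantees, so I would make that independence explicit at the step where $\taskIntegralDampingi$ is chosen. A secondary check is that, if one ties $\taskIntegralDampingi$ to $\taskDampingi$ as in the numerical examples, the residual-set size still decreases as the scaling grows; this follows from the same estimate as in the single-task proof, since $\FdesTaskOutRobust^{j}$ remains Hurwitz throughout.
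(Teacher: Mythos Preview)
Your proposal is correct and follows essentially the same route as the paper: substitute~\eqref{eq:heterogeneous feedback relaxed} into~\eqref{eq:des task dyn}, obtain the perturbed linear system $\desTaskOutDot^{j}=\FdesTaskOutRobust^{j}\desTaskOuti-\BdesTaskOut\taskGains^{j}\boldsymbol{\eta}^{j}_{\jointTrackErr}+\BdesTaskOut\bm{\delta}^{j}(t)$, reuse the quadratic Lyapunov function and ARE from the proof of \cref{thm:heterogeneous feedback}, and conclude uniform ultimate boundedness with an ultimate bound proportional to $\norm{\taskGains^{j}}\norm{\boldsymbol{\eta}^{j}_{\jointTrackErr}}_\infty+\delta^{j}_{\max}$, hence practical robust stability. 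Your packaging of the two disturbance sources into a single $\bm{\nu}^{j}$ and your remark on the independence of $\delta^{j}_{\max}$ from $\taskIntegralDampingi$ are cosmetic refinements of the paper's argument, not a different approach.
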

\begin{proof}
See \cref{proof:prop3}.
\end{proof}
Note that thanks to \cref{prop:relaxed heterogeneous feedback,thm:RECBF,prop 1}, $\genDesJointDyn$ is bounded after double integration of $\UgenDesJointDyn$ solution of~\eqref{eq:robust QP for combination}.
\begin{remark}\label{rem2}
In~\eqref{eq:robust QP for combination}, only one RECBF constraint is considered. In the more general case, the QP constraints set encompasses:  (i) several RECBFs (joint constraints, collision avoidance, CoM equilibrium region, etc.), and (ii) explicit bounds on $\jointCrtlIn$. Therefore, we highlight two important aspects: First, \cref{thm:RECBF} assumes that ${\cal U}=\mathbb{R}^{6+n}$ which may not hold. Second, since all the constraints have the same level of priority, the QP will be infeasible if these constraints are in conflict (incompatible, i.e., empty feasibility domain)~\cite{decre2009icra,rubrecht2010iros,delprete2018ral}. 
\end{remark}

\section{Experimental Results and Discussion}\label{sec:Experiments}
To assess our robust QP controller and demonstrate its applicability to different use-cases, experiments are conducted with two different robots: a fixed-base 7-DoF robotic arm Panda from Franka Emika, and a (floating-base) 34-DoF humanoid robot HRP-4 from Kawada Robotics. The latter is controlled in position at a frequency of $200$~Hz, whereas the former can be controlled either in position or in velocity modes at a control frequency of $1$~kHz.

Both robots are controlled using the open source code implementation of the QP controller \mcrtc\footnote{\url{https://jrl-umi3218.github.io/mc_rtc/index.html}} with LSSOL solver. It includes a user task specification interface, debugging, data recording... for simulation as well as for real-time control. Based on the embedded sensors data (encoders, IMU, Force/Torque (F/T) sensors...), \mcrtc~builds at each control-cycle the QP problem, based on user-defined tasks and constraints, and solves it. The QP decision variables are $\UgenDesJointDyn$ and the contact forces $\force$. 

As it is shown in~\eqref{eq:double integrator} and~\cref{fig:whole control scheme}, $\UgenDesJointDyn$ is integrated twice to obtain $\genDesJointDyn$, then the joint commands $\desJointDyn$ are sent to the actuated joint controllers. $\actConfDot$ is estimated using numerical derivation for HRP-4, whereas it is readily available in the Franka Control Interface (FCI) for Panda\footnote{The FCI joint-velocity estimation is a low-pass filtered finite-difference.}. The floating-base state $\actFBDyn$ of HRP-4 is estimated by a built-in kinematic-inertial observer based on an extended filter. 
The control is performed using a laptop Dell Precision~5540 with Intel Xeon(R)~E-2276M processor (CPU $12 \times  2.80$~GHz) and $32$~GB of RAM running under Linux Ubuntu~18.04.5 LTS.

\begin{figure*}
\centering
\subfloat[]{
\includegraphics[width=0.95\columnwidth]{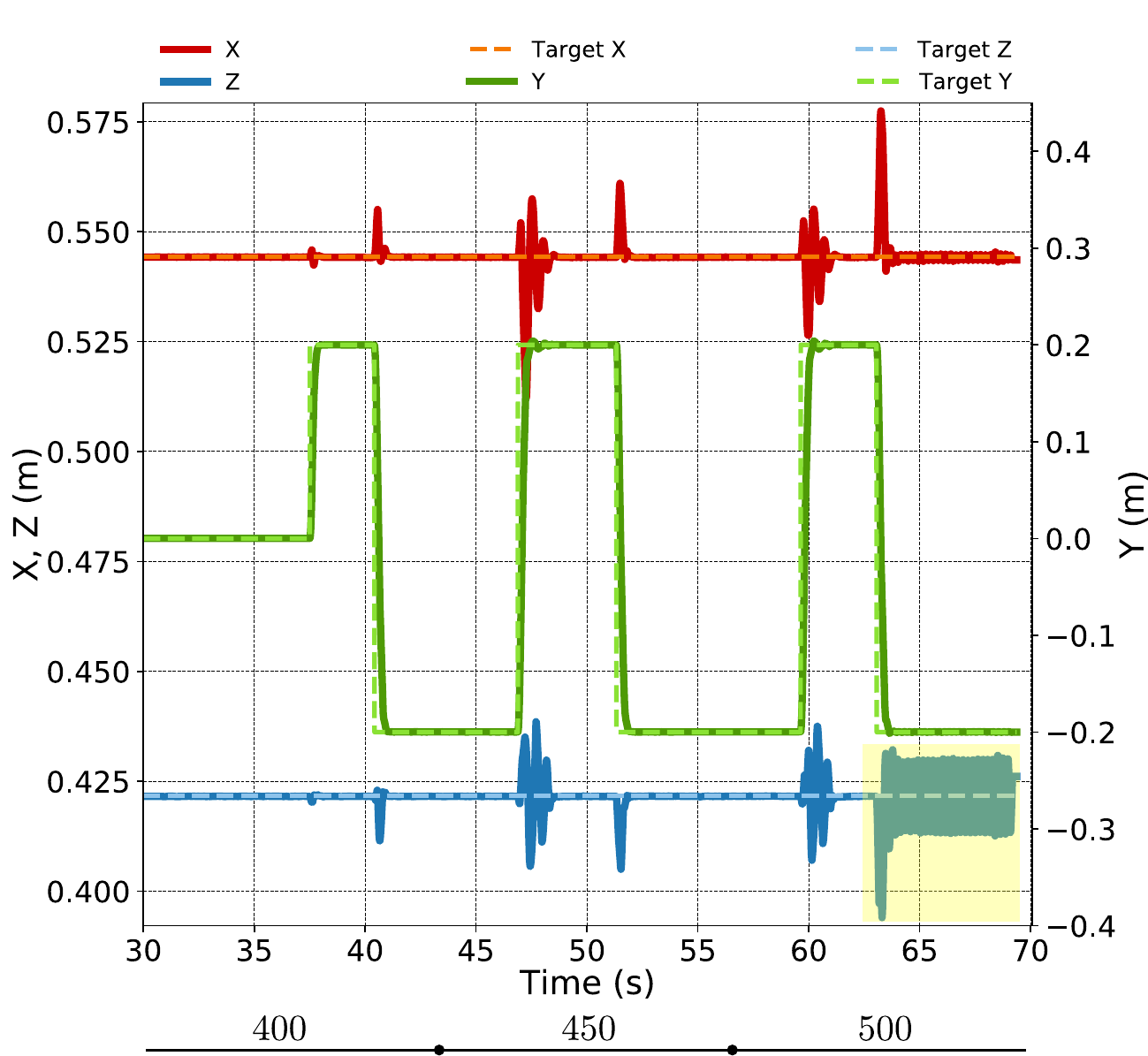}
\label{subfig:pandaFail_Task}}
\subfloat[]{
\includegraphics[width=0.95\columnwidth]{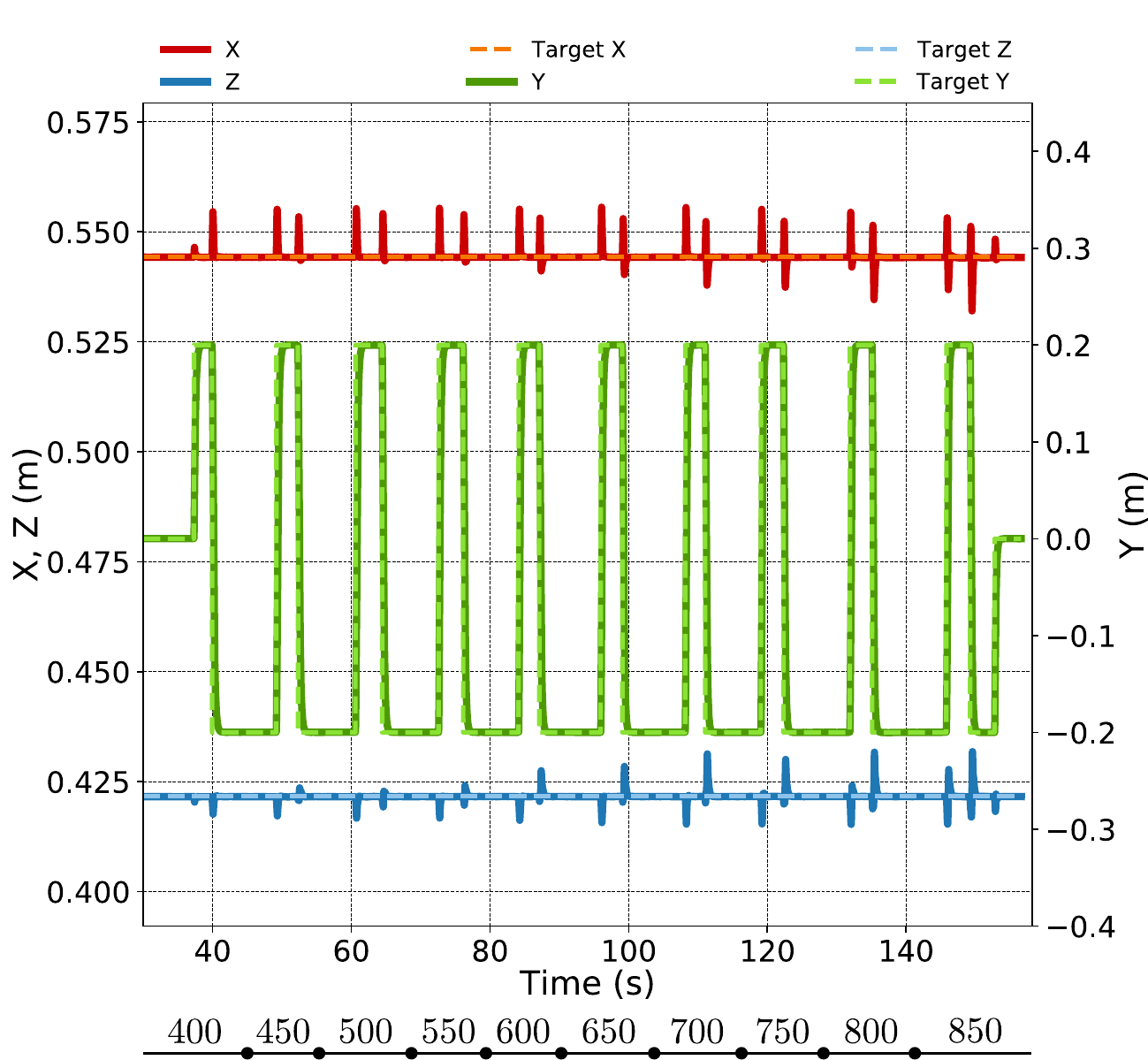}
\label{subfig:pandaSuccess_Task}}
\hfil
\subfloat[]{
\includegraphics[width=0.95\columnwidth]{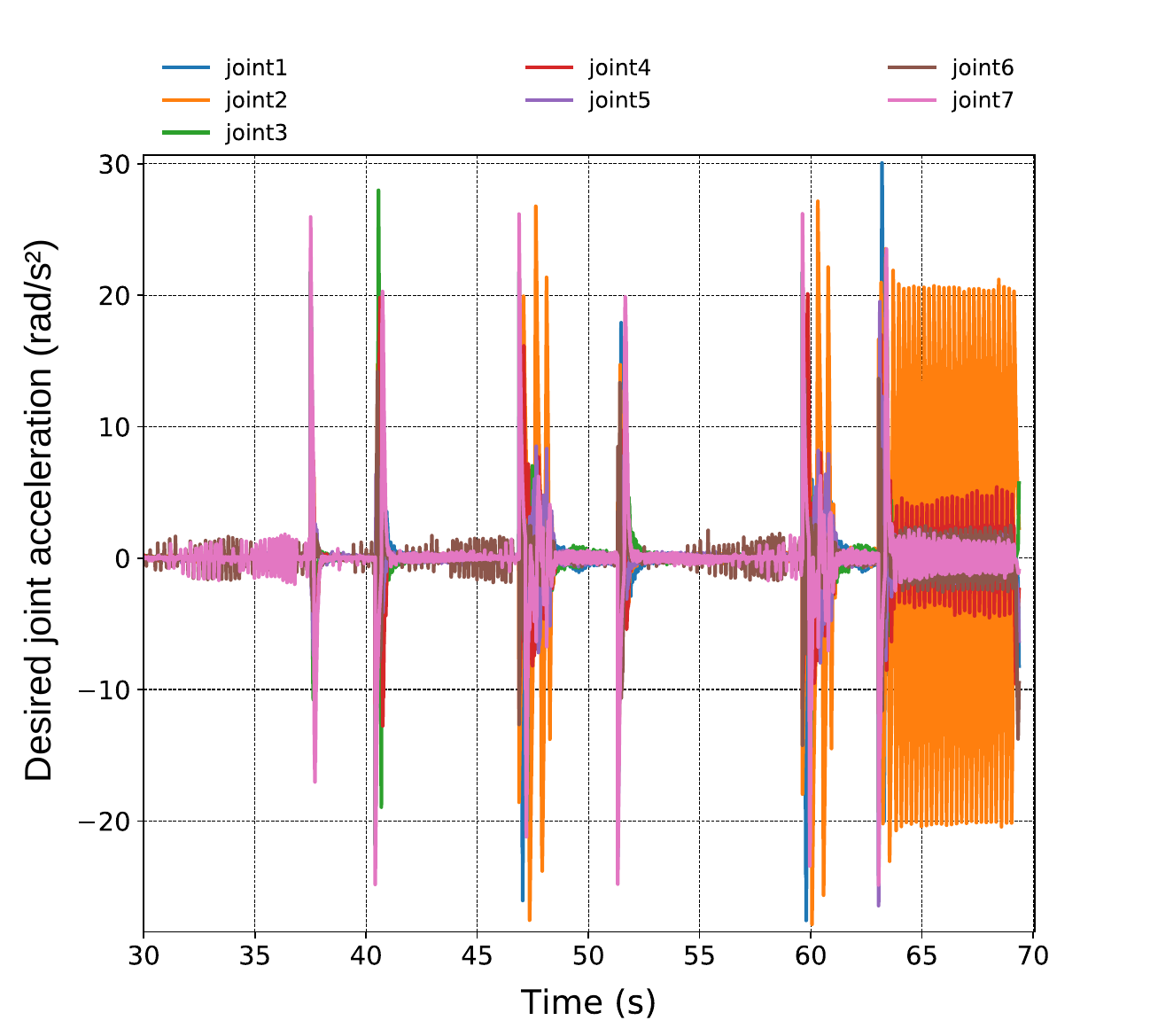}
\label{subfig:pandaFail_u}}
\subfloat[]{
\includegraphics[width=0.95\columnwidth]{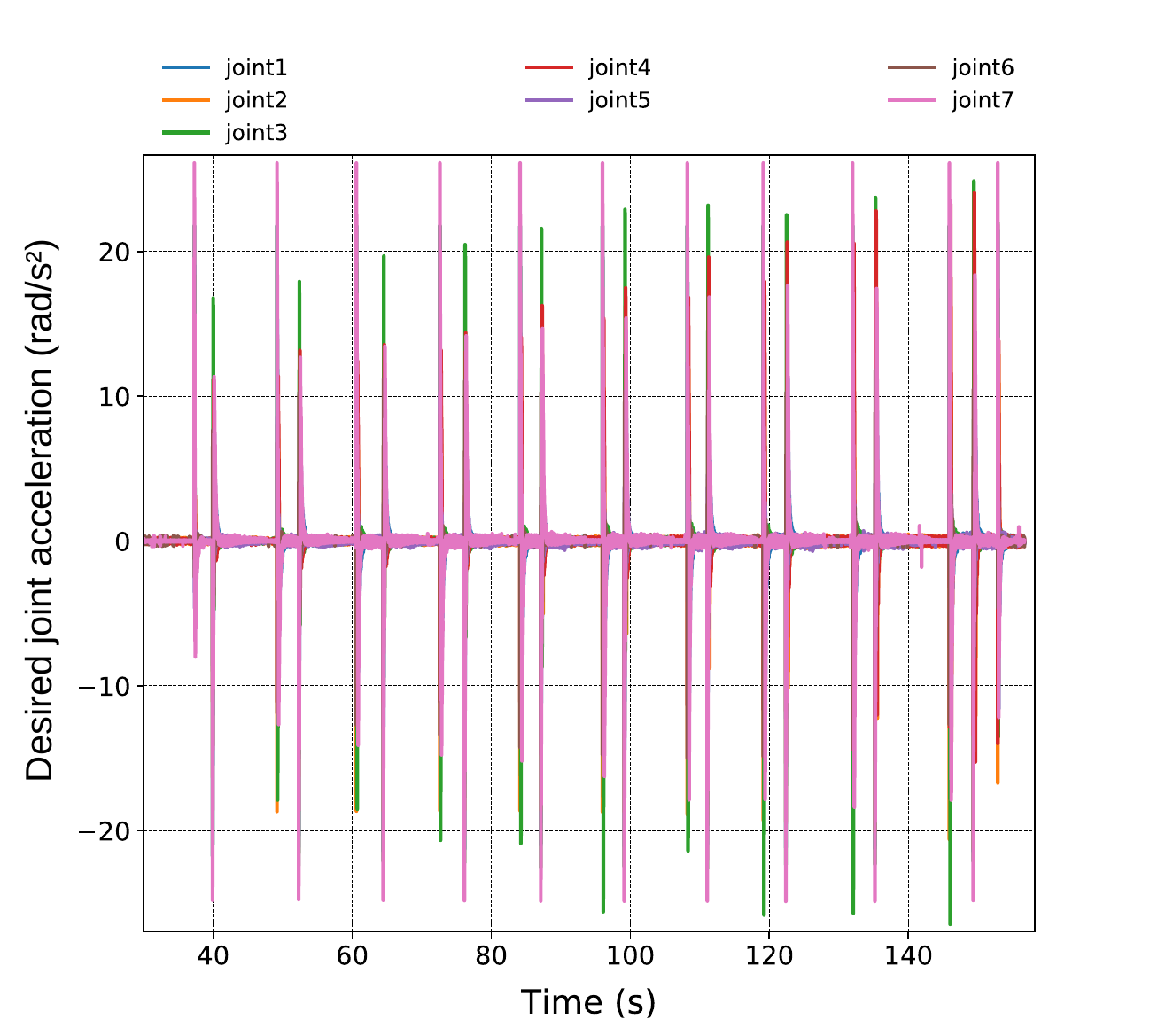}
\label{subfig:pandaSucess_u}
}
\caption{Panda response for the ‘pick-and-place’ task under different feedback controls. End-effector Cartesian coordinates and $\jointCrtlIn$ evolution under: output feedback~\eqref{eq:mu output feedback} \subref{subfig:pandaFail_Task}--\subref{subfig:pandaFail_u},  heterogeneous feedback~\eqref{eq:heterogeneous feedback mu} \subref{subfig:pandaSuccess_Task}--\subref{subfig:pandaSucess_u}. The horizontal scales under \subref{subfig:pandaFail_Task} and \subref{subfig:pandaSuccess_Task} denote the stiffness gain $\taskStiffness$ within different time periods.} 
\label{fig:pandaExperiment}
\end{figure*}
\begin{figure}[t!]
\centering
\subfloat[]{
\includegraphics[width=0.5\columnwidth]{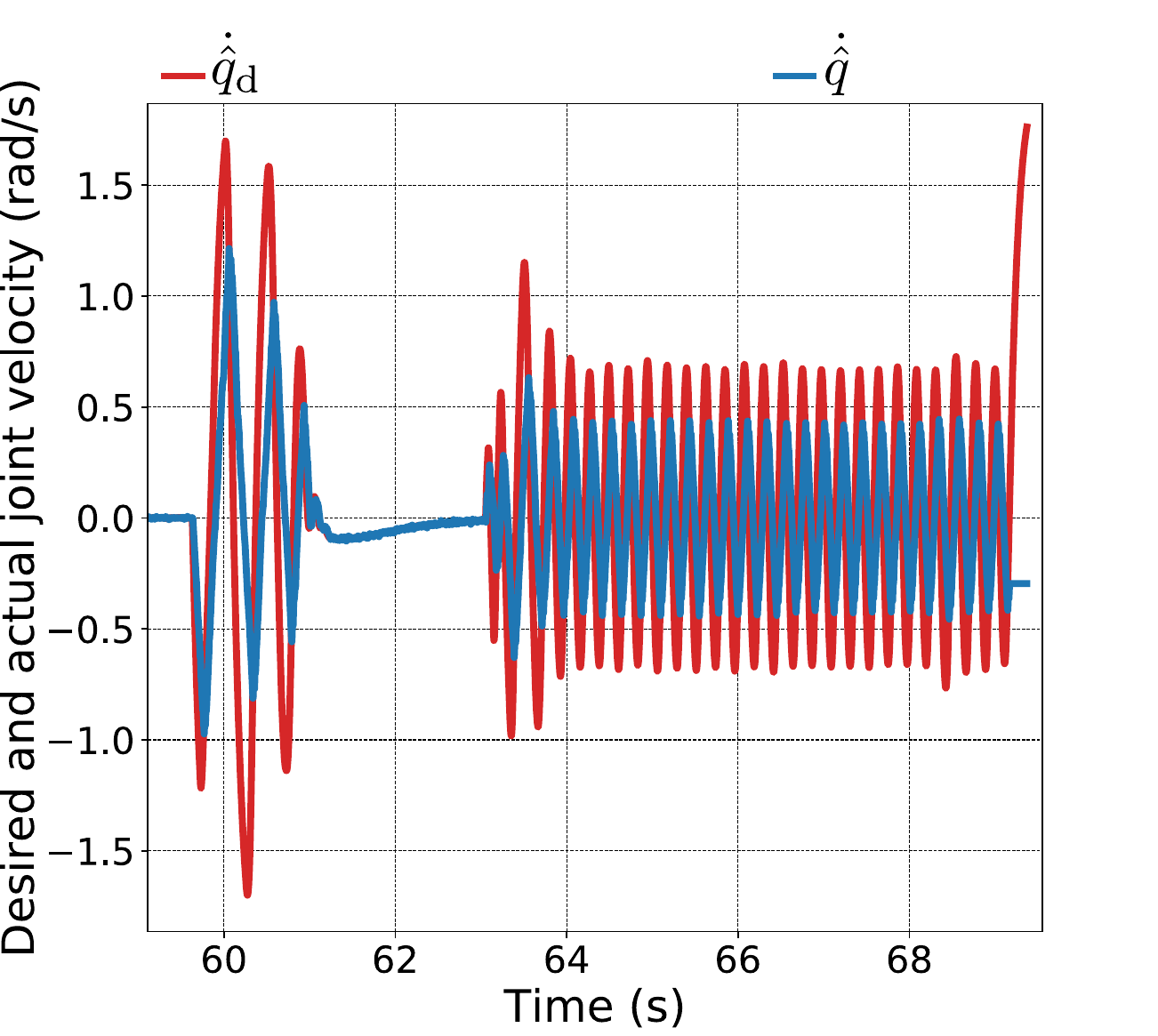}
\label{subfig:panda-velTracking-Fail}}
\subfloat[]{
\includegraphics[width=0.5\columnwidth]{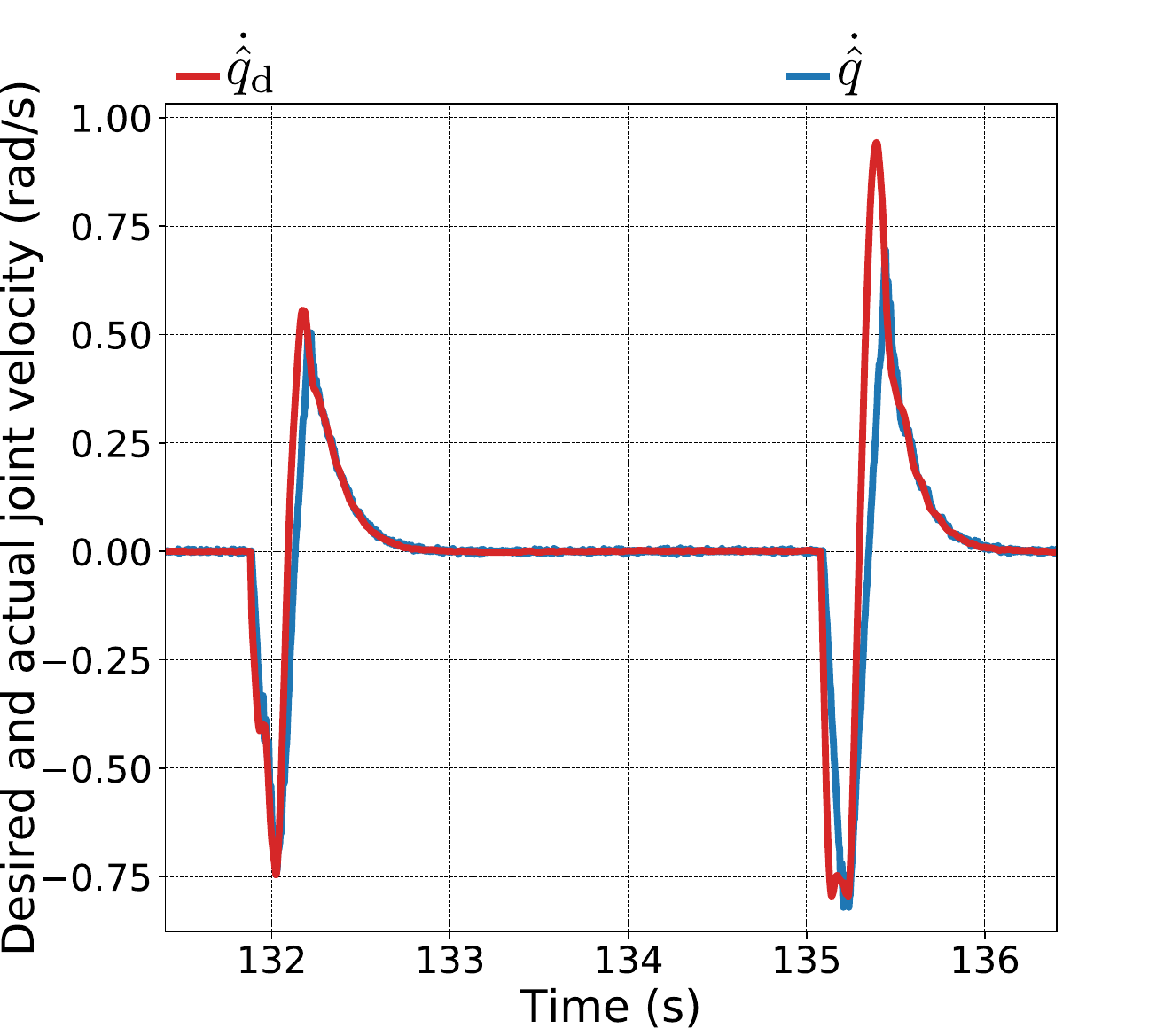}
\label{subfig:panda-velTracking-Success}}
\caption{Joint velocity tracking of the 2$^{\text{nd}}$ joint.~\subref{subfig:panda-velTracking-Fail} Closed-loop system under output feedback~\eqref{eq:mu output feedback} at stiffness $\taskStiffness=500\eye$ leads to instability shown as fast oscillation of $\desConfDot$ tracked by $\actConfDot$. \subref{subfig:panda-velTracking-Success} Closed-loop system under heterogeneous feedback~\eqref{eq:heterogeneous feedback mu} at stiffness $\taskStiffness=800\eye$ where $\desConfDot$ is kept bounded even though it is not well tracked by $\actConfDot$, leading to a stable response.}
\label{fig:panda-velTracking}
\end{figure}
\begin{figure}[t!]
\centering
\includegraphics[width=.85\columnwidth]{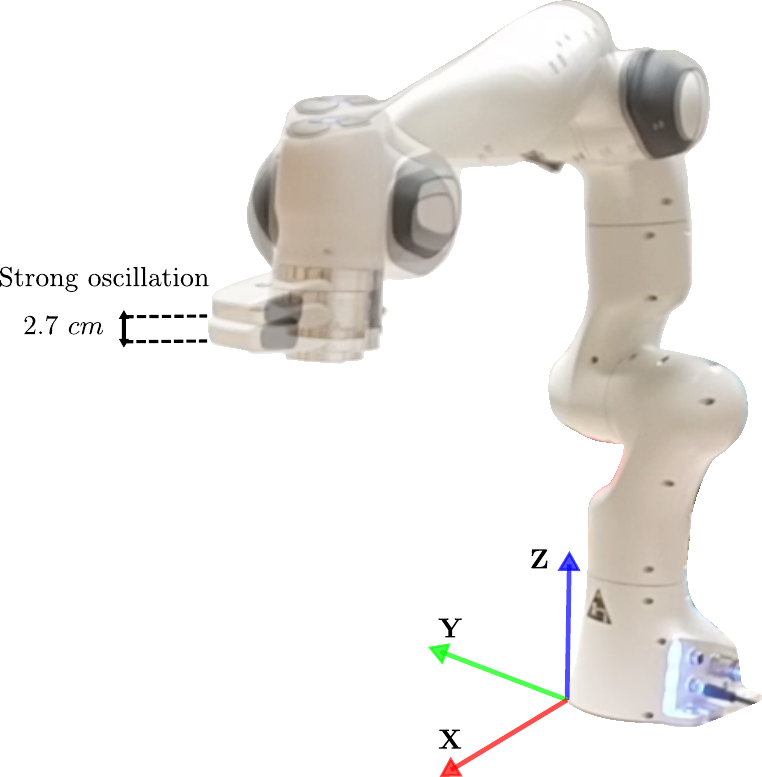}
\caption{Strong oscillations (highlighted in the yellow spot in \cref{fig:pandaExperiment}\subref{subfig:pandaFail_Task}) due to non-robustness of output feedback control~\eqref{eq:mu output feedback}. The two superposed snapshots are taken with a time interval of $ T = 133$~ms.}
\label{fig:panda_vib}
\end{figure}
\begin{figure}
\centering
\includegraphics[width=1\columnwidth]{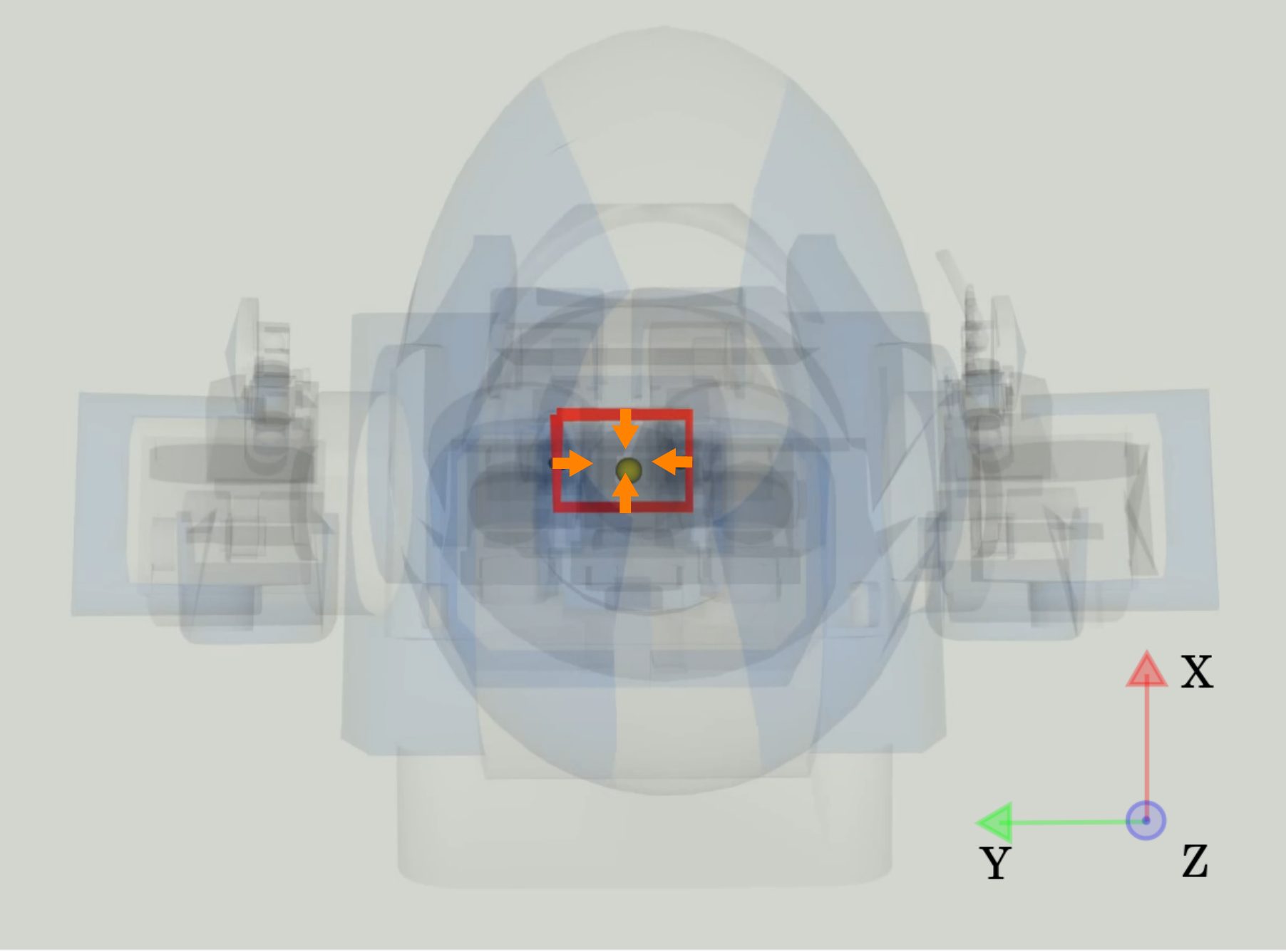}
\caption{Top-view of the humanoid robot HRP-4. The conservative equilibrium polygon is shown in red, the CoM in a yellow dot and the edges normal vectors in orange. The polygon is a rectangle in $XY$ plane such that $X_{\max} = 5$~cm, $X_{\min} = -2$~cm, $Y_{\max} = 5$~cm, $Y_{\min} = -5$~cm.}
\label{fig:CoM polgon}
\end{figure}
\begin{figure}[t!]
\centering
\includegraphics[width=.65\columnwidth]{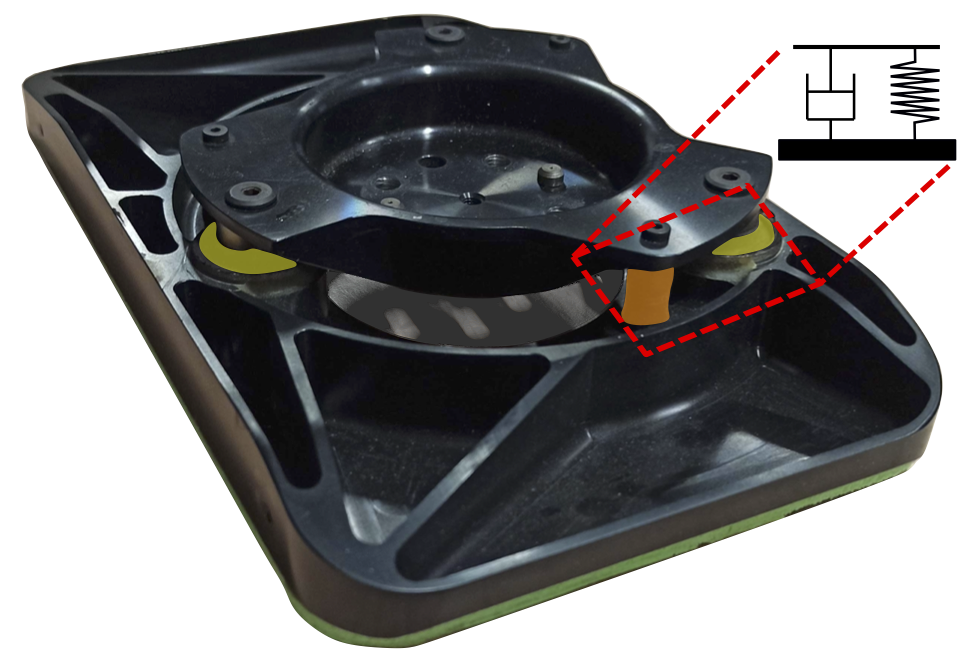}
\caption{Rubber bushes (yellow) and dampers (orange) under HRP-4 ankles that induce non-modeled flexibilities.}
\label{fig:hrp soles}
\end{figure}
\begin{figure}[t!]
\centering
\subfloat[]{
\includegraphics[width=0.95\columnwidth]{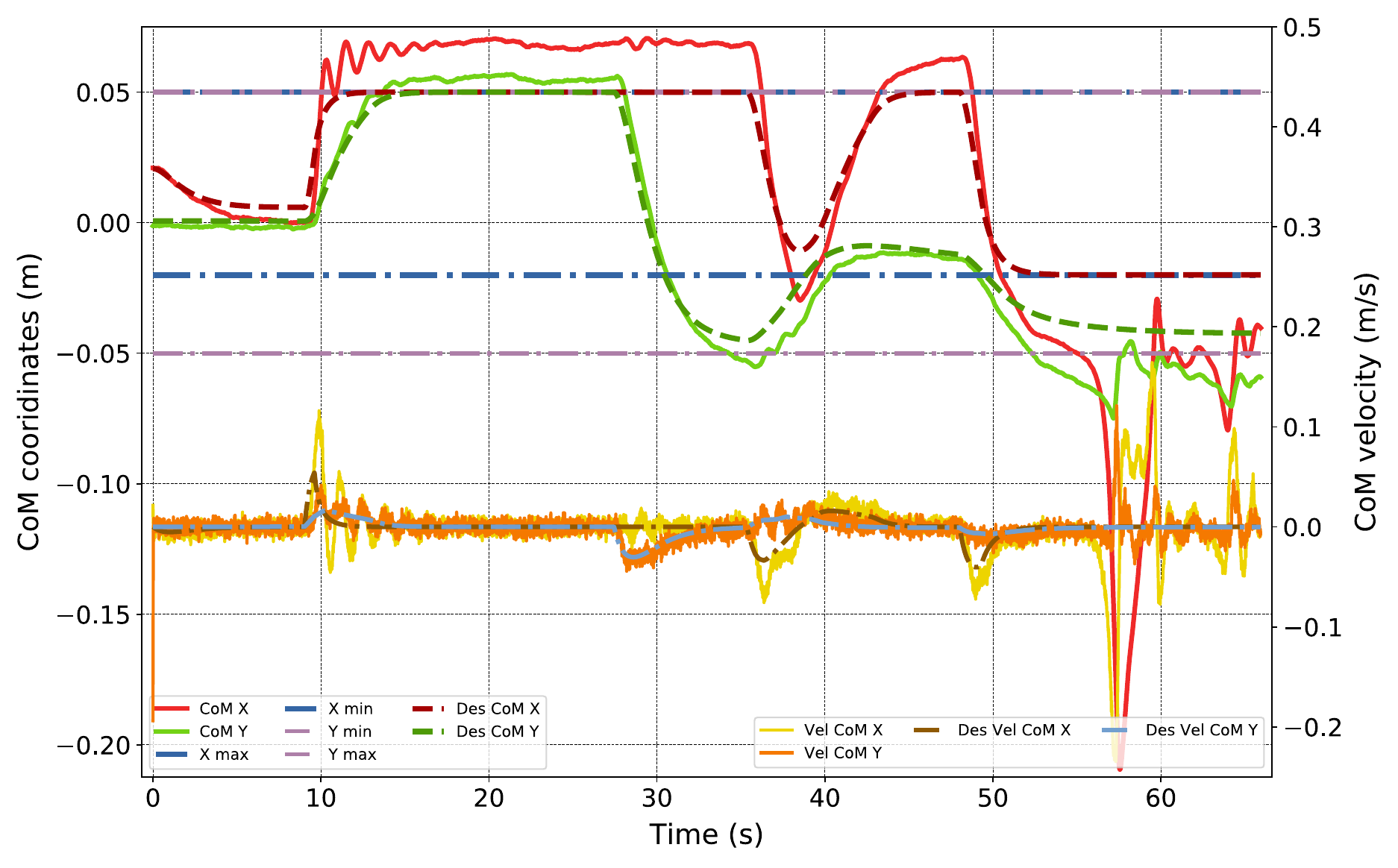}
\label{subfig:com_OL}
}\hfil
\subfloat[]{
\includegraphics[width=0.95\columnwidth]{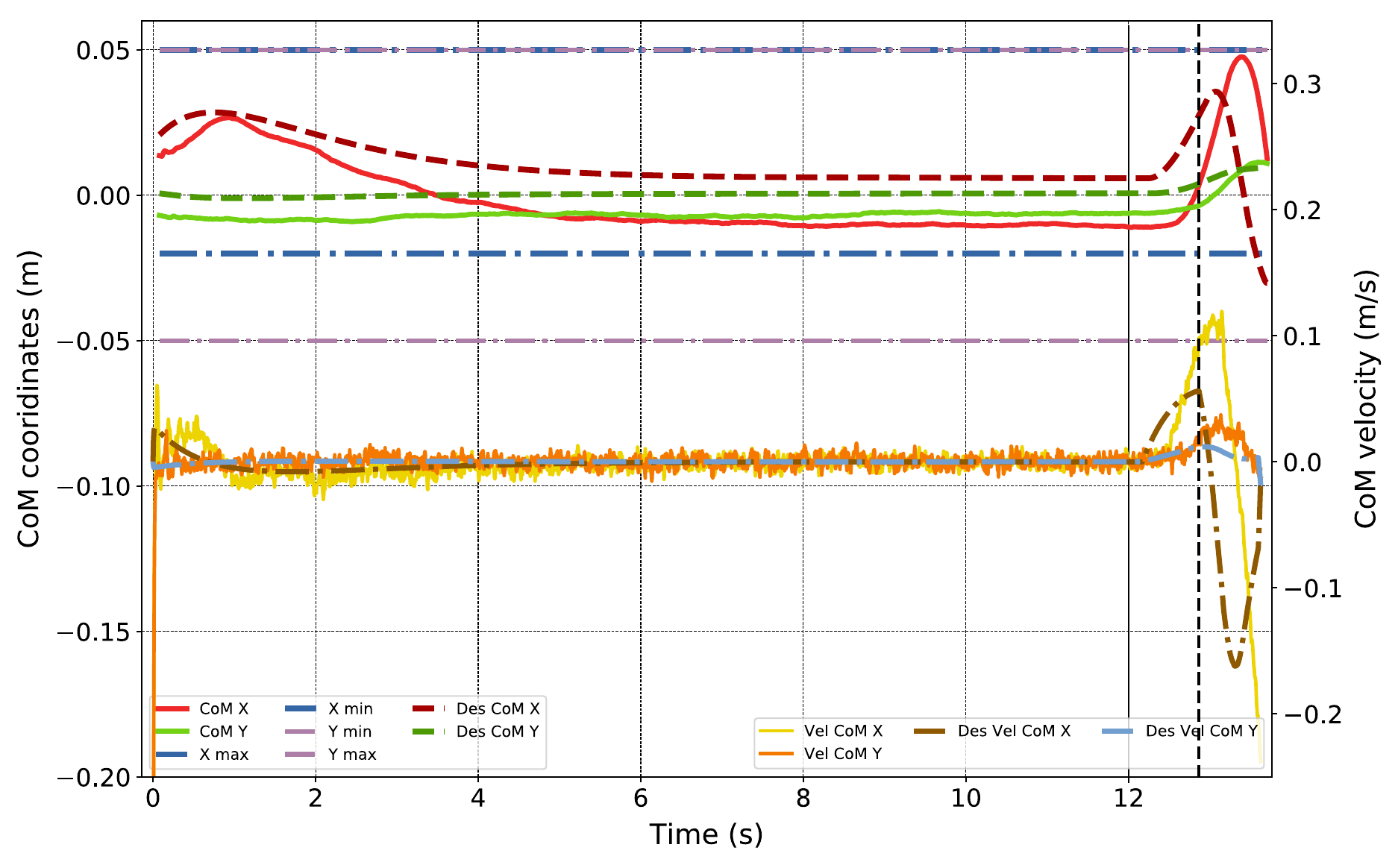}
\label{subfig:com_CL}
}\hfil
\subfloat[]{
\includegraphics[width=0.95\columnwidth]{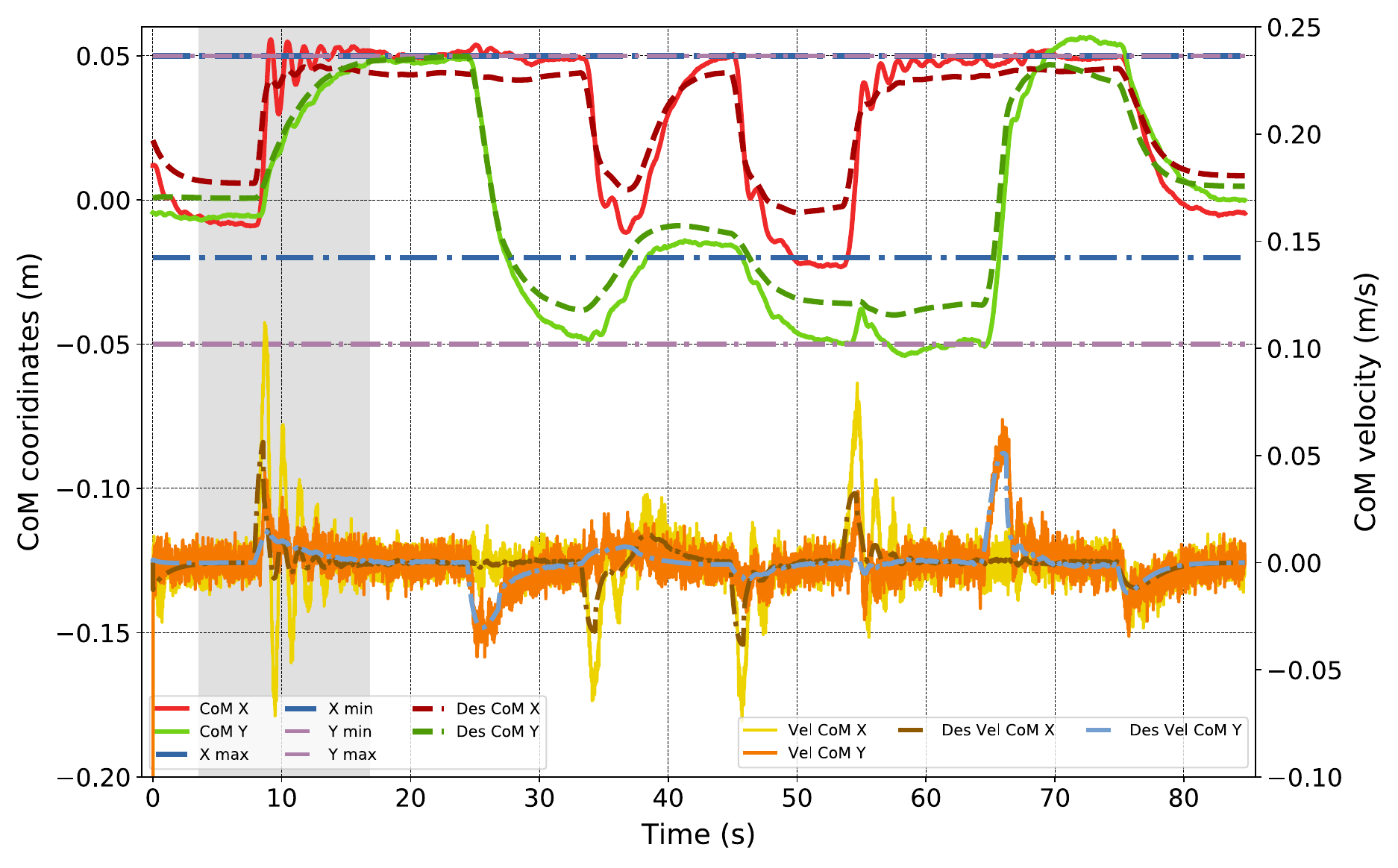}
\label{subfig:com_Robust}}
\caption{Time Evolution of $\actCoM$, $\desCoM$ and their respective velocities coordinates along $X$ and $Y$ axes.~\subref{subfig:com_OL} Feedforward ECBF constraint~\eqref{eq:ECBF constraint}.~\subref{subfig:com_CL} Feedback ECBF constraint~\eqref{eq:ECBF in feedback}.~\subref{subfig:com_Robust} RECBF constraint~\eqref{eq:RECBF formulation}. The gray time slot is zoomed-in in \cref{fig:com_Robust Zoom}.}
\label{fig:com}
\end{figure}
\begin{figure}
\centering
\includegraphics[width=1\columnwidth]{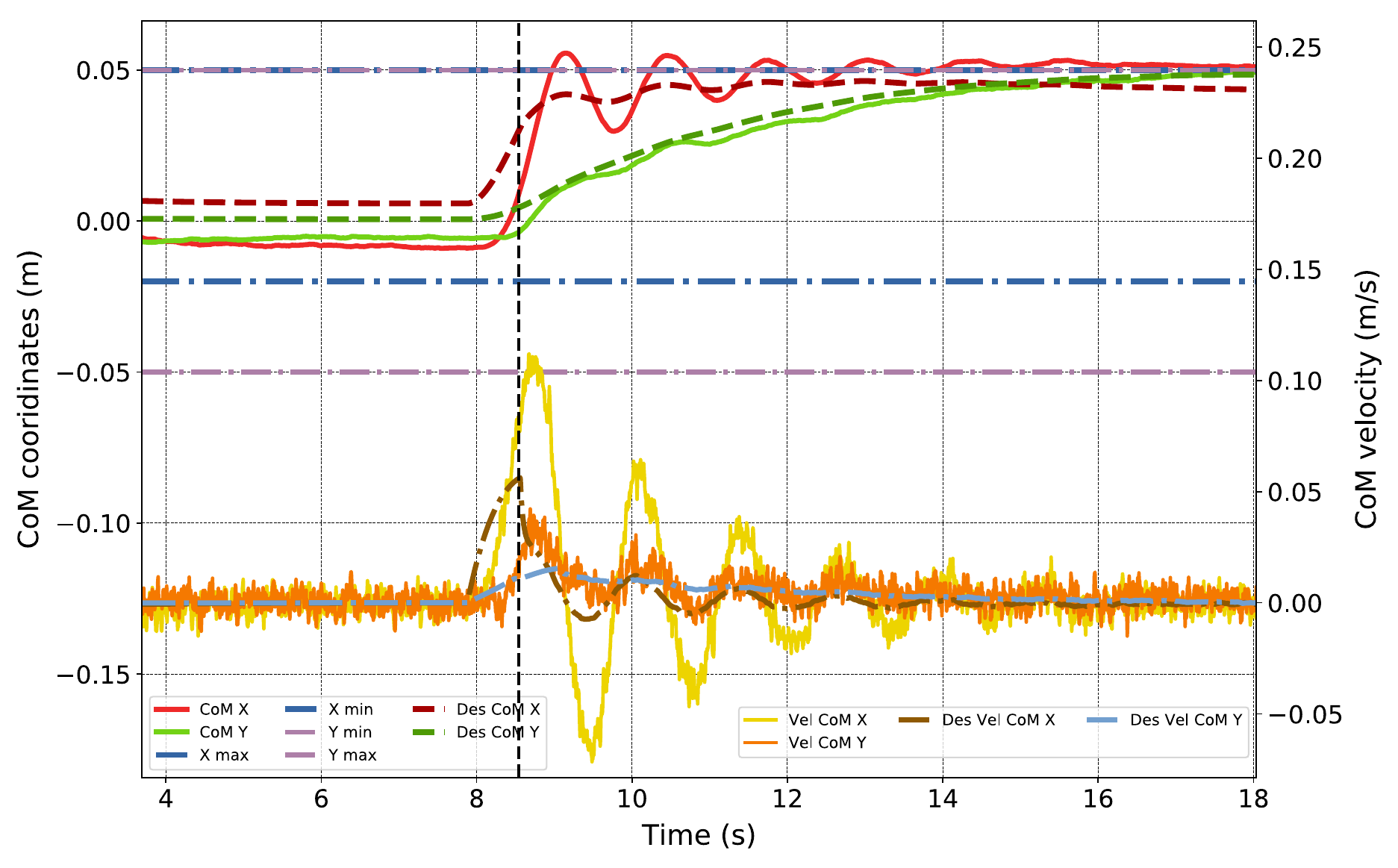}
\caption{Zoom-in of the gray time slot in \cref{fig:com}\subref{subfig:com_Robust}. The bold dashed line denotes the moment when the RECBF constraint relative to $X_{\max}$ boundary is inserted in QP~\eqref{eq:robust QP for combination}.}
\label{fig:com_Robust Zoom}
\end{figure}
\begin{figure}
\centering
\includegraphics[width=0.49\columnwidth]{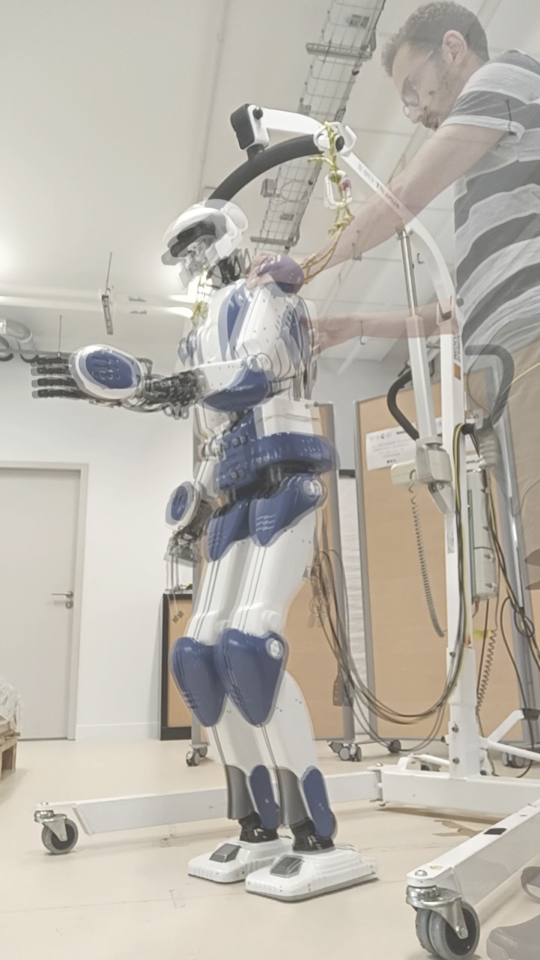}
\includegraphics[width=0.49\columnwidth]{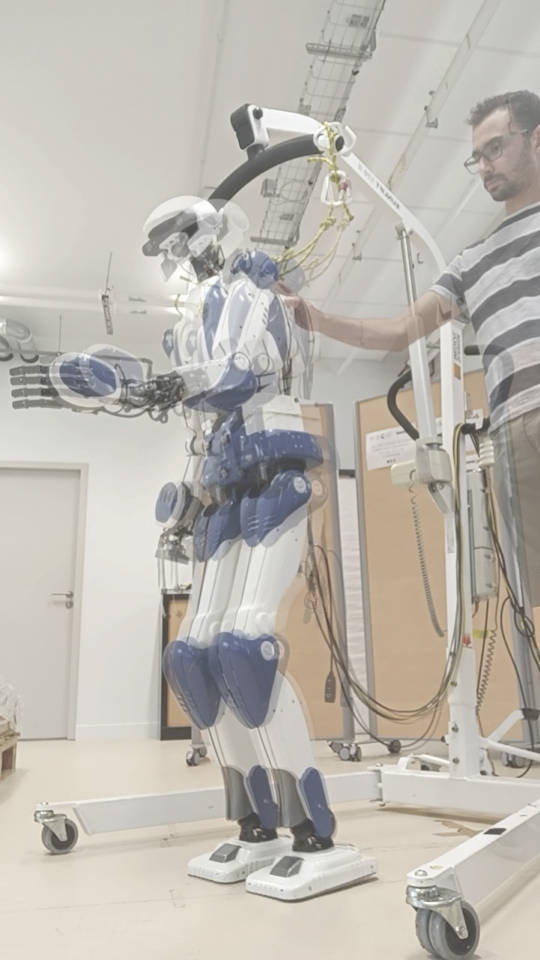} \\
\includegraphics[width=0.49\columnwidth]{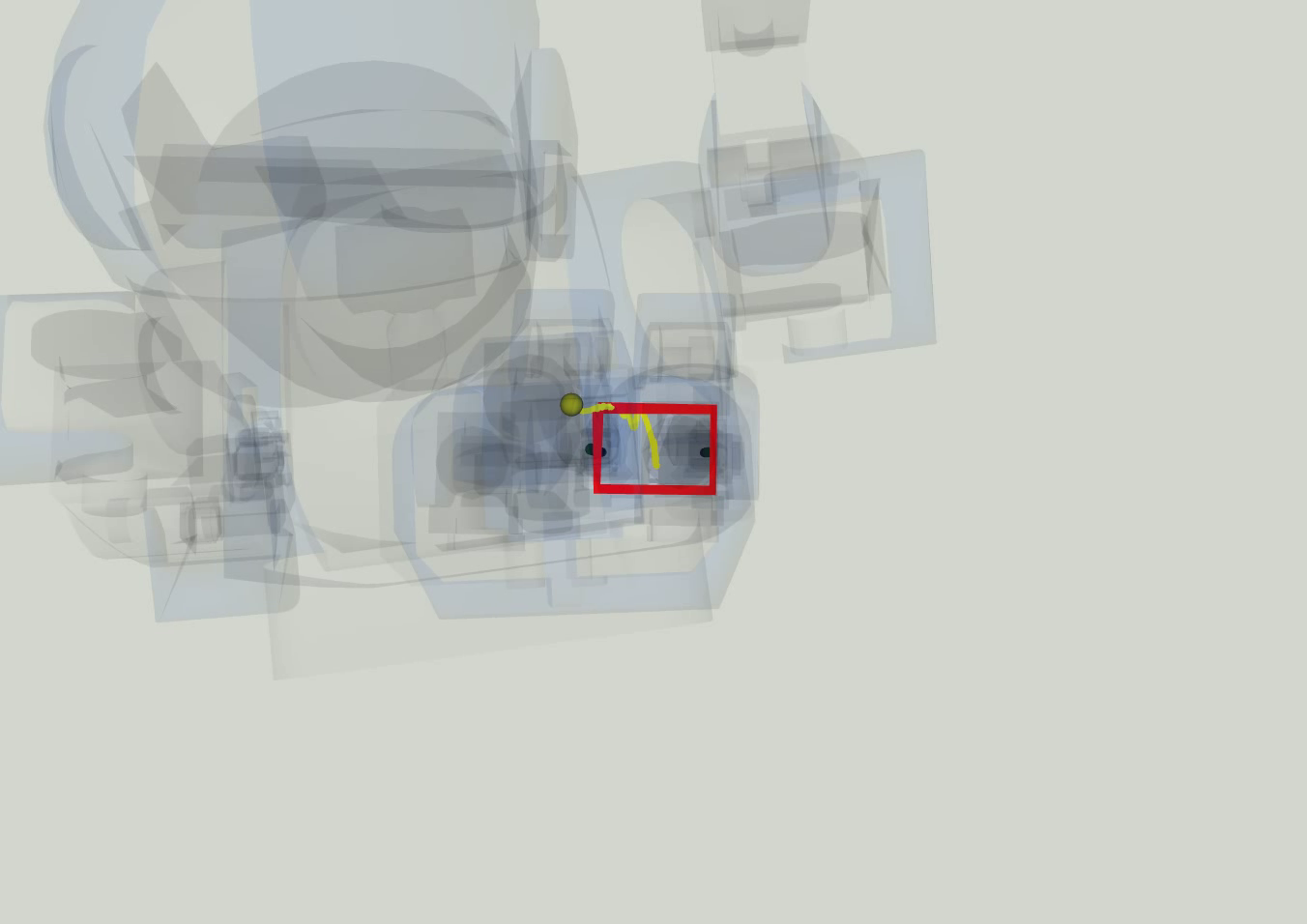}
\includegraphics[width=0.49\columnwidth]{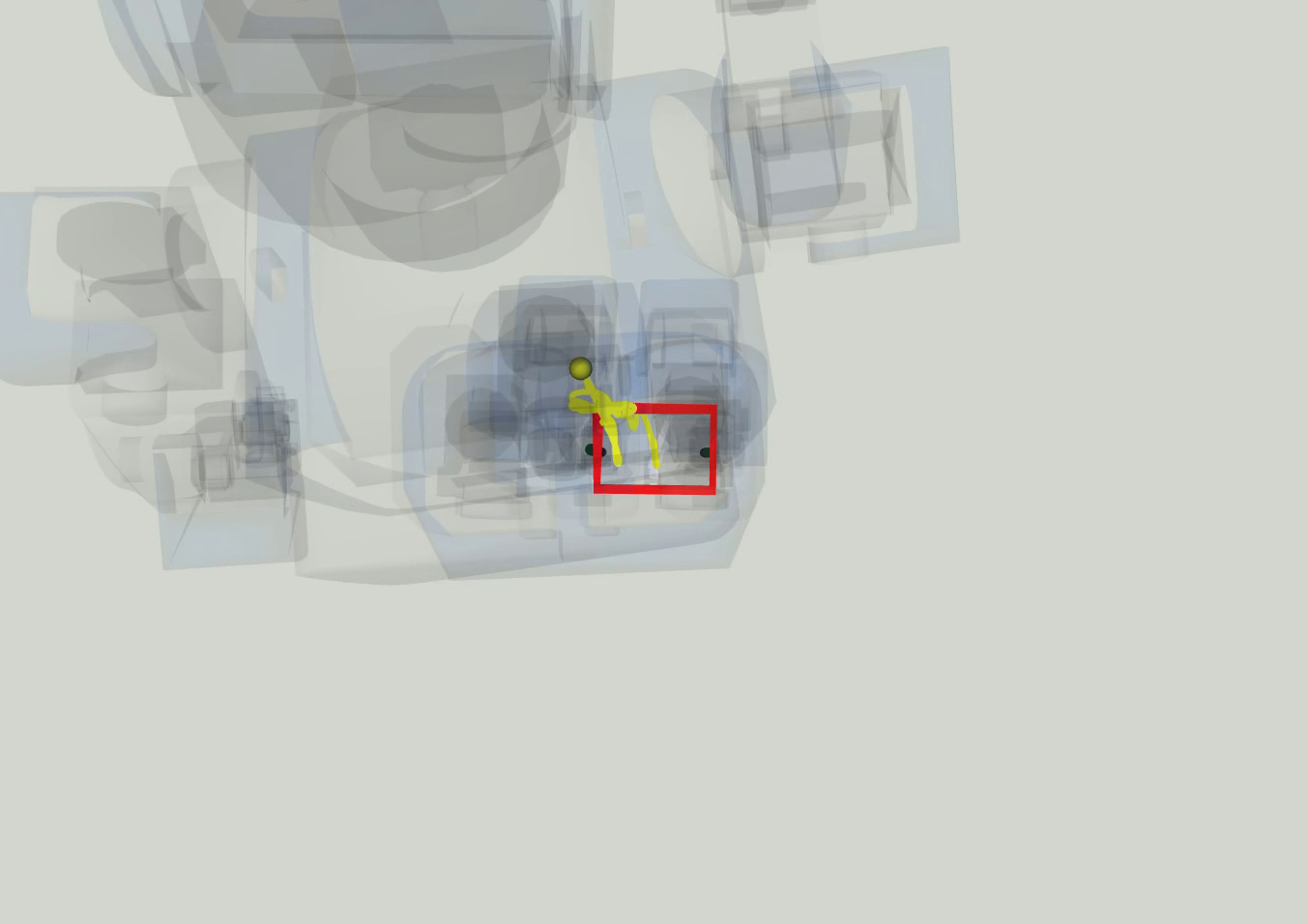} \\
\caption{Superposed snapshots of robust against pushing (experiment 4) in $X$ (right-top) and $Y$ (left-top) directions, with the corresponding top-view perspectives (bottom).}
\label{fig:com push snapshots}
\end{figure}
\begin{figure}[t!]
\centering
\includegraphics[width=\columnwidth]{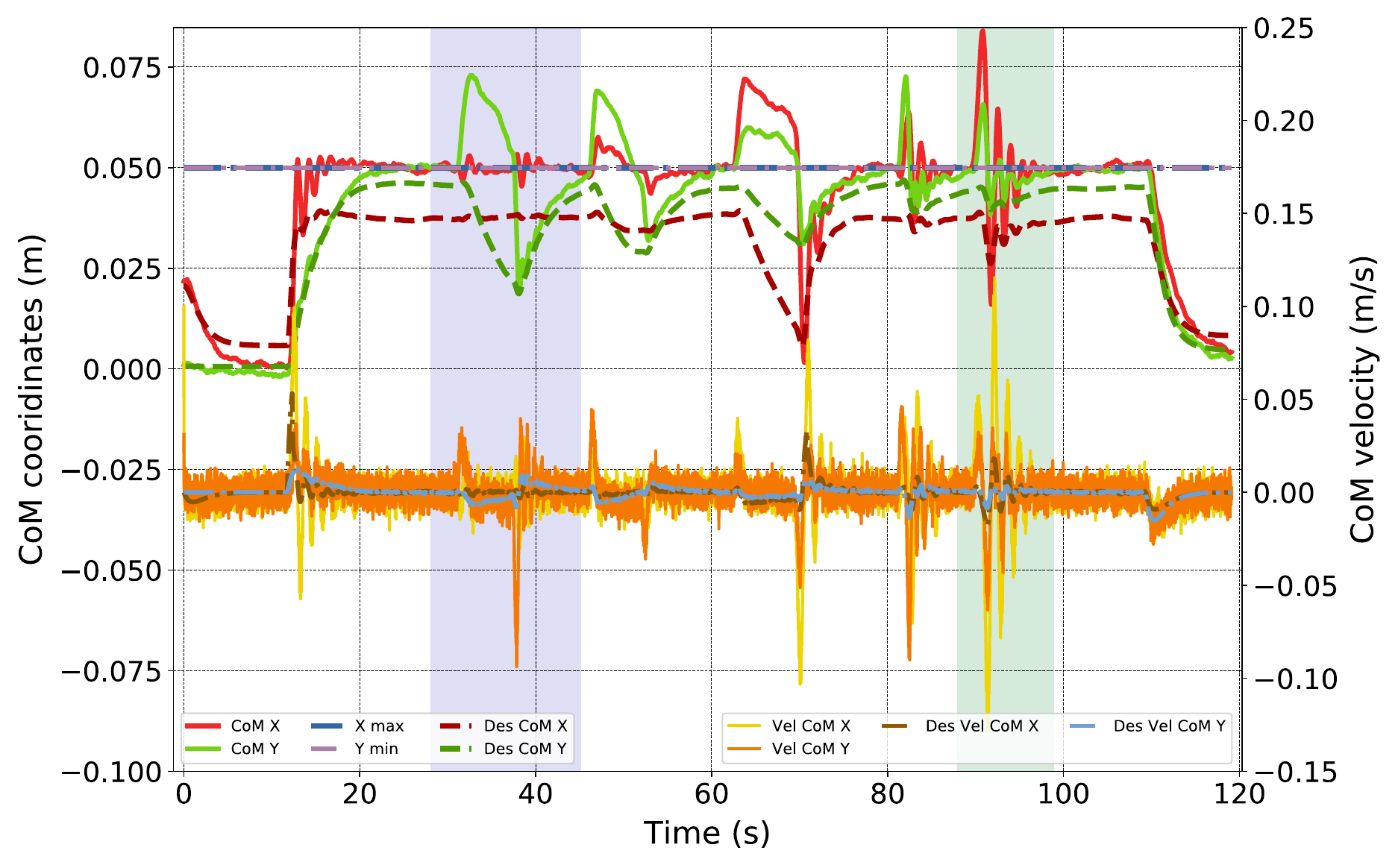}
\caption{Robustness of RECBF~\eqref{eq:RECBF formulation} against external pushes. The blue time slot denotes a persistent external push (\cref{fig:zoom com push}\subref{subfig:com persistent push}), and the green one denotes a brief external push (\cref{fig:zoom com push}\subref{subfig:com brief push}).}
\label{fig:com push}
\end{figure}
\begin{figure}
\centering
\subfloat[]{
\includegraphics[width=\columnwidth]{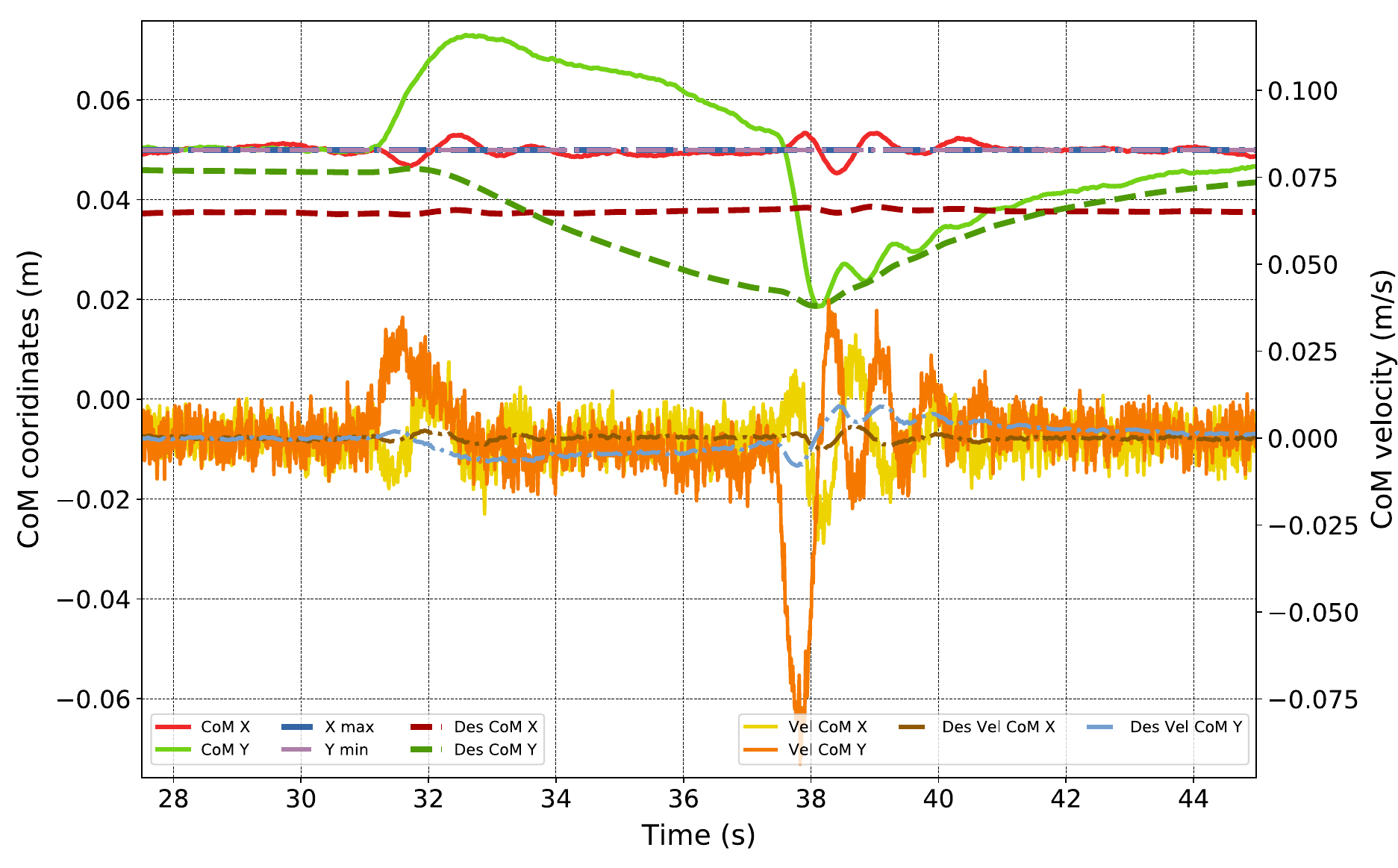}
\label{subfig:com persistent push}
}
\hfil
\subfloat[]{
\includegraphics[width=\columnwidth]{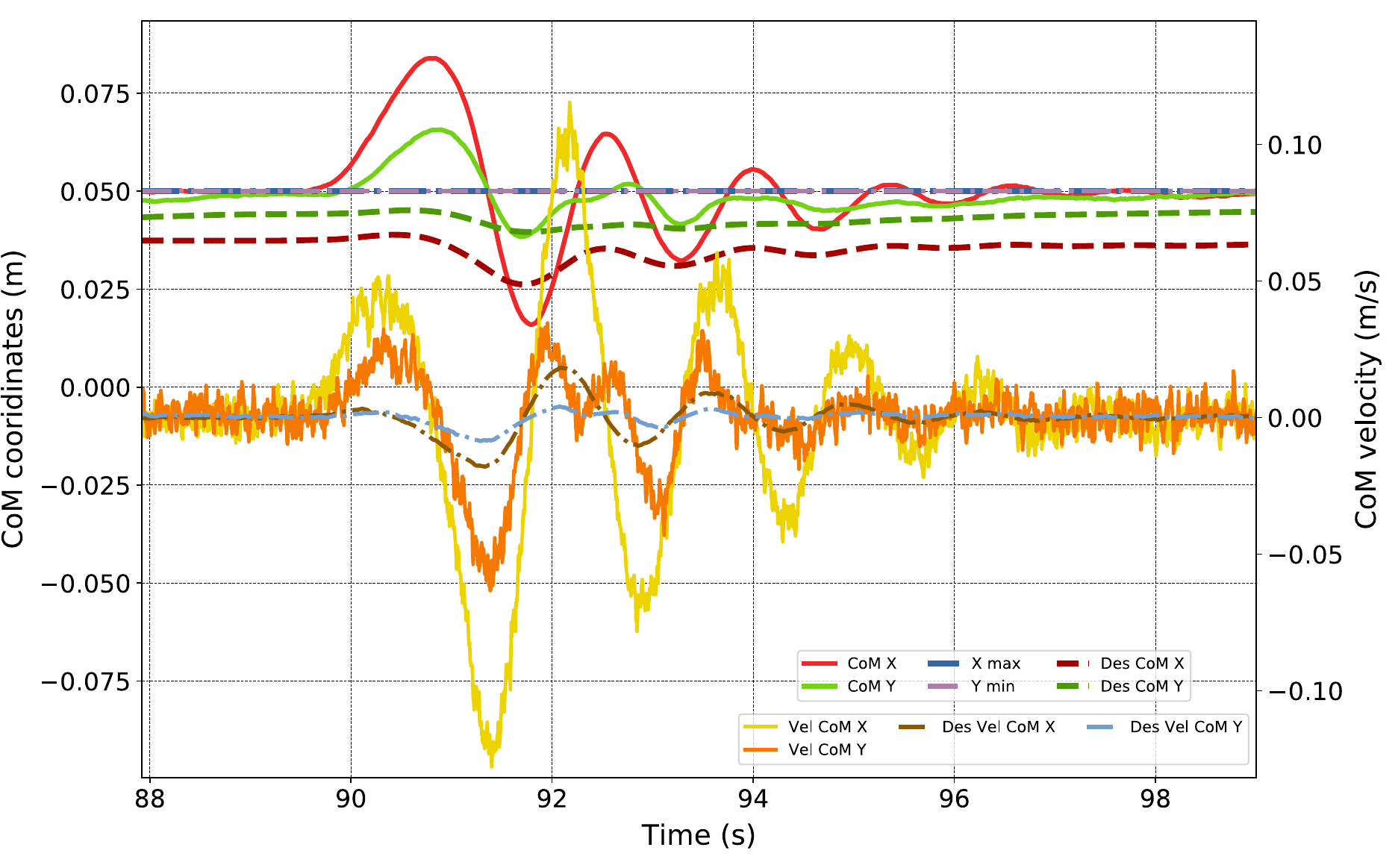}
\label{subfig:com brief push}
}
\caption{  \subref{subfig:com persistent push} Zoom-in the blue time slot in~\cref{fig:com push} showing the response against a persistent  push. \subref{subfig:com brief push} Zoom-in the green time slot in~\cref{fig:com push} showing the response against a brief push.}
\label{fig:zoom com push}
\end{figure}

\subsection{Task Robust Stability}\label{subsec:experiment_robust stability} 
Tasks gains correlate to accuracy (performing motions with high precision) and execution speed (controlling the overall task time). Consequently, performing fast and precise motion requires increasing the task gains which leads to instability.
In this context, the Panda end-effector is controlled to perform a pick-and-place-like motion. Two set-point targets (position and orientation) are defined (to which the end-effector converges back and forth); $\desConfDot$ is the input command for the joint controllers. To have simple plots, only the target coordinate along the $Y$-axis varies with an amplitude of $\pm20$~cm (\cref{fig:panda_snap}).

QP~\eqref{eq:robust QP for combination} is formulated (the notations in \cref{rem1} are followed). such that the constraint set contains the kinematic constraints (joint-position and velocity constraints~\cite{djeha2020ral}); whereas, for comparison purpose, the task is formulated using: (i) output feedback~\eqref{eq:mu output feedback}, (ii) heterogeneous feedback~\eqref{eq:heterogeneous feedback mu}. 
The task gains are set as $\taskDamping\!=\!2\sqrt{\taskStiffness}$ and $\taskIntegralDamping\!=\! \varepsilon \taskDamping$ with $\varepsilon\!=\!0$ for~\eqref{eq:mu output feedback} and $\varepsilon\!=\!1$ for~\eqref{eq:heterogeneous feedback mu}. Since we do not know \emph{a priori} the task stiffness that will turn the closed-loop system instable, $\taskStiffness$ is initially set to $400\eye$, then it is increased over time by increments of $50\eye$ ($\taskDamping$ and $\taskIntegralDamping$ are updated accordingly). 

\Cref{fig:pandaExperiment} shows the experiment results. For $\taskStiffness\!\leq \!450\eye$, both feedback controls~\eqref{eq:mu output feedback} and~\eqref{eq:heterogeneous feedback mu} lead to stable convergence to the targets. However, for  $\taskStiffness\!=\!500\eye$, the closed-loop system with output feedback~\eqref{eq:mu output feedback} becomes instable (\cref{fig:pandaExperiment}\subref{subfig:pandaFail_Task}) where strong oscillations and jerky motion appear at the end-effector mostly visible along the $Z$-axis (\cref{fig:panda_vib}). This chattering can be very dangerous for the robot (accelerating drastically the wear of actuators and robot's structure) and the surrounding people or objects in the robot's neighborhood. Conversely, heterogeneous feedback~\eqref{eq:heterogeneous feedback mu} allows reaching robustly the targets while $\taskStiffness$ keeps increasing up to $850\eye$ (\cref{fig:pandaExperiment}\subref{subfig:pandaSuccess_Task}).

Increasing the task gains results in high values of desired joint acceleration $\jointCrtlIn$ (\cref{fig:pandaExperiment}\subref{subfig:pandaFail_u}--\subref{subfig:pandaSucess_u}) which generates desired joint commands $\desConfDot$ with fast variations that can not be well tracked by the joint controllers (\cref{fig:panda-velTracking}\subref{subfig:panda-velTracking-Fail}--\subref{subfig:panda-velTracking-Success}) due to the different rate limitations (acceleration, jerk) and the limited bandwidth. This leads to increase the joint tracking error $\jointTrackErr$ in~\eqref{eq:joint tracking error}, and correspondingly its task-space mapping $\taskTrackErr$ in~\eqref{eq:DL act task}. 
Consequently, if the perturbation term $\taskGains\taskTrackErr$ 
is not sufficiently bounded, the closed-loop system under output feedback~\eqref{eq:mapping mu to u} is instable.
In particular, adding the task-space integral term in~\eqref{eq:heterogeneous feedback mu} allows to withstand the perturbation by the gain $\taskIntegralDamping$ as shown in~\eqref{eq:cdt on norm - Lv} enforcing $\desTaskOut $ to remain bounded which leads to the boundedness of $\desJointDyn $ (by virtue of \cref{prop 1}).  

In~\cref{fig:pandaExperiment}\subref{subfig:pandaSuccess_Task}--\subref{subfig:pandaSucess_u}, we decided to stop at stiffness $\taskStiffness=850\eye$ as, due to hardware limits, further increasing the task gains has no effect on the convergence performance (\cref{fig:panda-velTracking}\subref{subfig:panda-velTracking-Success}). Although the joint controllers reached their maximum tracking performances, our approach enables a stable motion even though the task gains keep increasing and imposing desired dynamics that cannot be performed by the robot. Hence, the closed-loop stability has to be ensured whatever the task gains and the joint-dynamics. Yet, ensuring the stability goes at the expanse of $\taskIntegralDamping$ conservative tuning.  	

\subsection{Set Robust Stability}\label{subsec:expreiment_robust safety}

Among usual safety constraints common to all robots such as joint limits and self-collisions avoidance that are implemented here as extension of~\cite{djeha2020ral}, a critical safety feature in humanoids is enforcing balance, which is given a higher priority over manipulation tasks. When there are no contact transitions, constraining the CoM position from acceleration bounds enforces robust balance~\cite{audren2018tro}.
For co-planar feet contact, it is enough to confine the CoM of the HRP-4 humanoid robot to remain inside a conservative polygon (\cref{fig:CoM polgon}) such that its boundaries are reached with zero CoM velocity and acceleration. It is a conservative balance region because it is a subset of 3D balance set (polyhedron in multi-contact or prism in co-planar) that we use mainly for validation purpose. The robot CoM is pushed to the polygon boundaries by defining a sequence of Cartesian targets for the right hand to be reached. 
Rubber bushes and dampers are present under the robot ankles to absorb impacts at the feet while walking (\cref{fig:hrp soles}). This shock-absorbing mechanism creates non-modeled underdamped flexibilities between the ankles and the feet. The latter are often observed in the form of small passive oscillations at the ankles which amplify through the whole structure. Their effect is not observed at the joints' encoders, but at the floating base state estimation (based on IMU measurements). Nevertheless, the joint feedback is used (along with the floating base state) to compute CoM Cartesian position and linear velocity.
Moreover,  the floating-base IMU noise affects the CoM estimation (denoted $\actCoM\in\mathbb{R}^3$).  

The robot CoM is constrained to be within the balance polygon by defining inequality constraints on the distance between the CoM and the polygon features. The 3D case would simply result in more inequalities constraining the CoM within a precomputed polyhedron~\cite{audren2018tro}.
The barrier functions $\bfunc^i$ and $\desBfunc^i$ corresponding to each balance polygon feature $i$ are
\begin{align}
\label{eq:bfunc EXP}\bfunc^i & = \bm{n}^{i\tp}\actCoM + \Delta^i,\\
\label{eq:des bunc EXP}\desBfunc^i & = \bm{n}^{i\tp}\desCoM + \Delta^i,
\end{align} 
where $\bm{n}^{i}\in\mathbb{R}^3$ is the $i^{\text{th}}$ feature's normal vector and $\Delta^i\in\mathbb{R}$ is the distance at which the feature is placed w.r.t the origin along $\bm{n}^{i}$. From~\eqref{eq:bfunc EXP}--\eqref{eq:des bunc EXP}, 	the sets $\setC^i$ and $\setC^i_{\rm d}$ are defined as in~\eqref{eq:act C} and~\eqref{eq:des C}, respectively. 
QP~\eqref{eq:robust QP for combination} is formulated to steer the right hand to its targets while the constraints set contains the contact forces constraint~\eqref{eq:contact forces} and non-slipping contacts~\eqref{subeq:contact constraint}. RECBF constraint~\eqref{eq:RECBF formulation} is then inserted if $\bfunc^i \!\leq\! 4$~cm\footnote{The value of this threshold is conservative (user choice), but it can be decided by a high-level task scheduler or a task planner. In this experiment, we chose this value to confine the region within which the CoM acceleration is unconstrained. This helps to keep the CoM acceleration at low values.}. If QP is feasible then the CoM deceleration generated by~\eqref{eq:RECBF formulation} is achieved relying on feasible contact forces and consistent floating-base solution $\desFBDDot$.
In this experiment, $\desConf$ is the input command for the joint controllers.

For comparison purpose, three identical experimental scenarios have been conducted using the different closed-loop QP control schemes shown in \cref{fig:QP scheme for kinematic-control robots}:
\begin{itemize}
\item Experiment~1: feedforward ECBF constraint~\eqref{eq:ECBF constraint};
\item Experiment~2: feedback ECBF constraint~\eqref{eq:ECBF in feedback};
\item Experiment~3: RECBF constraint~\eqref{eq:RECBF formulation}.
\end{itemize}

The results are shown in \cref{fig:com}. In the three experiments,  $\constraintStiffness$ is computed as shown in~\cite{djeha2020ral} (\cref{thm:RECBF}). For Experiments~1 and~2, $\constraintDamping = 2.4\sqrt{\constraintStiffness}$. 
\subsubsection{Experiment 1} see~\cref{fig:com}\subref{subfig:com_OL}; the actual robot state $\genActJointDyn$ is not fed back to QP.  
We can see that $\desCoM$ is within the limits and the set $\setCd$~\eqref{eq:des C} is made forward invariant. However, since the robot is accounted for in the closed-loop system, forward invariance is not ensured for the set $\setC$~\eqref{eq:act C}. The mismatch between $\desCoM$ and $\actCoM$  leads the latter to overshoot   $X_{\max}$ limit with an amount of $2$~cm, then to completely drift away from the polygon boundary at $t=50$~s leading the robot to lose balance.
\subsubsection{Experiment 2} see~\cref{fig:com}\subref{subfig:com_CL}; the robot state is considered in closed-loop, but the ECBF formulation leads to instability. The bold dashed line shows the moment when ECBF constraint relative to ${X}_{\max}$ limit is inserted. Instantaneously, $\desCoM$ velocity along ${X}$-axis starts to decrease (brown dash-dotted line). Nevertheless, $\actCoM$ velocity does not decrease immediately causing $\actCoM$ to keep moving toward ${X}_{\max}$ boundary. This lag is due to the underdamped flexibilities dynamics. 
In fact, the ECBF produced deceleration is mapped mainly by QP to the ankles joints through~\eqref{eq:mapping Bfunc mu to u} as a little motion at these joints leads to a larger motion of the robot whole-body. However, the flexibilities underdamped dynamics leads $\actCoM$ to overshoot $\desCoM$ and thereby the error $\bfuncTrackErr$ in~\eqref{eq:bfunc track err} increases:  at $t=13$~s, $\desCoM$ velocity is zero whereas $\actCoM$ is heading toward ${X}_{\max}$ boundary with a velocity of $0.10$~m/s. Then at $t=13.3$~s, $\actCoM$ velocity reaches zero while $\desCoM$ is close to ${X}_{\min}$  boundary with a velocity of $-0.16$~m/s. When $\actCoM$ starts moving backward, its velocity increases highly leading to insert the ECBF constraint (relative to ${X}_{\min}$ boundary) with $\actCoM$ velocity reaching $-0.22$~m/s. At this point, the needed deceleration to stop $\actCoM$ at $X_{\min}$ boundary is high enough so that the QP fails to find corresponding feasible contact forces fulfilling~\eqref{eq:contact forces}, and the feet tip over.
\subsubsection{Experiment 3}
see~\cref{fig:com}\subref{subfig:com_Robust}; the constraint integral gain $\constraintIntegralDamping = 8.4\sqrt{\constraintStiffness}$ and $\constraintDamping = -1.2\sqrt{\constraintStiffness}<0$. Note that \cref{thm:RECBF} requirements are satisfied since $\constraintDamping+\constraintIntegralDamping>0$  and thereby the eigenvalues of $\FdesTaskOutRobust$ in~\eqref{eq:bfunc des dynamics - perturbed system} are strictly negative. In particular, RECBF constraint~\eqref{eq:RECBF formulation} writes similarly to~\eqref{eq:heterogeneous feedback with negative Kv}
\begin{align}\label{eq:RECBF formulation with negative Lv}
\begin{split}
\bfuncCrtlIn&\!\geq\! - 7.2\sqrt{\constraintStiffness}\desBfuncDot \!- 0.6\sqrt{\constraintStiffness}\left(\desBfuncDot\!-\!\bfuncDot\right) \!-\! \constraintStiffness\bfunc.
\end{split}	
\end{align}
The feedback term $(\desBfuncDot\!-\!\bfuncDot)$ in~\eqref{eq:RECBF formulation with negative Lv} helps to withstand the flexibilities effect. In fact, \cref{fig:com_Robust Zoom} shows that, when~\eqref{eq:RECBF formulation with negative Lv} relative to ${X}_{\max}$ boundary is inserted, $\desCoM$ velocity converges to zero while drifting toward $\actCoM$ velocity (${X}$ coordinates). Consequently, the delay between $\desCoM$ and $\actCoM$ states is lowered. This compliance behavior is the key factor behind avoiding over-regulation that leads to excessive deceleration in Experiment~2.
Also, compared to Experiment~1, $\desCoM$ compensates for joint-dynamics static error allowing $\actCoM$  to converge asymptotically to the polygon boundaries. 
More related to \cref{thm:RECBF}, the set $\setCd$ is made robustly stable, where the maximum overshoots along $X$ and $Y$ axes are:  $X_{\max}^{\rm{overshoot}} = 5.5$~mm, $X_{\min}^{\rm{overshoot}} = 3.2$~mm, $Y_{\max}^{\rm{overshoot}} = 6.5~$mm, $Y_{\min}^{\rm{overshoot}} = 4$~mm.  
\subsubsection{Experiment 4}
A fourth experiment is conducted to show the robust stability of $\setCd$ against external pushes using the same RECBF constraint~\eqref{eq:RECBF formulation with negative Lv}. Because of the high-stiffness joint controllers and high gear-ratio, the effect of the external disturbance forces is hardly observed at the joints' encoders. Yet, it can be measured by the floating-base observer affecting thereby the $\actCoM$ state. 

First, a Cartesian target is defined for HRP-4 right hand such that $\actCoM$ reaches the  polygon boundaries ${X}_{\max}$ and ${Y}_{\max}$. Then, the robot receives multiple external pushes from the operator (at the back and the shoulders) along ${X}$ and ${Y}$ axes (\cref{fig:com push snapshots}). Three persistent disturbance forces are applied, followed by two brief disturbances leading the flexibilities effect to enter into play (\cref{fig:com push}). During the whole experiment,  $\actCoM$ is pushed away from the polygon boundaries with a distance of at least $2$~cm. 
Here again, we can see the effect of the compliance feedback term. At the beginning of the persistent disturbance forces, $\desCoM$ slightly complies with the disturbance. Then, when the compliance term is less predominant, the compliance is lost and  QP  generates solutions such that $\desCoM$ counterbalances stiffly the disturbances enforcing $\actCoM$ to converge back smoothly to the polygon boundaries (\cref{fig:com push}\subref{subfig:com persistent push}). When applying brief perturbations (\cref{fig:com push}\subref{subfig:com brief push}), $\actCoM$ converges asymptotically to the polygon boundaries while complying to the transient flexibility response.  As in Experiment~3, the set $\setCd$ is made robustly stable. Nevertheless and similarly to \cref{subsec:experiment_robust stability}, it goes at the expanse of $\constraintIntegralDamping$  conservative tuning. 

\section{Conclusion}\label{sec:Conclusion}
In this paper, we propose a stable and robust closed-loop implementation of task-space QP control scheme for kinematic-controlled robots. Our solution allows free task-gains tuning and robust constraints design in the presence of non-modeled dynamics like joint-dynamics, flexibilities, and external disturbances. 
Our approach is proved to ensure the closed-loop stability by including integral feedback terms at both task and constraint levels leading to a robust convergence of their trajectories to the respective residual sets. Our method does not need the exact knowledge of the joint-dynamics model, but requires it to be ISS. Several experiments have been conducted on both floating-base and fixed-base robots to assess our new QP controller. Although not tackled in this paper, our approach can be extended to contact force control formulated as an admittance task~\cite{bouyarmane2019tro}.		
Future works will focus on reducing the conservativeness on the choice of the integral feedback gains $\taskIntegralDamping$ and $\constraintIntegralDamping$. Also, the conflict of RECBF with other constraints that leads to QP infeasibility is still an open problem. Up to now, constraints compatibility in QP control paradigms is among the main open questions that have not been well addressed and where model predictive control could be a candidate approach. 



\appendix
\subsection{Notations and Definitions}\label{app:notations}
Bold small letters stand for vectors, bold capital letters for matrices, and normal letters for scalars. In this work, there are three classes of variables:
\begin{enumerate}
\item those with subscript $_{\rm ref}$  are the task-space reference targets given either by the operator or a task planner;  
\item those with subscript $_{\rm d}$ are the desired variables in (i) the joint-space resulting from the integration of the desired acceleration (direct output of the QP), or (ii) in the task-space which are the mapping of the former; and
\item without any subscript are the variables tracking the desired once in~2)
\end{enumerate}
--~$\mathbb{R}$ and $\mathbb{R}^{+}$ are the sets of real and non-negative real numbers, respectively. For $\boldsymbol{\chi}\in \cal X$, $\boldsymbol{\alpha}_{\boldsymbol{\chi}}$ is the \emph{velocity} of $\boldsymbol{\chi}$. If $\cal X$ is Euclidean then $\boldsymbol{\alpha}_{\boldsymbol{\chi}} = {\boldsymbol{\dot{\chi}}}$.
$|\boldsymbol{\chi}|$ and $\norm{\boldsymbol{\chi}}$ denote the component-wise absolute value and the Euclidean norm of $\boldsymbol{\chi}$, respectively.
$\norm{\boldsymbol{\chi}}_\infty = \underset{t\geq0}{\sup}\norm{\boldsymbol{\chi}(t)}$, $\boldsymbol{\chi}\in\mathbb{R}^{x}$ is said to be bounded if~$\norm{\boldsymbol{\chi}}_\infty\!<\!\infty$. 
The transpose of $\boldsymbol{\chi}$ is denoted $\boldsymbol{\chi}\tp$.
$\underline{\lambda}(\mathbf{A}),\overline{\lambda}(\mathbf{A})$ denote respectively the minimum and maximum eigenvalues of matrix $\mathbf{A}$. All Jacobian matrices used in this work are assumed to be non-singular.

--~$\gamma\!:\!\mathbb{R}^+\!\rightarrow\! \mathbb{R}^+$ is a class $\cal K_\infty$  function if it is continuous, strictly increasing, $\gamma(0)=0$ and $\gamma(s)\!\overset{s \!\rightarrow\!\infty}{\longrightarrow} \!\infty$. 

--~$\beta\!:\mathbb{R}^{+}\!\times\!\mathbb{R}^{+}\!\rightarrow\! \mathbb{R}^{+}$ is a class $\cal KL$  function if for each fixed $t \!\geq\!0$, $\beta(s,t)$ is a class $\cal K$ function, and for each fixed $s \!\geq\!0$, it  decreases to $0$ as $t \!\rightarrow\!\infty$.

$\norm{\text{ }}_\Omega$ denotes the Euclidean point-to-set distance: $\norm{\boldsymbol{\chi}}_\Omega\!= \! \mathrm{dist}(\boldsymbol{\chi};\Omega)\!=\!\inf\left\{\mathrm{dist}(\boldsymbol{\chi},\bm{a})|\bm{a} \!\in\!\Omega\right\}\!=\!\underset{\bm{a} \in\Omega}{\inf}\norm{\boldsymbol{\chi}\!-\! \bm{a}}$.

--~Let us consider the system 
\begin{equation}\label{eq:system with input}
\boldsymbol{\dot{\chi}} = f_{\boldsymbol{\chi}} (\boldsymbol{\chi},\boldsymbol{\upsilon}),
\end{equation} 
where $\boldsymbol{\chi}\in\mathbb{R}^x$ and $\boldsymbol{\upsilon}\in\mathbb{R}^u$. 
For any initial condition $\boldsymbol{\chi}(t_0)\in\mathbb{R}^x$, there exists a maximum time interval $I\left(\boldsymbol{\chi}(t_0)\right)=[t_0,t_{\max}]$ such that $\boldsymbol{\chi}(t)$ is the unique solution of~\eqref{eq:system with input} on $I(\boldsymbol{\chi}(t_0))$. If $t_{\max}=\infty$ then $f_{\boldsymbol{\chi}(t_0)}$ is forward complete. System~\eqref{eq:system with input} is said to be \emph{autonomous} when $\boldsymbol{\upsilon}=0$.  A set $\setS\subset\mathbb{R}^x$ is called \emph{forward invariant} w.r.t autonomous system~\eqref{eq:system with input} if $\forall \boldsymbol{\chi}(t_0)\in\setS$ then $\boldsymbol{\chi}(t)\in\setS, \ \forall t\in I(\boldsymbol{\chi}(t_0))$. In addition, a closed and forward invariant set ${\setS}\subset\mathbb{R}^x$ is asymptotically stable for a forward-complete autonomous system~\eqref{eq:system with input} if there exist on open set $\cal R\supseteq\setS$, and a class $\classKL$ function $\beta$ such that
\begin{equation*}\label{eq:asymptotic stability of a set}
\norm{\boldsymbol{\chi}(t)}_{\setS}\leq\beta\left(\norm{\boldsymbol{\chi}(t_0)}_{\setS},t-t_0\right), \ \forall \boldsymbol{\chi}(t_0)\in\cal R.
\end{equation*}
--~\textbf{Robust Global Uniform Asymptotic Stability:}~\cite{freeman1994cdc,freeman1996bookRobustNNlinearControl}
Consider the system
\begin{equation}\label{eq:system with input and time}
\boldsymbol{\dot{\chi}} = f_{\boldsymbol{\chi}} (\boldsymbol{\chi},\boldsymbol{\upsilon},t).
\end{equation}
Fix a control $\boldsymbol{\upsilon}$, and let $\Omega\subset \cal X$ be a compact set containing the origin. The solutions of the system~\eqref{eq:system with input and time} are Robustly Globally Uniformly Asymptotically Stable w.r.t $\Omega$ (RGUAS-$\Omega$) when there exists class ${\cal KL}$ function $\beta$ such that for all admissible measurements, admissible disturbance, and initial conditions $(\boldsymbol{\chi}(t_0),t_0)\in {\cal X}\times\mathbb{R}$, all solutions $\boldsymbol{\chi}(t)$ exist and satisfy
\begin{equation}\label{eq:def of RGUAS-omega}
\norm{\boldsymbol{\chi}(t)}_\Omega\leq \beta(\norm{\boldsymbol{\chi}(t_0)}_\Omega,t-t_0).
\end{equation}
System~\eqref{eq:system with input and time} is \textbf{robustly practically stabilizable} when $\forall \epsilon>0$ there exist an admissible control and a compact set $\Omega\subset{\cal X}$ satisfying $0\in\Omega\subset\epsilon \cal B$, with $\cal B$ the unit ball set, such that the solutions  $\boldsymbol{\chi}(t)$ are RGUAS-$\Omega$.\\
--~\textbf{Rayleight-Ritz Inequality}:~\cite{rugh1996LinearSystemTheory}\label{app:rayghleight}
Given a symmetric matrix $\mathbf{A}\in\mathbb{R}^{x\times x}$ the following inequality  holds $\forall \boldsymbol{\chi}\in\mathbb{R}^x$:
\begin{equation}\label{eq:rayleight-ritz inequality}
\underline{\lambda}(\mathbf{A})\norm{\boldsymbol{\chi}}^2\leq \boldsymbol{\chi}\tp \mathbf{A}\chi\leq\overline{\lambda}(\mathbf{A})\norm{\boldsymbol{\chi}}^2 .
\end{equation}
--~\textbf{Schwartz inequality}:~\cite{strang1988bookLinearAlgebra}\label{app:schwartz}
$\forall \boldsymbol{\chi},\boldsymbol{\zeta} \in\mathbb{R}^x$,
\begin{equation}\label{eq:schwartz inequality}
|\boldsymbol{\chi}\tp\boldsymbol{\zeta}|\leq\norm{\boldsymbol{\chi}}\norm{\boldsymbol{\zeta}}.
\end{equation}

\subsection{Proof of \cref{prop 1}}\label{proof:prop1}
\begin{proof} 
The proof is established for $\desTaskOut $, the same steps apply for $\desBfuncOut $. Here, the dependency on time $(t)$ is made explicit. Given~\eqref{eq:des task dyn}, let us assume  $\exists\taskCrtlIn\in\mathbb{R}^{m} \;|\; \desTaskOut (t)$ is bounded
\begin{align}\label{eq:boundedness of xi}
\begin{split}
\norm{\desTaskOut(t)} \leq M &\Rightarrow \left\{\begin{matrix}
\norm{\desOut(t)}  \leq M\\ 
\norm{\desOutDot(t)}  \leq M 
\end{matrix}\right., \forall t\geq0, \\
&\eqref{eq:des output vector}\Leftrightarrow \left\{\begin{matrix}
\norm{\fkin(\genDesConf(t)) - \fkinRef(t)}  \leq M,\\ 
\norm{\desJac\genDesConfDot(t) - \fkinRefDot(t) }  \leq M ,
\end{matrix}\right.
\end{split} 
\end{align} where $\fkinRef(t)$ and $\fkinRefDot(t)$ are obviously bounded. 
Let us prove that if $\desOut(t) $ is bounded then $\genDesConf(t) $ is bounded. 
Let us define $\genRefConf(t)\in{\cal Q}$\footnote{In~\eqref{eq:cal Q def}, $\fkinRef$ is assumed to be strictly reachable. Otherwise, we can define $\cal Q$ as ${\cal Q}\! =\!\left\{\genDesConf^*(t)\! \in\!\mathbb{R}^{7+n}\!:\!\genDesConf^* \!=\! \arg\min\norm{\fkin\left(\genDesConf(t) \right)\!-\!\fkinRef(t)}\right\}$ which leads to $\fkin\left(\genRefConf(t)\right) \!=\! \fkinRef(t) \!+\! \boldsymbol{\delta}_{\fkin}(t)$, with $\boldsymbol{\delta}_{\fkin}(t)\!\inR^{m}$ bounded. The remaining of the proof is not affected.} with 
\begin{equation}\label{eq:cal Q def}
{\cal Q} =\left\{\genDesConf(t) \in\mathbb{R}^{7+n}:\fkin\left(\genDesConf(t) \right)=\fkinRef(t)\right\}.
\end{equation}
Thus, by putting $\Delta \genDesConf(t)\!=\! \genDesConf(t)\!  -\! \genRefConf(t)$, $\fkin\left(\genDesConf(t) \right)$  writes using Taylor expansion
\begin{align}\label{eq:DL y_d}
\begin{split}
\fkin\left(\genDesConf(t)\right) &= \fkin\left(\genRefConf(t) + \Delta \genDesConf(t)\right),\\
&= \fkinRef(t) + R(\Delta \genDesConf(t)),\\ R(\Delta \genDesConf(t))&=\left.\begin{matrix}
\frac{\partial \fkin(\genDesConf(t))}{\partial \genDesConf(t)}
\end{matrix}\right|_{\genDesConf(t)=\check{\mathbf{\actConf}}_{\mathrm{d}}(t)}\Delta \genDesConf(t),\\ 
\end{split}
\end{align} 
where $R(\Delta \genDesConf(t))$ is the Lagrange remainder with $\check{\genActConf}_{\mathrm{d}}(t)\!=\! \genRefConf(t)\!+\theta\Delta \genDesConf(t)$, $0\leq\theta\leq1$~\cite[Chapter IV, Section 6]{piskunov1969book}.
From~\eqref{eq:boundedness of xi},~\eqref{eq:DL y_d} we have
\begin{equation}
\norm{R\left(\Delta \genDesConf(t)\right)}\leq M.
\end{equation}
If $\frac{\partial \fkin(\genDesConf(t))}{\partial \genDesConf(t)}$ is non-singular, then $\forall \theta$ with $0\leq\theta\leq1$ there exist $b,b_0$, $b\geq b_0>0$ such that~\cite{golub2013matrixComputations}
\begin{align}\label{eq:qd bounded}
\begin{split}
& \norm{R\left(\Delta \genDesConf(t)\right)}  = b \norm{\Delta \genDesConf(t)}  \Rightarrow  \norm{\Delta \genDesConf(t)} \leq \frac{M}{b}, \\
&\Leftrightarrow  \norm{\genDesConf(t)}  -  \norm{\genRefConf(t)}  \leq  \norm{\Delta \genDesConf(t)} \leq \frac{M}{b}, \\
&\Rightarrow  \norm{\genDesConf(t)}  \leq \frac{M}{b} + \norm{\genRefConf(t)} .
\end{split}
\end{align}
Given that $\genRefConf(t)$ is bounded, then $\genDesConf(t)$ is bounded.  

Now, let us prove that if $\desOutDot(t)$ is bounded then $\genDesConfDot(t)$ is bounded. 
$\genDesConfDot(t)$ can be written as $\genDesConfDot(t) = \hat{\boldsymbol{\alpha}}_{{\genActConf}_\mathrm{d}}(t) + \boldsymbol{\alpha}^{\#}_{{\genActConf}_\mathrm{d}}(t)$ such that  $\boldsymbol{\alpha}^{\#}_{{\genActConf}_\mathrm{d}}(t)\in\ker\{\desJac\}$ with $\boldsymbol{\alpha}^{\#}_{{\genActConf}_\mathrm{d}}(t) = \left(\mathbf{I}-{\desJac}^{+}\desJac\right)\boldsymbol{\nu}(t)$, where ${\desJac}^{+}$ is the Moore-Penrose Jacobian inverse and $\boldsymbol{\nu}(t)\in\mathbb{R}^{6+n}$ denotes the remaining velocity redundancy.  In QP~\eqref{eq:robust QP for combination}, the redundancy state is bounded by a secondary (posture) task. Furthermore,~\eqref{eq:contact forces} and~\eqref{subeq:contact constraint} ensures bounded and feasible floating base solutions.  Hence, $\boldsymbol{\nu}(t)$ is bounded. 		
Let us show the boundedness of $\hat{\boldsymbol{\alpha}}_{{\genActConf}_\mathrm{d}}(t)$. From~\eqref{eq:boundedness of xi} 
\begin{align}
\begin{split}
\norm{\desJac\hat{\boldsymbol{\alpha}}_{{\genActConf}_\mathrm{d}}(t)} \!-\! \norm{\fkinRefDot(t)}\leq\norm{\desJac\hat{\boldsymbol{\alpha}}_{{\genActConf}_\mathrm{d}}(t) \!-\! \fkinRefDot(t) }  \!\leq\! M, \\ 
\Rightarrow \norm{\desJac\hat{\boldsymbol{\alpha}}_{{\genActConf}_\mathrm{d}}(t)} \leq M + \norm{\fkinRefDot(t)}.
\end{split}
\end{align}
Given that $\desJac$ is non-singular, then there exist $b',b'_0$ with $b'\!\geq\! b'_0\!>\!0$ such that~\cite{golub2013matrixComputations}
\begin{align}
\begin{split}
\norm{\desJac \hat{\boldsymbol{\alpha}}_{{\genActConf}_\mathrm{d}}(t)}  = b' \norm{\hat{\boldsymbol{\alpha}}_{{\genActConf}_\mathrm{d}}(t)}&\leq M+ \norm{\fkinRefDot(t)},\\ \Rightarrow  \norm{\hat{\boldsymbol{\alpha}}_{{\genActConf}_\mathrm{d}}(t)} &\leq \frac{M+ \norm{\fkinRefDot(t)}}{b'}.
\end{split}
\end{align}
Hence, $\genDesConfDot(t)$ is bounded such that
\begin{equation}\label{eq:dot_qd bounded}
\norm{\genDesConfDot(t)}\leq\norm{\hat{\boldsymbol{\alpha}}_{{\genActConf}_\mathrm{d}}(t)} + \norm{\mathbf{I}-{\desJac}^+\desJac}\norm{\boldsymbol{\nu}(t)}.
\end{equation}
From~\eqref{eq:gen joint dyn def} and following from~\eqref{eq:dot_qd bounded} and~\eqref{eq:qd bounded} $\genDesJointDyn(t)$ is bounded 
implying that, given~\eqref{eq:mapping mu to u},  $\exists\UgenDesJointDyn\in {\cal U}$ such that $\genDesJointDyn(t)$ is bounded. 
\end{proof}
\subsection{Proof of \cref{prop 2}}\label{proof:prop2}
\begin{proof}
As in \cref{prop 1} proof, \cref{prop 2} proof is established for $\desTaskOut $ (the same steps apply for $\desBfuncOut $) and the dependency on time $(t)$ is made explicit.
Let us consider system~\eqref{eq:des task dyn} and assume that there exists $\taskCrtlIn$ such that $\desTaskOut(t) $ is (uniformly) ultimately bounded. Then, there exists an ultimate bound $\varrho_{\desTaskOut}>0$, such that $\forall {M}_{\desTaskOut}>0$, $\exists T_{\desTaskOut}=T_{\desTaskOut}(M_{\desTaskOut},\varrho_{\desTaskOut})>0$ such that~\cite[Definition~4.6]{khalil2002NonLinearSystems}
\begin{equation}\label{eq:uniform ultimate boundedness def}
\norm{\desTaskOut(t_0)}\leq M_{\desTaskOut} \Rightarrow \norm{\desTaskOut(t) } \leq \varrho_{\desTaskOut}, \ \forall t\geq t_0 +T_{\desTaskOut}.
\end{equation}
From \cref{prop 1} proof in \cref{proof:prop1} and~\eqref{eq:uniform ultimate boundedness def}, it yields that there exists $\varrho_{\desJointDyn}=\varrho_{\desJointDyn}(\varrho_{\desTaskOut})>0$ such that $\norm{\desJointDyn(t)}\leq\varrho_{\desJointDyn}, \ \forall t\geq t_0 +T_{\desTaskOut}$.
Hence, from \cref{assum1}, we get 
\begin{align*}
\begin{split}
\norm{\jointDynTrackErr(t)} &\!=\! \norm{\actJointDyn(t) - \desJointDyn(t)}\leq \sigma  \\	&\!\Rightarrow\! \norm{\actJointDyn(t)} \!\leq \sigma \!+ \! \norm{\desJointDyn(t)}\leq \sigma \!+\! \varrho_{\desJointDyn}, \ \forall t\geq t_0 \!+ \! T_{\desTaskOut}.
\end{split}
\end{align*}
In addition, the robot floating-base state $\actFBDyn(t) $ is bounded by assumption which leads to the boundedness of $\genActJointDyn$~\eqref{eq:gen joint dyn def}. 
Thus, from~\eqref{eq:DL act task} and~\eqref{eq:uniform ultimate boundedness def}, 
there exists an ultimate bound $\varrho_{\actTaskOut}= \varrho_{\desTaskOut} +\norm{\taskTrackErr }_\infty>0$, such that $\forall {M}_{\actTaskOut}=M_{\desTaskOut} + \norm{\taskTrackErr(t_0)}>0$, there exists $ T_{\actTaskOut}=T_{\actTaskOut}({M}_{\actTaskOut},\varrho_{\actTaskOut})>0$ such that 
\begin{equation}
\norm{\actTaskOut(t_0)}\leq {M}_{\actTaskOut} \Rightarrow \norm{\actTaskOut(t)} \leq \varrho_{\actTaskOut}, \ \forall t\geq t_0 + T_{\actTaskOut},
\end{equation} which yields to $\actTaskOut(t) $ is (uniformly) ultimately bounded.
\end{proof}

\subsection{Proof of \cref{thm:heterogeneous feedback}}\label{proof:thm hetero feedback}
\begin{proof}
Replacing~\eqref{eq:heterogeneous feedback mu} in~\eqref{eq:des task dyn} yields to 
\begin{align}\label{eq:des task dynamics around equilibrium-hetero approach} 
\desTaskOutDot = \FdesTaskOutRobust\desTaskOut-\BdesTaskOut \taskGains\taskTrackErr,
\end{align} where $\FdesTaskOutRobust$ and $\taskGains$ are defined as in~\cref{thm:heterogeneous feedback} and~\eqref{eq:mu output feedback}, respectively.
Let us consider the following Lyapunov function associated to~\eqref{eq:des task dynamics around equilibrium-hetero approach}
\begin{align}\label{eq:Lyapunov function xi heterogeneous feedback}
\gamma_1\left(\norm{\desTaskOut}\right)\leq V(\desTaskOut) = \frac{1}{2}\desTaskOut\tp \mathbf{P}_\desTaskOut\desTaskOut\leq\gamma_2\left(\norm{\desTaskOut}\right)
\end{align}
where $\gamma_1\left(\norm{\desTaskOut}\right)\!=\!\frac{\underline{\lambda}(\mathbf{P}_\desTaskOut)}{2}\norm{\desTaskOut}^2$ and $\gamma_2\left(\norm{\desTaskOut}\right)\!=\!\frac{\overline{\lambda}(\mathbf{P}_\desTaskOut)}{2}\norm{\desTaskOut}^2$ are class $\cal K_\infty$ functions, and $\mathbf{P}_\desTaskOut=\mathbf{P}_\desTaskOut\tp>0$ is the solution of the following Algebraic Riccati Equation (ARE)
\begin{equation}\label{eq:riccati eq for heterogeneous feedback}
\FdesTaskOutRobust\tp\mathbf{P}_\desTaskOut + \mathbf{P}_\desTaskOut \FdesTaskOutRobust = -\mathbf{Q}_\desTaskOut = -\begin{bmatrix}
\taskIntegralDamping & 0 \\ 0 & \taskIntegralDamping
\end{bmatrix}.
\end{equation}
Given~\eqref{eq:riccati eq for heterogeneous feedback}, $\dot{V} = -\frac{1}{2}\desTaskOut\tp \mathbf{Q}_\desTaskOut\desTaskOut - \desTaskOut\tp \mathbf{P}_\desTaskOut \BdesTaskOut \taskGains\taskTrackErr$.  
%
Given that $\norm{\BdesTaskOut}=1$,  $\norm{\mathbf{P}_\desTaskOut}=\overline{\lambda}(\mathbf{P}_\desTaskOut)$ and $\underline{\lambda}(\mathbf{Q}_\desTaskOut) =\underline{\lambda}(\taskIntegralDamping)>0$, and using Rayleight-Ritz~\eqref{eq:rayleight-ritz inequality} and Schwartz~\eqref{eq:schwartz inequality} inequalities, $\dot{V}$ is bounded such that
\begin{align}
\begin{split}
\dot{V}&\leq -\frac{1-\vartheta}{2}\underline{\lambda}(\taskIntegralDamping)\norm{\desTaskOut}^2\\
& - \frac{\vartheta}{2}\underline{\lambda}(\taskIntegralDamping)\norm{\desTaskOut}\left(\norm{\desTaskOut}\!-\frac{2\overline{\lambda}(\mathbf{P}_\desTaskOut)\norm{\taskGains}\norm{\taskTrackErr}}{\vartheta\underline{\lambda}(\taskIntegralDamping)}\right),
\end{split}
\end{align} with $0<\vartheta<1$.
Thus, if $\taskIntegralDamping$ is chosen such that
\begin{equation}\label{eq:cdt on norm - Lv}
\norm{\desTaskOut}\geq\frac{2\overline{\lambda}(\mathbf{P}_\desTaskOut)\norm{K}}{\vartheta\underline{\lambda}(\taskIntegralDamping)}\norm{\taskTrackErr}_\infty,
\end{equation}
then $\dot{V}\leq-\frac{1-\vartheta}{2}\underline{\lambda}(\taskIntegralDamping)\norm{\desTaskOut}^2$.
By the virtue of~\cite[Theorem~4.18]{khalil2002NonLinearSystems}, $\desTaskOut $ is uniformly ultimately bounded with ultimate bound $\varrho= \sqrt{\frac{\overline{\lambda}(\mathbf{P}_\desTaskOut)}{\underline{\lambda}(\mathbf{P}_\desTaskOut)}}\frac{2\overline{\lambda}(\mathbf{P}_\desTaskOut)\norm{\taskGains}}{\vartheta\underline{\lambda}(\taskIntegralDamping)}\norm{\taskTrackErr}_\infty$.
Furthermore, $\forall \norm{\desTaskOut(t_0)} \leq M$ there exist $T= T(M,\varrho)>0$, a class $\classKL$ function $\beta$, and a closed set  $\Omega_\desTaskOut \!=\! \left\{\desTaskOut \!\in\! H:\norm{\desTaskOut }\!\leq\!\varrho\right\}$  such that 
\begin{align}\label{eq:UB des task state}
\begin{split}
\norm{\desTaskOut(t) }_{\Omega_\desTaskOut}&\!\leq\!\beta\left(\norm{\desTaskOut(t_0)}_{\Omega_\desTaskOut},t-t_0\right), \ \forall t_0\!\leq\! t \!\leq\! t_0 +T, \\
\norm{\desTaskOut(t) }_{\Omega_\desTaskOut} &=0, \ \forall t \leq t_0 +T.
\end{split}
\end{align}
Given~\eqref{eq:cdt on norm - Lv}, the residual set $\Omega_\desTaskOut$  can be made arbitrarily small by $\taskIntegralDamping$. Hence,  $\desTaskOut $ is robustly practically stable w.r.t $\Omega_\desTaskOut$~\cite[Defintion~3.2]{freeman1996bookRobustNNlinearControl}.
\end{proof}

\subsection{Proof of \cref{thm:RECBF}}\label{proof:RECBF}
\begin{proof}
The matrix gain $\KBfuncEq$ is chosen to ensure that $\desBfunc$ is ECBF for the nominal system $\bfuncTrackErr=\zero$. 

Now, let us prove that $\desBfuncOut $ is uniformly ultimately bounded. Inequality~\eqref{eq:RECBF formulation} can be expressed as
\begin{equation}\label{eq:mu for RECBF}
\bfuncCrtlIn = -\constraintGainsPsi\bfuncPsi + \bfuncDelta(t), \ 0\leq\bfuncDelta(t)\leq\bfuncDeltaMax,
\end{equation}
with $\bfuncDelta(t)$ a slack variable that facilitates the manipulation of~\eqref{eq:RECBF formulation}. Given~\eqref{eq:mu for RECBF}, system~\eqref{eq:Bfunc transverse dyn} becomes
\begin{equation}\label{eq:bfunc des dynamics - perturbed system}
\desBfuncOutDot = \FdesBfuncOutRobust\desBfuncOut + \BdesBfuncOut\left(-\constraintGains\bfuncTrackErr+\bfuncDelta(t)\right) ,
\end{equation}
where $\constraintGains$ defined as in~\eqref{eq:ECBF in feedback}, and $\FdesBfuncOutRobust= \AdesBfuncOut-\BdesBfuncOut\KBfuncEq$. Let us consider the following Lyapunov function~\cite{xu2015ifac}\footnote{Note that \eqref{eq:Lyapunov function for RECBF} allows to use the same theoretical tools as in the proof of \cref{thm:heterogeneous feedback} in \cref{proof:thm hetero feedback}.}
\begin{equation}\label{eq:Lyapunov function for RECBF}
V= \left\{\begin{matrix}
0, \text{ if } \genDesJointDyn\in\setCd\\
\frac{1}{2}\desBfuncOut\tp\PdesBfuncOut\desBfuncOut, \text{ otherwise } 
\end{matrix}\right.
\end{equation}
where $\PdesBfuncOut=\PdesBfuncOut\tp>0$ is the solution of the following ARE
\begin{equation}\label{eq:riccati eq for RECBF}
\FdesBfuncOutRobust\tp\PdesBfuncOut + \PdesBfuncOut \FdesBfuncOutRobust = -\QdesBfuncOut = -\begin{bmatrix}
\constraintIntegralDamping & 0 \\ 0 & \constraintIntegralDamping
\end{bmatrix}.
\end{equation}
The goal is to show that there exists a set $\setCd_{\sigma}\supseteq\setCd$ such that $\dot{V}<0,  \forall \genDesJointDyn \in \mathbb{R}^{13+2n}\setminus \setCd_{\sigma}$.  
Using~\eqref{eq:riccati eq for RECBF},  $\dot{V}$ is computed as 
\begin{align}\label{eq:Vdot RECBF 1}
\begin{split}
\dot{V}\!= \!-\frac{1}{2}\desBfuncOut\tp\QdesBfuncOut\desBfuncOut \!+\! \desBfuncOut\tp\PdesBfuncOut\BdesBfuncOut\left(-\constraintGains\bfuncTrackErr\!+\!\bfuncDelta(t)\right).
\end{split}
\end{align}
Using Rayleight-Ritz~\eqref{eq:rayleight-ritz inequality} and Schwartz~\eqref{eq:schwartz inequality} inequalities,~\eqref{eq:Vdot RECBF 1} becomes
\begin{align}
\begin{split}
\dot{V}&\leq -\frac{1}{2}\underline{\lambda}(\QdesBfuncOut)\norm{\desBfuncOut}^2 \\  &+ \norm{\desBfuncOut}\norm{\PdesBfuncOut}\norm{\BdesBfuncOut}\left(\norm{\constraintGains}\norm{\bfuncTrackErr}+\norm{\bfuncDelta(t)}\right).
\end{split}
\end{align}
By putting $\varphi = \norm{\constraintGains}\norm{\bfuncTrackErr}+\norm{\bfuncDelta(t)}$, and  given that $\underline{\lambda}(\QdesBfuncOut)=\constraintIntegralDamping>0$, $\norm{\PdesBfuncOut}=\overline{\lambda}(\PdesBfuncOut)$, $\norm{\BdesBfuncOut} = 1$,  then 
\begin{align}
\begin{split}
\dot{V}&\leq-\frac{1-\vartheta}{2}\constraintIntegralDamping\norm{\desBfuncOut}^2 \\
&-\frac{\vartheta}{2}\constraintIntegralDamping\norm{\desBfuncOut}\left(\norm{\desBfuncOut}-\frac{2\overline{\lambda}(\PdesBfuncOut)}{\vartheta\constraintIntegralDamping}\varphi\right),
\end{split}
\end{align} with $0\!<\!\vartheta\!<\!1$. 
Hence, if $\constraintIntegralDamping$ is chosen such that
\begin{equation}\label{eq:cdt on Lv RECBF}
\norm{\desBfuncOut}\geq\frac{2\overline{\lambda}(\PdesBfuncOut)}{\vartheta\constraintIntegralDamping}\varphi_\infty, \ \varphi_\infty = \norm{\constraintGains}\norm{\bfuncTrackErr}_\infty+\bfuncDeltaMax,
\end{equation}  then $\dot{V}\leq  -\frac{1}{2}\constraintIntegralDamping\norm{\desBfuncOut}^2$. 
By the virtue of~\cite[Theorem~4.18]{khalil2002NonLinearSystems}, $\desBfuncOut $ is uniformly ultimately bounded with ultimate bound $\sigma=\sqrt{\frac{\overline{\lambda}(\PdesBfuncOut)}{\underline{\lambda}(\PdesBfuncOut)}}\frac{2\overline{\lambda}(\PdesBfuncOut)}{\vartheta\constraintIntegralDamping}\varphi_\infty$. In addition, there exists a closed set $\setCd_{\sigma}$ which is asymptotically stable and forward invariant\footnote{Forward invariance and asymptotic stability follow from the uniform ultimate boundedness property of $\desBfuncOut$.}.
Given that $\setCd\subseteq\setCd_{\sigma}$ then following from \cref{def:RS-sigma}, $\setCd$ is robustly stable, and thereby, from \cref{def:RECBF}, $\desBfunc$ is a RECBF.
\end{proof}	

\subsection{Proof of \cref{prop:relaxed heterogeneous feedback}}\label{proof:prop3}
\begin{proof}
The superscript $^i$ is dropped for the sake of clarity.
Substituting~\eqref{eq:heterogeneous feedback relaxed} in~\eqref{eq:des task dyn} yields to 
\begin{align*}
\begin{split}
\desTaskOutDot&=\FdesTaskOutRobust\desTaskOut-\BdesTaskOut \taskGains\taskTrackErr+\BdesTaskOut\delta(t).
\end{split}
\end{align*}
Let us consider Lyapunov function $V$ in~\eqref{eq:Lyapunov function xi heterogeneous feedback} such that~\eqref{eq:riccati eq for heterogeneous feedback} holds. Following the same steps in \cref{thm:heterogeneous feedback} proof, $\dot{V}$ is bounded such that
\begin{align*}\label{eq:bounds on Vdot for relaxed task}
\begin{split}
&\dot{V} \! \leq\! -\frac{1}{2}(1\!-\!\vartheta)\underline{\lambda}(\taskIntegralDamping)\norm{\desTaskOut}^2 \\ 
&\!-\!\frac{\vartheta\underline{\lambda}(\taskIntegralDamping)}{2}\norm{\desTaskOut}\left(\norm{\desTaskOut}\!\!-\!\frac{2\overline{\lambda}(\mathbf{P}_\desTaskOut)}{\vartheta\underline{\lambda}(\taskIntegralDamping)}\left(\norm{\taskGains}\norm{\taskTrackErr}\!\!+\!\!\norm{\delta(t)}\right)\right).
\end{split}
\end{align*}
If $\taskIntegralDamping$ is chosen such that $\norm{\desTaskOut}\geq\frac{2\overline{\lambda}(\mathbf{P}_\desTaskOut)}{\vartheta\underline{\lambda}(\taskIntegralDamping)}\left(\norm{\taskGains}\norm{\taskTrackErr}_\infty\!+\!\delta_{\max}\right)$ then $\dot{V}\leq -\frac{1}{2}(1-\vartheta)\underline{\lambda}(\taskIntegralDamping)\norm{\desTaskOut}^2$. From~\cite[Theorem~4.18]{khalil2002NonLinearSystems}, it follows that $\desTaskOut$ is uniformly ultimately bounded with ultimate bound $\varrho= \sqrt{\frac{\overline{\lambda}(\mathbf{P}_\desTaskOut)}{\underline{\lambda}(\mathbf{P}_\desTaskOut)}}\frac{2\overline{\lambda}(\mathbf{P}_\desTaskOut)}{\vartheta\underline{\lambda}(\taskIntegralDamping)}\left(\norm{\taskGains}\norm{\taskTrackErr}_\infty + \delta_{\max}\right)$. Following the same steps in~\eqref{eq:UB des task state}, $\desTaskOut$ is robustly practically stable w.r.t the residual set $\Omega_\desTaskOut = \{\desTaskOut\in H: \norm{\desTaskOut}\leq\varrho\}$.
\end{proof}	

\bibliographystyle{ieeetr}
\bibliography{biblio.bib}

\vspace{-1cm}
\begin{IEEEbiography}[{\includegraphics[width=1in,height=1.25in,clip,keepaspectratio]{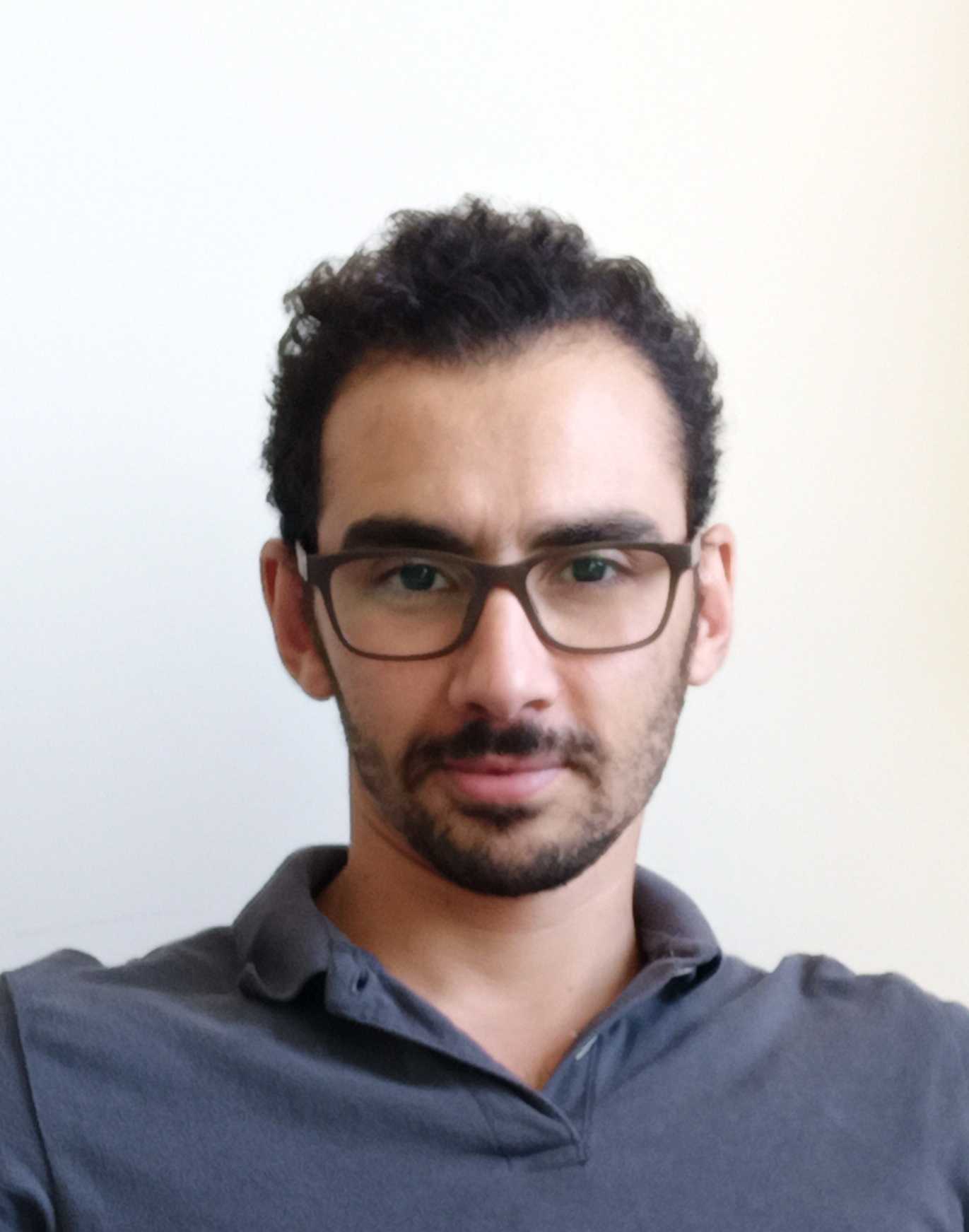}}]{Mohamed Djeha} received both the B.S. and M.S. degree in 2017 from Ecole Militaire Polytechnique (EMP), Algiers in Systems Control. He then received the Ph.D. degree in robotics from the University of Montpellier in September 2022. 
	
	His MS research was focused on exploiting electrophysiological signals to control robotics systems using machine learning and data fusion techniques. His Ph.D. research has been conducted at CNRS-University of Montpellier LIRMM in Montpellier on multi-objective and optimization-based control for redundant robotic manipulators and humanoids. Currently, he is a researcher at the Laboratory of Complex Systems Control and Simulators at EMP.
\end{IEEEbiography}
\vspace{-1cm}
\begin{IEEEbiography}[{\includegraphics[width=1in,height=1.25in,clip,keepaspectratio]{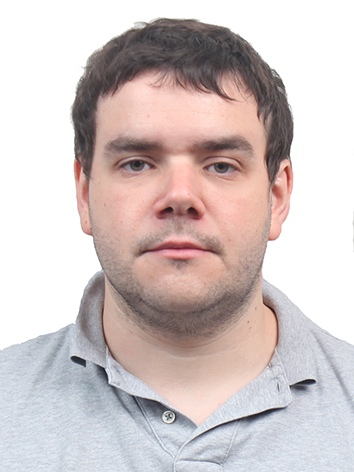}}]{Pierre Gergondet} received the M.S. degree in 2010 from \'Ecole Centrale de Paris with a speciality in embedded systems. He then received the Ph.D. degree in robotics from the University of Montpellier in 2014. 
	
	His Ph.D. research was conducted on controlling humanoid robot using brain-computer interfaces at the CNRS-AIST Joint Robotics Laboratory (JRL), IRL 3218 in Tsukuba Japan. He continued to work in JRL as a CNRS Research Engineer leading the software developments of the multi-contact real time framework: mc\_rtc. Between 2019 and 2022, he joined the Beijing Advanced Innovation Center for Intelligent Robots and Systems (BAICIRS) at the Beijing Institute of Technology (BIT) as a special associate researcher, he has since resumed his position at JRL. His current research interests include humanoid robots, control software for robotics and robotics applications.
\end{IEEEbiography}
\vspace{-1cm}
\begin{IEEEbiography}[{\includegraphics[width=1in,height=1.25in,clip,keepaspectratio]{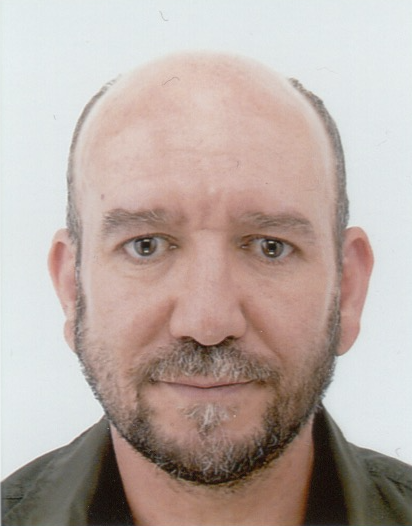}}]{Abderrahmane Kheddar} (F'22, SM’12, M'08) received the B.S. in Computer Science degree from the Institut National d'Informatique (ESI), Algiers, Algeria in 1990, and the M.Sc. and Ph.D. degree in robotics, both from Pierre et Marie Curie University, Sorbonne University, Paris, France 1993 and 1997, respectively. 

He is presently Directeur de Recherche at CNRS at the CNRS-AIST Joint Robotic Laboratory (JRL), IRL, Tsukuba, Japan (that he created in 2008, Director from 2008 to 2018). He also created and led the Interactive Digital Humans (IDH) team at CNRS-University of Montpellier LIRMM (from 2010 to 2020), France. His research interests include haptics, humanoids and thought-based control using brain machine interfaces. He is a founding member of the IEEE/RAS chapter on haptics, the co-chair and founding member of the IEEE/RAS Technical committee on model-based optimization, he is a member of the steering committee of the IEEE Brain Initiative, Editor of the IEEE Robotics and Automation Letters, Founding member and Deputy Editor-in-Chief of Cyborg and Bionics System (a Science partner journal). He was Editor of the IEEE Transactions on Robotics (2013-2018) and within the editorial board of other robotics journals; he is a founding member of the IEEE Transactions on Haptics and served in its editorial board during three years (2007-2010). He is an IEEE Fellow, AAIA Fellow and titular full member of the National Academy of Technology of France and knight of the national order of merits of France.
\end{IEEEbiography}

\vfill

\end{document}